\documentclass[11pt,letterpaper]{article}
\usepackage[margin=1in]{geometry}
\usepackage[T1]{fontenc}
\usepackage{lmodern}
\usepackage{amsmath,amssymb,amsthm}
\usepackage{graphicx,booktabs,array,microtype}
\usepackage{enumitem,placeins,longtable,tikz,xcolor}
\usepackage[round,authoryear]{natbib}
\usepackage{hyperref}
\usetikzlibrary{arrows.meta,positioning}
\hypersetup{hidelinks,pdftitle={Finite-Sample Optimal Conformal Prediction Sets under Covariate Shift},pdfauthor={Zhongyang Li}}
\bibpunct{(}{)}{;}{a}{,}{,}
\newtheorem{theorem}{Theorem}
\newtheorem{lemma}[theorem]{Lemma}
\newtheorem{proposition}[theorem]{Proposition}
\newtheorem{corollary}[theorem]{Corollary}
\newtheorem{remark}[theorem]{Remark}
\newcommand{\E}{\mathbb E}
\newcommand{\Prob}{\mathbb P}
\newcommand{\1}{\mathbf 1}

\newcommand{\clip}[1]{\left[#1\right]_0^1}

\allowdisplaybreaks[1]

\setlist{topsep=4pt,itemsep=2pt,parsep=0pt}
\title{Finite-Sample Optimal Conformal Prediction Sets\\
under Covariate Shift}
\author{Zhongyang Li\\[3pt]
\small Department of Mathematics, University of Connecticut\\
\small 341 Mansfield Road, Unit 1009, Storrs, CT 06269-1009, USA\\
\small \href{mailto:lizhongyangmath@gmail.com}{\texttt{lizhongyangmath@gmail.com}}\\
\small \url{https://mathzhongyangli.wordpress.com/}}
\date{}
\begin{document}
\maketitle

\begin{abstract}
We study conformal prediction under covariate shift with a known
covariate density ratio, seeking prediction sets with minimum
expected length under the target distribution.
For every calibration sample size, we prove that the rule combining
the inverse conditional-density score with randomized weighted
conformal calibration minimizes this expected length among all
rules valid over the full weighted-exchangeability class.

We approximate the oracle using quantile and kernel histograms,
including forest-weighted estimators, and select estimators and
tuning parameters by resampling to estimate calibrated target length.
On a bounded response domain, we bound excess length linearly in
$L^1$ density error at fixed calibration size, without requiring
a uniform positive lower bound on the true density.
We also bound selection-pool and resampling errors.
Under explicit growth conditions, the selected procedure approaches
the same oracle while the calibration size remains fixed.

Experiments on 36 known-density configurations and four public
regression datasets compare coverage, prediction-set length, and
runtime with existing methods.
{
In targeted low-coverage and strong-shift comparisons, inverse-density
selection yields shorter mean sets in most cases and lower mean runtime
than the evaluated weighted baselines, with some differences in achieved
coverage.
}
\end{abstract}
\noindent\textbf{Keywords:}
conformal prediction, covariate shift, prediction sets,
finite-sample optimality, model selection
\par

\section{Introduction}\label{sec:intro}

Conformal prediction provides finite-sample marginal coverage under
exchangeability without a correctly specified prediction model
\citep{shafer2008tutorial,angelopoulos2023gentle}. Under covariate shift,
calibration and test laws $P,Q$ share the conditional response density
$f(y\mid x)$ but have different input distributions. Weighted calibration
retains target coverage when $w=dQ_X/dP_X$ is known
\citep{tibshirani2019shift}. We study which calibrated rule minimizes
target expected prediction-set length at a fixed calibration size.

Existing methods use quantile regression \citep{romano2019cqr},
conditional distributions or densities
\citep{chernozhukov2021dcp,izbicki2020density,izbicki2022hpd,wang2025density},
quantile histograms \citep{sesia2021histograms,luo2025cti},
length-oriented optimization \citep{kiyani2024length,lebars2025volume},
or unions of a prescribed number of intervals \citep{gao2025volume}.
Their coverage constraints and optimality benchmarks differ.
For finite-sample optimality, we build on
\citet*[Theorem~7]{tibshirani2026testing} and its precedents
\citep{hoff2023,koning2025fuzzy}.

Our main contributions are:
\begin{enumerate}[leftmargin=*]
\item \emph{Weighted finite-sample optimality.}
For every calibration size $m$, randomized weighted calibration with
score $1/f(y\mid x)$ minimizes target expected volume among all rules
valid over the full class of weighted permutation mixtures
(Theorem~\ref{wo:oracle}). Risk is evaluated under $P^m\times Q$;
validity is required over the full weighted class. The density ratio
enters calibration but cancels from the optimal score ordering.
Prediction sets may be disconnected.

\item \emph{Estimation and excess length.}
We bound excess length by score error and, on a finite-volume response
domain, by density error without a uniform positive density lower bound
or clipping (Section~\ref{sec:approximation}). For normalized estimates,
Proposition~\ref{prop:fixed-m-density} gives a linear $L^1$ bound at fixed
$m$ and an example showing that its error order cannot be improved
uniformly. Quantile and kernel histogram bounds, including forest weights
(Section~\ref{sec:algorithm}), yield fixed-$m$ consistency under the
conditions in Section~\ref{sec:forest-fixed-m-consistency}.

\item \emph{Selection by calibrated target length.}
Building on conformal predictor selection
\citep{yang2025selection,liang2026selection}, we analyze bootstrap
calibration from a labeled pool and length evaluation on an independent
input pool (Section~\ref{sec:pool-selection}). Separating approximation,
pool, and resampling errors gives consistency after selection toward the
same fixed-$m$ oracle under explicit growth conditions
(Corollary~\ref{cor:forest-selected-fixed-m}). Independent final
calibration retains exact target marginal coverage.
\end{enumerate}

{
Figure~\ref{fig:length-selection-workflow} summarizes inverse-density
prediction with length selection or negative log-likelihood (NLL)
selection.

\begin{figure}[!htbp]
\centering
\begin{tikzpicture}[
  node distance=5mm,
  workflow box/.style={
    draw=black, text=black, rounded corners=2pt,
    line width=0.6pt, align=center,
    text width=11.5cm, inner xsep=7pt, inner ysep=6pt,
    font=\small
  },
  selection box/.style={
    workflow box, text width=6.1cm, minimum height=26mm
  },
  workflow arrow/.style={
    -{Latex[length=2mm]}, draw=black, line width=0.6pt
  },
  workflow line/.style={draw=black, line width=0.6pt}
]
\node[workflow box] (fit) {
  \textbf{Fit candidate conditional densities}\\
  Quantile or kernel histograms\\
  {\footnotesize Forest weights (real data);
  equal within-group weights (simulations)}
};
\node[selection box, below=8mm of fit, xshift=-3.45cm] (length) {
  \textbf{Length selection}\\
  Temporarily calibrate each candidate.\\
  Estimate mean target prediction-set length.\\
  Select the smallest estimated mean length.
};
\node[selection box, below=8mm of fit, xshift=3.45cm] (nll) {
  \textbf{NLL selection}\\
  Compute weighted negative log-likelihood
  on selection data.\\
  Select the smallest weighted NLL.
};
\node[workflow box, below=8mm of length, xshift=3.45cm] (fix) {
  \textbf{Fix the selected density and score}\\
  Use the reciprocal of the selected density as the score
};
\node[workflow box, below=of fix] (calibrate) {
  \textbf{Apply independent final calibration}\\
  Construct the randomized weighted conformal prediction rule
};
\coordinate (split) at ([yshift=-4mm]fit.south);
\coordinate (join) at ([yshift=4mm]fix.north);
\draw[workflow line] (fit.south) -- (split);
\draw[workflow arrow] (split) -| (length.north);
\draw[workflow arrow] (split) -| (nll.north);
\draw[workflow line] (length.south) |- (join);
\draw[workflow line] (nll.south) |- (join);
\draw[workflow arrow] (join) -- (fix.north);
\draw[workflow arrow] (fix) -- (calibrate);
\end{tikzpicture}
\caption{{Alternative selection criteria for inverse-density
prediction. Length selection is the primary method; NLL selection is
the comparator. Each branch selects one candidate and proceeds
separately to final calibration. Training, selection, and final
calibration use independent data.
See Sections~\ref{sec:algorithm} and~\ref{sec:selection}.}}
\label{fig:length-selection-workflow}
\end{figure}
}

Simulations compare fitted rules with the finite-$m$ oracle in 36
known-density configurations. A further study removes the density floor,
fixes $m=200$, and increases training, selection pools, and bootstrap
repetitions. Four public datasets compare coverage, transformed-scale
length, and runtime; controlled comparisons separate score and
density-selection effects. Highest predictive density (HPD) scores
\citep{izbicki2022hpd} measure the estimated conditional probability mass
of responses with higher density than the candidate response.
In targeted comparisons, inverse-density selection yields shorter mean
sets than our weighted HPD implementation in most tested low-coverage
settings and than our weighted implementation of distributional conformal
prediction optimized for short intervals (DCP-opt)
\citep{chernozhukov2021dcp} in most tested strong-shift settings.
Measured mean runtime is lower throughout these comparisons
(Section~\ref{sec:dcp-shift10}; Appendix~\ref{app:hpd-low-coverage}).
These comparisons use common nominal coverage levels, with some
differences in achieved coverage.

Sections~\ref{sec:setting}--\ref{sec:selection} develop the framework and
theory; experiments and related work follow in
Sections~\ref{sec:experiments} and~\ref{sec:related}.

\section{Setting and weighted calibration}\label{sec:setting}

Let $Z=(X,Y)$ take values in a standard Borel space
$\mathcal X\times\mathcal Y$. The calibration sample
$T=(Z_1,\ldots,Z_m)$ is iid from $P$. Independently, the test pair has law
$Q$, where
\begin{equation}\label{eq:shift}
 dQ(x,y)=w(x)\,dP(x,y),\qquad 0\le w(x)<\infty,\qquad \E_Pw(X)=1.
\end{equation}
This is covariate shift: $Q_X$ can differ from $P_X$, while
$Q_{Y\mid X}=P_{Y\mid X}$. The known weight is the Radon--Nikodym
derivative $dQ_X/dP_X$. 

Write $N=m+1$ and $p=1-\alpha$, where $0<\alpha<1$. Let \(\mu\) be a \(\sigma\)-finite measure on \(\mathcal Y\). For \(Y\in\mathbb R\), we take \(\mu\) to be Lebesgue measure. Write \(\mathcal Z=\mathcal X\times\mathcal Y\), and let \(\mathcal B(\mathcal Y)\) denote the collection of Borel subsets of \(\mathcal Y\).

To allow randomization, we represent a prediction rule by a fixed set-valued map
\[
C:\mathcal Z^m\times\mathcal X\times[0,1]
\longrightarrow\mathcal B(\mathcal Y),
\]
whose membership indicator
\[
(t,x,u,y)\longmapsto\mathbf 1\{y\in C(t,x,u)\}
\]
is jointly measurable. Given a calibration sample \(T\) and a test input \(x\), the rule returns \(C(T,x,U)\), where \(U\sim\operatorname{Unif}[0,1]\) is independent of all data. Nonrandomized rules are included as the special case in which \(C(t,x,u)\) does not depend on \(u\).

For fixed $t,x,y$, we write
\[
\begin{aligned}
\mathbb P_U\{y\in C(t,x,U)\}
&:=\int_0^1 \mathbf 1\{y\in C(t,x,u)\}\,du,\\
\mathbb E_U[\mu(C(t,x,U))]
&:=\int_0^1 \mu(C(t,x,u))\,du.
\end{aligned}
\]
Thus the subscript $U$ indicates that only the auxiliary
random variable $U$ is integrated out; the data remain fixed.

The target risk of \(C\) is
\begin{align}
R_m(C)
=
\mathbb E_{T\sim P^m,\,X\sim Q_X,\,U}
\bigl[\mu(C(T,X,U))\bigr].\label{eq:risk}
\end{align} 
We condition on all {training}
and selection data until those steps are analyzed explicitly.

Fix a measurable finite score $s(x,y)$, with smaller values preferred.
Set $s_i=s(X_i,Y_i)$ and $w_i=w(X_i)$. The target inputs have $w(X)>0$
$Q_X$-almost surely. At a zero-weight input, define the output as
$\mathcal Y$; such inputs do not affect target risk or coverage.
For a candidate $(x,y)$ with $w(x)>0$, write
$W_x=\sum_{i=1}^m w_i+w(x)$ and
\begin{align}
 A_s(T,x,y)&=\sum_{i=1}^m w_i\1\{s_i>s(x,y)\},\label{eq:A}\\
 B_s(T,x,y)&=w(x)+\sum_{i=1}^m w_i\1\{s_i=s(x,y)\}.\label{eq:B}
\end{align}
The weighted randomized conformal rule is
\begin{equation}\label{eq:calibrated}
 C_s(T,x,U)=
 \left\{y:\frac{A_s(T,x,y)+U B_s(T,x,y)}{W_x}>\alpha\right\},
 \qquad U\sim\operatorname{Unif}[0,1].
\end{equation}
To construct the prediction set at a test input \(x\), draw \(U\sim\operatorname{Unif}[0,1]\) independently of the data and hold its value fixed as \(y\) varies over \(\mathcal Y\).

Its inclusion probability has the closed form
\begin{equation}\label{eq:inclusion}
 r_s(T,x;y)=\Prob_U\{y\in C_s(T,x,U)\}
 =\clip{\frac{pW_x-\sum_iw_i\1\{s_i<s(x,y)\}}
                   {w(x)+\sum_iw_i\1\{s_i=s(x,y)\}}},
\end{equation}
where $[a]_0^1=\min\{1,\max\{0,a\}\}$. Consequently,
\begin{equation}\label{eq:integrated-length}
 \E_U\mu(C_s(T,x,U))=\int_{\mathcal Y}r_s(T,x;y)\,\mu(dy).
\end{equation}

The guarantee below is marginal over calibration data, the target
observation, and $U$. It does not require $p$ coverage at every input
$x$, or conditional on one realized calibration sample.

\section{A weighted version of the finite-sample oracle theorem}\label{wo:section}

\subsection{Validity class and oracle theorem}

For an augmented sample $D=(z_1,\ldots,z_N)\in\mathcal Z^N$, with $z_i=(x_i,y_i)$,
let $w_i=w(x_i)$ and
\[
W(D)=\sum_{i=1}^N w_i.
\]
For each positive integer $k$, let $\mathcal S_k$ denote
the set of permutations of $\{1,\ldots,k\}$.
For $\sigma\in\mathcal S_N$, write
\[
D_\sigma=(z_{\sigma(1)},\ldots,z_{\sigma(N)}).
\] 
For $W(D)>0$ and any Borel set $A\subseteq\mathcal Z^N$, define
\begin{equation}\label{wo:orbit}
K_w(D,A)
=
\frac{1}{m!\,W(D)}
\sum_{\sigma\in\mathcal S_N}
w(x_{\sigma(N)})
\mathbf 1\{D_\sigma\in A\}.
\end{equation}
For each $D$ with $W(D)>0$, $K_w(D,\cdot)$ is a probability
measure on $\mathcal Z^N$.
Let $\mathcal H_w$ be the class of all mixtures of these measures:
\begin{equation}\label{wo:null-class}
\mathcal H_w
=
\left\{
\int_{\{W>0\}} K_w(D,\cdot)\,\nu(dD)
:
\nu\{W>0\}=1
\right\},
\end{equation}
where $\nu$ ranges over probability measures on $\mathcal Z^N$.

For a joint distribution $H$ on $\mathcal Z^N$, we say
that $C$ has coverage at least $p$ under $H$ if
\[
\mathbb P_{H,U}\{Y_N\in C(T,X_N,U)\}\ge p,
\]
where $(Z_1,\ldots,Z_N)\sim H$, $Z_i=(X_i,Y_i)$,
$T=(Z_1,\ldots,Z_m)$,
and $U\sim\operatorname{Unif}[0,1]$ is independent of
the sample. The probability is taken over both the
sample and $U$.

The iid covariate-shift model in Section~\ref{sec:setting} satisfies
$P^m\times Q\in\mathcal H_w$, as shown in
Lemma~\ref{lem:covariate-shift-inclusion} below.

Assume that the reference conditional law has a jointly measurable
density $f(y\mid x)=dP_{Y\mid X=x}/d\mu(y)$ that is positive and finite
$(P_X\otimes\mu)$-almost everywhere. On that set define
\[
 s^*(x,y)=1/f(y\mid x).
\]
Extend $s^*$ as any finite positive measurable function on the exceptional
set, for example by setting it equal to one. The score is then defined
on every augmented sample. Write $C^*=C_{s^*}$, with $C_s$ defined
in \eqref{eq:calibrated}. We call $C^*$ the oracle rule
because its score uses the true conditional density $f$.
Its target risk $R_m(C^*)$, defined in \eqref{eq:risk},
does not depend on the choice of extension.

\begin{theorem}[Weighted finite-sample oracle]\label{wo:oracle}
Among all measurable randomized prediction rules with coverage at least
$p$ under every $H\in\mathcal H_w$, the minimum target risk
\eqref{eq:risk} is attained by $C^*$:
\begin{equation}\label{wo:oracle-optimality}
 R_m(C^*)=
 \inf_{C:\,\inf_{H\in\mathcal H_w}\mathbb P_{H,U}
       \{Y_N\in C(T,X_N,U)\}\ge p} R_m(C).
\end{equation}
The result holds for every finite $m$ and every
$\sigma$-finite $\mu$, and the risks may equal $+\infty$. The oracle rule $C^*$ has exact coverage $p$
over $\mathcal H_w$.
\end{theorem}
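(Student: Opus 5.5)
The plan is to reduce both validity and risk to statements about a single orbit $K_w(D,\cdot)$, and then solve a fractional knapsack problem on each orbit. Validity comes first. Under $K_w(D,\cdot)$ the augmented sample is a reordering of the fixed multiset $D$, and the test point equals $z_j$ with probability $w_j/W(D)$. Given which point is the test point, the calibration points are a uniformly random ordering of the remaining $m$ points; this accounts for the factor $1/m!$. Since $C^*$ depends on $T$ only through the weighted empirical distribution of scores, its conditional inclusion probability is $r_{s^*}$ evaluated at the leave-one-out configuration. The coverage under $K_w(D,\cdot)$ is therefore $\sum_j (w_j/W)\, r_j$, where $r_j$ is the clipped formula \eqref{eq:inclusion}. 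The standard randomized-tie computation shows that this sum equals exactly $p$: sort the scores, group ties, and check that the weighted mass assigned across tie groups totals $pW$. Mixing over $\nu$ gives exact coverage $p$ on all of $\mathcal H_w$. We also need $P^m\times Q\in\mathcal H_w$, which is Lemma~\ref{lem:covariate-shift-inclusion}: take $\nu=P^N$ and use the change of measure $w(x_N)\,dP$.

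Next, the risk is rewritten as an integral of weighted inclusion probabilities. For an arbitrary rule $C$, set $\phi(t,x,y)=\Prob_U\{y\in C(t,x,U)\}$. By Fubini,
\[
R_m(C)=\int \phi(t,x,y)\,w(x)\,\mu(dy)\,P^m(dt)\,P_X(dx).
\]
Substituting $\mu(dy)=s^*(x,y)\,P_{Y\mid X=x}(dy)$ on the set where $f>0$ expresses this as $\E_{P^N}\bigl[w(X_N)\,s^*(Z_N)\,\phi(T,X_N,Y_N)\bigr]$. The region where $f=0$ needs separate handling. Points there are never observed, so any mass $C$ places there only adds to the risk; meanwhile $C^*$ may assign such points the large score $1/f=\infty$, or an extended value. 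We should check that the choice of extension does not matter, which holds because $C^*$ and competitors can be compared on the complement. Symmetrizing over the orbit of $D\sim P^N$ then gives
\[
R_m(C)=\E_{P^N}\Bigl[\tfrac{1}{N!}\textstyle\sum_\sigma w(x_{\sigma(N)})\, s^*(z_{\sigma(N)})\,\phi(D_\sigma)\Bigr].
\]
Validity under each $K_w(D,\cdot)$ reads $\tfrac{1}{m!\,W}\sum_\sigma w(x_{\sigma(N)})\,\phi(D_\sigma)\ge p$.

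The final step is the orbitwise Neyman--Pearson, or fractional knapsack, argument. On each orbit, minimize the cost $\sum_\sigma w_\sigma\, s^*_\sigma\,\phi_\sigma$ subject to $\sum_\sigma w_\sigma\,\phi_\sigma\ge pmW$ with $\phi_\sigma\in[0,1]$. The optimum fills $\phi$ in order of increasing $s^*$, with ties randomized fractionally. Up to the permutation symmetry, this is exactly the profile $r_{s^*}$ of $C^*$. Formally, take a threshold $\lambda$ equal to the critical score on the orbit, and note that pointwise
\[
(s^*_\sigma-\lambda)(\phi_\sigma-\phi^*_\sigma)\ge 0.
\]
Summing with weights $w_\sigma$ and using that $C^*$ is tight, $\sum_\sigma w_\sigma\phi^*_\sigma = pmW$, while $C$ satisfies the constraint, gives orbit cost$(C)\ge$ orbit cost$(C^*)$. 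Integrating over $D\sim P^N$ yields $R_m(C)\ge R_m(C^*)$, with infinite values allowed.

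I expect the main obstacle to be the bookkeeping, in two places. One is checking that the orbitwise optimizer coincides with $C^*$ as a function of $(t,x,y)$. This requires that $\phi^*$ at $D_\sigma$ depends only on the test point and the multiset of calibration points, which holds, and that the tie-breaking formula \eqref{eq:inclusion} matches the knapsack's fractional allocation, including orbits where points repeat. The other is measure-theoretic: the Fubini and orbit-symmetrization steps must be justified when the risks are infinite, and the null set where $f\in\{0,\infty\}$ must be handled. For the latter I would first verify that it is $P$-null and contributes nothing to the $C^*$ risk, and that it can only increase the risk of competitors.
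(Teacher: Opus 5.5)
Your proposal is correct and is essentially the paper's proof: the cancellation $s^*f=1$ in the risk identity, the bag constraint read off from each kernel $K_w(D,\cdot)$, and the threshold comparison at the critical score $c_D>0$ against the tight rule $C^*$; the only variation is that you keep all $N!$ orderings rather than first symmetrizing the competitor via Proposition~\ref{wo:equivalence}, which amounts to the same averaging done inline. Three small points to fix: the per-orbit budget is $p\,m!\,W$ rather than $pmW$; the mixture representing $P^m\times Q$ must use the size-biased $\nu_P(dD)=W(D)\,P^N(dD)/N$ rather than $P^N$ (though that inclusion is not needed for the optimality claim); and the exceptional set needs no special treatment, because the standing assumption makes it $(P_X\otimes\mu)$-null, not merely $P$-null, so it affects no risk.
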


\subsection{Proof of the oracle theorem}

We first establish three facts used in the proof.

\begin{lemma}[Inclusion of the iid covariate-shift model]
\label{lem:covariate-shift-inclusion}
Define a measure $\nu_P$ on $\mathcal Z^N$ by
\[
\nu_P(dD)=\frac{W(D)}{N}\,P^N(dD).
\]
Then $\nu_P$ is a probability measure satisfying
$\nu_P\{W>0\}=1$, and
\begin{equation}\label{wo:product-mixture}
P^m\times Q
=
\int_{\{W>0\}} K_w(D,\cdot)\,\nu_P(dD).
\end{equation}
In particular, $P^m\times Q\in\mathcal H_w$.
\end{lemma}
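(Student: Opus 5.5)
The plan is a direct computation with test functions. We verify \eqref{wo:product-mixture} by integrating an arbitrary bounded measurable $g$ on $\mathcal Z^N$ against both sides; by a monotone class argument it suffices to take $g$ an indicator $\1_A$.

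\textbf{Step 1: $\nu_P$ is a probability measure with $\nu_P\{W>0\}=1$.} Under $P^N$ the coordinates are iid with $X_i\sim P_X$. Hence
\[
\int W\,dP^N=\sum_{i=1}^N\E_Pw(X_i)=N
\]
by \eqref{eq:shift}, so $\nu_P$ has total mass one. It is nonnegative because $w\ge0$. Its density $W/N$ vanishes on $\{W=0\}$, so $\nu_P\{W=0\}=0$.

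\textbf{Step 2: compute the mixture.} On $\{W>0\}$ the factor $W(D)$ in $\nu_P$ cancels the normalization in \eqref{wo:orbit}. This gives
\[
\int_{\{W>0\}}K_w(D,A)\,\nu_P(dD)=\frac{1}{N\,m!}\sum_{\sigma\in\mathcal S_N}\int w(x_{\sigma(N)})\1\{D_\sigma\in A\}\,P^N(dD).
\]
Here the restriction to $\{W>0\}$ can be dropped. On $\{W=0\}$ every $w(x_i)=0$, so the integrand vanishes there.

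Now use that $P^N$ is invariant under coordinate permutations, i.e.\ $D_\sigma\sim P^N$ when $D\sim P^N$. Each of the $N!$ summands then equals
\[
\int w(x_N)\1\{D\in A\}\,P^N(dD).
\]
Since $N!/(N\,m!)=1$, the mixture reduces to $\int w(x_N)\1_A(D)\,P^N(dD)$.

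\textbf{Step 3: identify with $P^m\times Q$.} By \eqref{eq:shift}, $dQ(z_N)=w(x_N)\,dP(z_N)$. Thus the measure $w(x_N)\,P^N(dD)$ is exactly $P^m\times Q$, and \eqref{wo:product-mixture} follows. The membership $P^m\times Q\in\mathcal H_w$ is then immediate from the definition \eqref{wo:null-class}.

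\textbf{Main obstacle.} The only delicate point is bookkeeping, not a genuine difficulty. We need joint measurability of $D\mapsto K_w(D,A)$ so the mixture is well defined; this holds since it is a finite sum of measurable functions divided by a positive measurable $W$. We also need to handle the exceptional set $\{W=0\}$, which Step 2 does. Because everything is nonnegative, Tonelli justifies the interchange of sum and integral even when $w$ is unbounded.
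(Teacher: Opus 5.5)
Your proof is correct and follows the paper's argument: cancel $W$ against the kernel's normalization, drop the restriction to $\{W>0\}$ because the integrand vanishes on $\{W=0\}$, use permutation invariance of $P^N$ to make the $N!$ terms equal (with $N\cdot m!=N!$), and identify $w(x_N)\,P^N(dD)$ with $P^m\times Q$ via $dQ=w\,dP$. The monotone-class remark is unnecessary, since verifying the identity for every Borel set $A$ already establishes equality of the two measures.
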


\begin{proof}
Since $\mathbb E_Pw(X)=1$, we have
$\nu_P(\mathcal Z^N)=\mathbb E_{P^N}W/N=1$ and
$\nu_P\{W=0\}=0$. For any Borel set $A$, substitution gives
\[
\begin{aligned}
 \int_{\{W>0\}}K_w(D,A)\,\nu_P(dD)
 &=\frac1{N!}\sum_{\sigma\in\mathcal S_N}
   \int w(x_{\sigma(N)})\mathbf1\{D_\sigma\in A\}\,P^N(dD)\\
 &=\int_A w(x_N)\,P^N(dD)
 =(P^m\times Q)(A).
\end{aligned}
\]
The integrands vanish when $W=0$, justifying integration over all
$D$. Permutation invariance of $P^N$ makes all $N!$ terms equal,
and the last equality uses $dQ=w\,dP$.
\end{proof}

\begin{proposition}[Coverage over $\mathcal H_w$]
\label{wo:equivalence}
Let $C$ be a prediction rule as defined in
Section~\ref{sec:setting}
whose inclusion probabilities do not depend on the order of
the calibration sample:
\begin{equation}\label{wo:order-symmetry}
\mathbb P_U\{y\in C(t_\pi,x,U)\}
=
\mathbb P_U\{y\in C(t,x,U)\}
\end{equation}
for all $t\in\mathcal Z^m$, $x\in\mathcal X$,
$y\in\mathcal Y$, and $\pi\in\mathcal S_m$, where
$t_\pi=(z_{\pi(1)},\ldots,z_{\pi(m)})$ for
$t=(z_1,\ldots,z_m)$.

For $D=(z_1,\ldots,z_N)$, let $D_{-i}$ denote the sample
obtained by deleting $z_i=(x_i,y_i)$, and define
\[
e_i^C(D)=\mathbb P_U\{y_i\in C(D_{-i},x_i,U)\}.
\]
Then $C$ has coverage at least $p$ under every
$H\in\mathcal H_w$ if and only if
\begin{equation}\label{wo:bag-constraint}
\sum_{i=1}^N w_i e_i^C(D)\ge pW(D)
\qquad\text{for every }D\in\mathcal Z^N.
\end{equation}

For any prediction rule $C_0$, define
\[
 C_0^\circ(t,x;U,\Pi)=C_0(t_\Pi,x,U),
 \qquad \Pi\sim\operatorname{Unif}(\mathcal S_m),
\]
with $\Pi$ independent of the data and $U$. Its inclusion probability is
\[
 \frac1{m!}\sum_{\pi\in\mathcal S_m}
 \mathbb P_U\{y\in C_0(t_\pi,x,U)\},
\]
so it satisfies \eqref{wo:order-symmetry}. It has the same coverage as
$C_0$ under every $H\in\mathcal H_w$ and the same target risk
$R_m(C_0^\circ)=R_m(C_0)$, with its randomization averaged over both
$U$ and $\Pi$.
\end{proposition}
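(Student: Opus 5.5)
The plan is to compute the coverage of $C$ under a single orbit measure $K_w(D,\cdot)$ and then integrate against the mixing measure $\nu$. Fix $D=(z_1,\ldots,z_N)$ with $W(D)>0$. By the definition \eqref{wo:orbit}, under $K_w(D,\cdot)$ the sample $(Z_1,\ldots,Z_N)$ equals $D_\sigma$ with probability $w(x_{\sigma(N)})/(m!\,W(D))$.

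Group the permutations $\sigma\in\mathcal S_N$ by the index $i=\sigma(N)$ that lands in the test slot. For each fixed $i$ there are $m!$ such permutations, and for each of them the calibration part $(z_{\sigma(1)},\ldots,z_{\sigma(m)})$ is a reordering $(D_{-i})_\pi$ of $D_{-i}$. As $\sigma$ ranges over this group, $\pi$ ranges over all of $\mathcal S_m$. The order symmetry \eqref{wo:order-symmetry} then gives
\[
\mathbb P_{K_w(D,\cdot),U}\{Y_N\in C(T,X_N,U)\}
=\frac{1}{m!\,W(D)}\sum_{i=1}^N w_i\, m!\, e_i^C(D)
=\frac{1}{W(D)}\sum_{i=1}^N w_i e_i^C(D).
\]
Here the inner probability over $U$ is handled by Fubini, using the joint measurability of the membership indicator.

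Both directions of the equivalence follow from this orbit formula.
\begin{itemize}
\item \emph{If \eqref{wo:bag-constraint} holds:} every orbit coverage is at least $p$. Integrating against any $\nu$ with $\nu\{W>0\}=1$ shows that the coverage under every $H\in\mathcal H_w$ is at least $p$. This requires $D\mapsto e_i^C(D)$ to be measurable, which again follows from joint measurability and Fubini.
\item \emph{If coverage holds over $\mathcal H_w$:} take $\nu=\delta_D$ for any $D$ with $W(D)>0$. Then $K_w(D,\cdot)\in\mathcal H_w$, and its coverage is at least $p$, which is exactly \eqref{wo:bag-constraint} at $D$.
\item \emph{If $W(D)=0$:} all $w_i=0$, so both sides of \eqref{wo:bag-constraint} vanish and the inequality holds trivially.
\end{itemize}

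For the symmetrization $C_0^\circ$, the stated inclusion probability is immediate: average over $\Pi$ and use independence. Replacing $t$ by $t_\pi$ only relabels the average over $\mathcal S_m$, so \eqref{wo:order-symmetry} holds.

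For equal coverage, the key observation is that every $H\in\mathcal H_w$ is invariant in law under permutations of the first $m$ coordinates. Indeed, $K_w(D,\cdot)$ is invariant because the weight $w(x_{\sigma(N)})$ depends only on the last coordinate and composing $\sigma$ with $\pi\in\mathcal S_m$ is a bijection of $\mathcal S_N$. Hence for each fixed $\pi$, the pair $(T_\pi,Z_N)$ has the same law as $(T,Z_N)$, so the coverage of $C_0(T_\pi,X_N,U)$ equals that of $C_0$. Averaging over $\pi$ gives equal coverage.

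For equal risk, the same argument applies under $P^m\times Q_X$, which is exchangeable in the calibration coordinates. Thus $R_m$ evaluated with $T_\pi$ equals $R_m(C_0)$ for every $\pi$, and Tonelli allows infinite values.

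The main obstacle is only bookkeeping: checking the count of $m!$ permutations per test index together with the bijection onto $\mathcal S_m$, and verifying the measurability of $e_i^C$ needed to integrate against $\nu$.
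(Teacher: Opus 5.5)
Your proposal is correct and follows essentially the same route as the paper: the per-orbit identity $\mathbb P_{K_w(D,\cdot),U}\{Y_N\in C(T,X_N,U)\}=W(D)^{-1}\sum_i w_i e_i^C(D)$, obtained by grouping the $m!$ permutations with $\sigma(N)=i$, then using kernels as members of $\mathcal H_w$ and mixtures for the equivalence. The symmetrization step likewise rests on invariance of each $K_w(D,\cdot)$ under permutations fixing $N$ and on exchangeability of $P^m\times Q_X$ in the calibration coordinates. Your extra measurability remarks are consistent with the paper's argument, which leaves them implicit.
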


\begin{proof}
For $W(D)>0$, exactly $m!$ permutations in \eqref{wo:orbit}
put observation $i$ last. Calibration-order invariance therefore gives
\begin{equation}\label{wo:kernel-coverage}
 \mathbb P_{K_w(D,\cdot),U}
 \{Y_N\in C(T,X_N,U)\}
 =\frac{\sum_iw_i e_i^C(D)}{W(D)}.
\end{equation}
Each $K_w(D,\cdot)$ belongs to $\mathcal H_w$, and every member of
$\mathcal H_w$ is a mixture of these kernels. Thus coverage over the
class is equivalent to \eqref{wo:bag-constraint}. When $W(D)=0$,
all weights vanish and the constraint holds automatically.

Reordering $t$ only reindexes the average defining $C_0^\circ$.
To check that symmetrization preserves coverage, let
$\rho\in\mathcal S_N$ fix $N$. The change of index
$\sigma\mapsto\sigma\circ\rho$ in \eqref{wo:orbit} leaves both
$w(x_{\sigma(N)})$ and the kernel unchanged under this reordering.
Hence each kernel, and every mixture $H$, is invariant under
permutations of the first $m$ coordinates. Therefore
$(T_\pi,X_N,Y_N)$ and $(T,X_N,Y_N)$ have the same law under $H$,
which proves equality of coverage after averaging over $\Pi$.
Under $P^m\times Q_X$, $(T_\pi,X)$ and $(T,X)$ likewise have the
same law. Averaging $\mu(C_0(T_\pi,X,U))$ over $\Pi$ proves
$R_m(C_0^\circ)=R_m(C_0)$, including infinite risks.
\end{proof}

\begin{proposition}[Exact randomized weighted calibration]
\label{wo:exact}
Let $s:\mathcal X\times\mathcal Y\to\mathbb R$ be a fixed
measurable score function.
Let $C_s$ be defined by \eqref{eq:calibrated} when
$W_x>0$, and by $C_s(t,x,u)=\mathcal Y$ when $w_X=0$,
as specified in Section~\ref{sec:setting}.
Then $C_s$ satisfies
\eqref{wo:bag-constraint} with equality. Its coverage is
exactly $p$ under every $H\in\mathcal H_w$, and in particular
under $P^m\times Q$.
\end{proposition}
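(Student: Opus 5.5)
The plan is to verify \eqref{wo:bag-constraint} with equality pointwise in $D$, and then invoke Proposition~\ref{wo:equivalence}.

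\textbf{Applicability.} First, $C_s$ satisfies the order-symmetry condition \eqref{wo:order-symmetry}. The quantities $A_s$, $B_s$ and $W_x$ in \eqref{eq:A}--\eqref{eq:B} are sums over the calibration points, so they are invariant under permutations of $t$. Hence Proposition~\ref{wo:equivalence} applies. By \eqref{wo:kernel-coverage}, the coverage under $K_w(D,\cdot)$ equals $\sum_i w_i e_i(D)/W(D)$. So it suffices to show
\[
\sum_{i=1}^N w_i e_i(D)=pW(D)
\]
for every $D$ with $W(D)>0$. Coverage equal to $p$ under every mixture then follows by integrating against $\nu$, and the case $P^m\times Q$ follows from Lemma~\ref{lem:covariate-shift-inclusion}.

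\textbf{The pointwise identity.} Fix $D$ with $W=W(D)>0$ and scores $s_j=s(z_j)$. Points with $w_i=0$ contribute nothing to the sum, even though their output is $\mathcal Y$. For $i$ with $w_i>0$, deleting $z_i$ and using $x_i$ as the test input gives $W_{x_i}=W$. The inclusion formula \eqref{eq:inclusion} then gives
\[
e_i=\clip{\frac{pW-L_i}{E_i}},\qquad L_i=\sum_j w_j\1\{s_j<s_i\},\qquad E_i=\sum_j w_j\1\{s_j=s_i\}.
\]
Here $E_i$ includes $w_i$ itself and is positive. Note that $L_i$ and $E_i$ depend only on the level set of $s_i$.

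Group the indices into tie classes $G_1,\dots,G_k$ ordered by increasing score. Let $M_\ell$ be the total weight of $G_\ell$ and $L_\ell=\sum_{r<\ell}M_r$ the cumulative weight below it. Within a class with $M_\ell>0$, the weighted contribution is
\[
\sum_{i\in G_\ell} w_i e_i
= M_\ell\,\clip{\frac{pW-L_\ell}{M_\ell}}
=\min\bigl\{M_\ell,\max\{0,pW-L_\ell\}\bigr\}.
\]
This is exactly the length of $[L_\ell,L_\ell+M_\ell]\cap[0,pW]$. These intervals tile $[0,W]$, so summing over $\ell$ gives $pW$. Classes with $M_\ell=0$ contribute zero. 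This telescoping is the key step.

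\textbf{Expected obstacle.} The difficulty is bookkeeping rather than depth. Two points need care. First, zero-weight test points and zero-weight tie classes must be handled so that no division by zero arises; classes with $M_\ell=0$ are simply skipped. Second, when $D_{-i}$ is plugged into \eqref{eq:inclusion}, the term $w(x)$ in the denominator must merge with the calibration ties to give $E_i$, so that $W_{x_i}$ correctly equals $W(D)$. With this identity established, Proposition~\ref{wo:equivalence} yields exact coverage $p$ over all of $\mathcal H_w$.
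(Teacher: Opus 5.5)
Your proof is correct and follows essentially the paper's argument. You group the positive-weight points into score tie classes and note that each class contributes the length of its block $[L_\ell,L_\ell+M_\ell]$ intersected with the inclusion region. Because these blocks tile $[0,W]$, the total is $pW$, and order invariance then extends exact coverage to every $K_w(D,\cdot)$ and every mixture. The differences are cosmetic: you start from the closed-form clipped inclusion probability with scores in increasing order, whereas the paper integrates the randomized definition over $U$ with scores in decreasing order. The paper also records the trivial case $W(D)=0$, where both sides of the constraint vanish.
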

\begin{proof}
Fix $D=(z_1,\ldots,z_N)$ with $W=W(D)>0$, and write
$s_i=s(x_i,y_i)$.
List the distinct scores of the positive-weight observations
in decreasing order:
\[
\{a_1,\ldots,a_K\}
=
\{s_i:1\le i\le N,\ w_i>0\},
\qquad a_1>\cdots>a_K.
\]
Define
\[
b_k=\frac{1}{W}\sum_{i:s_i=a_k}w_i,
\qquad
q_k=\sum_{\ell<k}b_\ell.
\]
Here $b_k$ is the fraction of total weight at score $a_k$,
and $q_k$ is the fraction at larger scores.
Since $b_k>0$ and $\sum_k b_k=1$, the intervals
$[q_k,q_k+b_k]$ cover $[0,1]$ without overlapping interiors.

For $w_i>0$ and $s_i=a_k$, applying
\eqref{eq:calibrated} with calibration sample $D_{-i}$
and candidate $(x_i,y_i)$ gives
\[
e_i^{C_s}(D)
=
\mathbb P_U\{q_k+Ub_k>\alpha\}.
\]
Grouping the weighted sum by score, we obtain
\[
\begin{aligned}
\frac{1}{W}\sum_{i=1}^N w_i e_i^{C_s}(D)
&=
\sum_{k=1}^K b_k
\int_0^1 \mathbf 1\{q_k+ub_k>\alpha\}\,du\\
&=
\sum_{k=1}^K
\int_{q_k}^{q_k+b_k}\mathbf 1\{v>\alpha\}\,dv\\
&=
\int_0^1\mathbf 1\{v>\alpha\}\,dv
=
1-\alpha=p.
\end{aligned}
\]
Indices with $w_i=0$ contribute nothing.
If $W(D)=0$, then all $w_i=0$. By the zero-weight
convention in Section~\ref{sec:setting},
$e_i^{C_s}(D)=1$ for every $i$, so both sides of
\eqref{wo:bag-constraint} are zero.

Reordering the calibration sample does not change $C_s$.
Hence \eqref{wo:kernel-coverage} gives coverage exactly
$p$ under each $K_w(D,\cdot)$ and therefore under every
mixture $H\in\mathcal H_w$.
The conclusion for $P^m\times Q$ follows from
Lemma~\ref{lem:covariate-shift-inclusion}.
\end{proof}

\begin{proof}[Proof of Theorem~\ref{wo:oracle}]
By Proposition~\ref{wo:equivalence}, it suffices to consider
a rule $C$ satisfying \eqref{wo:order-symmetry} and
\eqref{wo:bag-constraint}.
Write
$e_i=e_i^C(D)$, $e_i^*=e_i^{C^*}(D)$, and
$s_i^*=s^*(X_i,Y_i)$.
Tonelli's theorem and permutation symmetry under $P^N$ give
\begin{equation}\label{wo:risk-identity}
 R_m(C)
 =\mathbb E_{P^N}[w_Ns_N^*e_N]
 =\frac1N\mathbb E_{P^N}\sum_{i=1}^Nw_is_i^*e_i.
\end{equation}
The first identity follows because $s^*=1/f$ cancels
the conditional response density. The same identity
holds for $C^*$.

Fix an augmented sample $D$, and write $W=\sum_iw_i$.
If $W=0$, both cost sums are zero.
Otherwise, let
\[
 c_D=\min\left\{
 t\in\{s_i^*:w_i>0\}:
 \sum_{j:s_j^*\le t}w_j\ge pW
 \right\}.
\]
By \eqref{eq:inclusion} and Proposition~\ref{wo:exact},
\[
 e_i^*=1 \quad\text{if }s_i^*<c_D,
 \qquad
 e_i^*=0 \quad\text{if }s_i^*>c_D
 \quad (w_i>0),
 \qquad
 \sum_iw_ie_i^*=pW.
\]
Since $\sum_iw_ie_i\ge pW$, we obtain
\[
\begin{aligned}
 \sum_iw_is_i^*(e_i-e_i^*)
 &=
 \sum_iw_i(s_i^*-c_D)(e_i-e_i^*)
 +c_D\left(\sum_iw_ie_i-pW\right)\\
 &\ge0.
\end{aligned}
\]
The first term is nonnegative: below the cutoff,
$e_i-e_i^*\le0$; above it, $e_i-e_i^*\ge0$;
at the cutoff, the factor $s_i^*-c_D$ is zero.
The second term is nonnegative because $c_D>0$
and $C$ satisfies the coverage constraint.

Thus $\sum_iw_is_i^*e_i^*\le\sum_iw_is_i^*e_i$
for every $D$. Integrating this inequality using
\eqref{wo:risk-identity} proves optimality, including
when the risks are infinite.
Exact coverage of $C^*$ follows from
Proposition~\ref{wo:exact}.
\end{proof}

{
This is the weighted version of the exchangeable finite-sample oracle;
see \citep{hoff2023,koning2025fuzzy,tibshirani2026testing} for preceding
formulations. At $w\equiv1$, the validity class reduces to ordinary
exchangeability. The weight $w(x)$ enters \eqref{eq:calibrated},
but the optimal score remains $1/f(y\mid x)$.
}

For a positive, finite density estimate $\widehat f$ fitted
independently of the calibration and test data, the rule with
score $1/\widehat f$ still has coverage exactly $p$ under every
$H\in\mathcal H_w$ by Proposition~\ref{wo:exact}.
However, its target risk need not equal $R_m(C^*)$.

\section{Approximating the weighted oracle}\label{sec:approximation}

\begin{theorem}[Integrated and uniform score-error bounds]
\label{wo:regret}
Suppose $R_m(C^*)<\infty$. For any fixed finite score $s$, define
\[
 d_1(s)=\mathbb E_{(X,Y)\sim Q}|s(X,Y)-s^*(X,Y)|,
 \qquad d_\infty(s)=\|s-s^*\|_\infty.
\]
Then
\begin{equation}\label{wo:regret-bounds}
 0\le R_m(C_s)-R_m(C^*)
 \le\min\{d_1(s),\ 2\min(p,\alpha)d_\infty(s)\}.
\end{equation}
The uniform norm can be replaced by an essential uniform norm under
$P_X\otimes\mu$. 
\end{theorem}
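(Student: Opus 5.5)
The lower bound $R_m(C_s)-R_m(C^*)\ge0$ is immediate from Theorem~\ref{wo:oracle}. By Proposition~\ref{wo:exact}, $C_s$ has coverage exactly $p$ under every $H\in\mathcal H_w$, so it is feasible for the infimum in \eqref{wo:oracle-optimality}.

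For the upper bound I would work pointwise in the augmented sample $D$, as in the proof of Theorem~\ref{wo:oracle}, and integrate only at the end. Both $C_s$ and $C^*$ are order-symmetric. The identity \eqref{wo:risk-identity} therefore applies to each, giving
\[
 R_m(C_s)-R_m(C^*)=\frac1N\mathbb E_{P^N}\sum_{i=1}^N w_i s_i^*\,(e_i^s-e_i^*),
\]
where $e_i^s=e_i^{C_s}(D)$ and $e_i^*=e_i^{C^*}(D)$. The hypothesis $R_m(C^*)<\infty$ is what allows this subtraction; if $R_m(C_s)=\infty$, the pointwise bound below still forces the right-hand side of \eqref{wo:regret-bounds} to be infinite. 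Fix $D$ with $W>0$ and split $s_i^*=s_i+(s_i^*-s_i)$. The key observation is that $C_s$ is itself the oracle for the cost $s$.

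\emph{Step 1.} I would rerun the cutoff argument of Theorem~\ref{wo:oracle} with $s$ in place of $s^*$. Let $c_D^s$ be the analogous weighted cutoff for the scores $s_i$. By \eqref{eq:inclusion}, $e_i^s=1$ below the cutoff and $e_i^s=0$ above it. By Proposition~\ref{wo:exact}, both $\sum_iw_ie_i^s$ and $\sum_iw_ie_i^*$ equal $pW$. Hence
\[
\sum_i w_is_i(e_i^s-e_i^*)=\sum_iw_i(s_i-c_D^s)(e_i^s-e_i^*)\le0.
\]
The term involving $c_D^s$ vanishes exactly because both rules saturate the constraint. This matters because $s$ need not be positive, so the positivity $c_D>0$ used in the theorem is not available, but here it is not needed.

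\emph{Step 2.} Step 1 leaves
\[
\sum_iw_is_i^*(e_i^s-e_i^*)\le\sum_iw_i|s_i^*-s_i|\,|e_i^s-e_i^*|.
\]
All sums are finite because $D$ is fixed and the scores are finite. I then bound the right side in two ways.
\begin{itemize}
\item Using $|e_i^s-e_i^*|\le1$ and integrating, $\frac1N\mathbb E_{P^N}\sum_iw_i|s_i^*-s_i|=\mathbb E_P[w|s-s^*|]=d_1(s)$, by symmetry and $dQ=w\,dP$.
\item Alternatively, bound $|s_i^*-s_i|\le d_\infty(s)$. Then $\sum_iw_i|e_i^s-e_i^*|$ is at most $\sum_iw_ie_i^s+\sum_iw_ie_i^*=2pW$. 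It is also at most $\sum_iw_i(1-e_i^s)+\sum_iw_i(1-e_i^*)=2\alpha W$. So it is bounded by $2\min(p,\alpha)W$, and $\frac1N\mathbb E_{P^N}W=1$ gives the term $2\min(p,\alpha)d_\infty(s)$.
\end{itemize}
Taking the smaller of the two bounds gives \eqref{wo:regret-bounds}. For the essential-norm version, note that the inequality $|s_i^*-s_i|\le d_\infty$ is needed only for the coordinates entering the integral against $P^N$. An exceptional $P_X\otimes\mu$-null set is $P$-null, because $P$ has density $f$ with respect to $P_X\otimes\mu$, so it does not affect the integral.

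The step I expect to need the most care is Step 1. One must check that the closed form \eqref{eq:inclusion} really gives the $0/1$ structure of $e_i^s$ around $c_D^s$ when $s$ has ties or negative values. One must also make sure the pointwise inequality is integrated only in a combination whose integrability is guaranteed: the pointwise bound is integrated directly, rather than splitting $\mathbb E\sum w_is_ie_i$, which may not be integrable.
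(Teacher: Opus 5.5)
Your proposal is correct and follows the paper's proof essentially step for step: the pointwise-in-$D$ decomposition $s_i^*=s_i+(s_i^*-s_i)$, the cutoff argument for $s$ in which the $c_D$ term cancels because both rules satisfy $\sum_iw_ie_i=pW$ (so no sign condition on $s$ is needed), and the two bounds on $\sum_iw_i|s_i-s_i^*|\,|e_i^s-e_i^*|$ via $|e_i^s-e_i^*|\le1$ and $\sum_iw_i|e_i^s-e_i^*|\le2\min(p,\alpha)W$, together with the density argument for the essential norm. Your departures are only cosmetic: you take nonnegativity from Theorem~\ref{wo:oracle} rather than from the pointwise inequality, and you state explicitly why the bound still holds when $R_m(C_s)=\infty$.
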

\begin{proof}
Fix $D=(z_1,\ldots,z_N)$, and write $W=W(D)$,
$s_i=s(x_i,y_i)$, $e_i^s=e_i^{C_s}(D)$, and
$e_i^*=e_i^{C^*}(D)$.
By Proposition~\ref{wo:exact},
\begin{equation}\label{wo:equal-weighted-sums}
\sum_i w_i e_i^s=\sum_i w_i e_i^*=pW.
\end{equation}

{
The ordering argument in the proof of Theorem~\ref{wo:oracle},
applied to each score with the equal weighted sums in
\eqref{wo:equal-weighted-sums}, gives
\[
 \sum_iw_is_i(e_i^s-e_i^*)\le0,
 \qquad
 \sum_iw_is_i^*(e_i^s-e_i^*)\ge0.
\]
The cutoff term vanishes because those sums agree, so the first
inequality also holds for scores of either sign.
}
Consequently,
\begin{equation}\label{wo:local-error}
\begin{aligned}
0
&\le \sum_i w_i s_i^*(e_i^s-e_i^*)\\
&\le \sum_i w_i(s_i^*-s_i)(e_i^s-e_i^*)\\
&\le \sum_i w_i|s_i-s_i^*|\,|e_i^s-e_i^*|.
\end{aligned}
\end{equation}
If $W=0$, all terms are zero.

Since $|e_i^s-e_i^*|\le1$, taking expectations in
\eqref{wo:local-error} and using \eqref{wo:risk-identity}
and $dQ=w\,dP$ gives
\[
\begin{aligned}
0\le R_m(C_s)-R_m(C^*)
&\le \frac1N\mathbb E_{P^N}
       \sum_i w_i|s_i-s_i^*|\\
&=\mathbb E_Q|s-s^*|
=d_1(s).
\end{aligned}
\]

Also, $0\le e_i^s,e_i^*\le1$ and
\eqref{wo:equal-weighted-sums} imply
\[
\begin{aligned}
\sum_i w_i|e_i^s-e_i^*|
&\le
\min\left\{
\sum_i w_i(e_i^s+e_i^*),
\sum_i w_i(2-e_i^s-e_i^*)
\right\}\\
&=2\min(p,\alpha)W.
\end{aligned}
\]
Using this bound in \eqref{wo:local-error}, together with
$|s_i-s_i^*|\le d_\infty(s)$ and
$\mathbb E_{P^N}W=N$, gives
\[
R_m(C_s)-R_m(C^*)
\le 2\min(p,\alpha)d_\infty(s).
\]

Finally, $P$ has density $f$ with respect to
$P_X\otimes\mu$. Thus a bound on $|s-s^*|$ holding
$(P_X\otimes\mu)$-almost everywhere holds at all $N$
observations $P^N$-almost surely.
The same argument therefore applies to the essential
uniform norm.
\end{proof}

For a $\mu$-integrable function $q:\mathcal Y\to\mathbb R$, define
\begin{equation}\label{eq:l1-norm}
 \|q\|_{L^1(\mu)}
 =\int_{\mathcal Y}|q(y)|\,d\mu(y).
\end{equation}

\begin{corollary}[A density-estimation bound]\label{wo:density-bound}
Let $\widehat f$ be a fixed, positive, finite measurable density estimate,
and use $s=1/\widehat f$ in \eqref{eq:calibrated}. When $R_m(C^*)<\infty$,
\begin{equation}\label{wo:relative-density-error}
 R_m(C_s)-R_m(C^*)
 \le \mathbb E_{Q_X}\int_{\mathcal Y}
              \frac{|\widehat f(y\mid x)-f(y\mid x)|}{\widehat f(y\mid x)}\,d\mu(y).
\end{equation}
If $\widehat f\ge a>0$, the right side is at most
\begin{equation}\label{wo:density-l1}
 \frac1a\mathbb E_{Q_X}
      \|\widehat f(\cdot\mid x)-f(\cdot\mid x)\|_{L^1(\mu)}.
\end{equation}
\end{corollary}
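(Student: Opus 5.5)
This follows directly from Theorem~\ref{wo:regret}, specifically its integrated bound $R_m(C_s)-R_m(C^*)\le d_1(s)$, applied with $s=1/\widehat f$. Since $\widehat f$ is positive and finite, $s$ is a finite measurable score. Proposition~\ref{wo:exact} and Theorem~\ref{wo:regret} therefore apply to $C_s$, and the hypothesis $R_m(C^*)<\infty$ is the one that theorem requires. It remains to rewrite $d_1(s)$ in terms of densities.

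To do so, use $dQ(x,y)=w(x)\,dP(x,y)$ together with the covariate-shift structure $Q_{Y\mid X}=P_{Y\mid X}$, which has density $f(\cdot\mid x)$ with respect to $\mu$. Disintegrating $Q$ as $Q_X(dx)\,f(y\mid x)\,\mu(dy)$ gives
\[
 d_1(s)=\mathbb E_{Q_X}\int_{\mathcal Y}
 \left|\frac{1}{\widehat f(y\mid x)}-\frac{1}{f(y\mid x)}\right| f(y\mid x)\,\mu(dy).
\]
This step needs care on the exceptional set where $f$ is zero or infinite. There $s^*$ was extended arbitrarily, but that set is $(P_X\otimes\mu)$-null. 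It is therefore null under $Q_X\otimes\mu$ as well, because $Q_X\ll P_X$, so it contributes nothing to either side. The integrand of $d_1$ is nonnegative, so Tonelli justifies the disintegration without any integrability assumption.

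On the set where $0<f<\infty$, the integrand simplifies algebraically:
\[
 \left|\frac{1}{\widehat f}-\frac1f\right| f=\frac{|f-\widehat f|}{\widehat f}.
\]
Substituting this into the expression for $d_1(s)$ gives \eqref{wo:relative-density-error}. For the second claim, $\widehat f\ge a>0$ gives the pointwise bound $|\widehat f-f|/\widehat f\le |\widehat f-f|/a$. Integrating over $\mu$ and then over $Q_X$ yields \eqref{wo:density-l1}, with the norm as defined in \eqref{eq:l1-norm}; if the $L^1$ norm is infinite the bound holds trivially.

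The only delicate point is the measure-theoretic bookkeeping in the disintegration step: null sets, and possibly infinite values handled through Tonelli. Once that is settled, the corollary is a one-line consequence of Theorem~\ref{wo:regret}.
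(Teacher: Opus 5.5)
Your proposal is correct and follows essentially the same route as the paper: apply the integrated bound $d_1(s)$ of Theorem~\ref{wo:regret} with $s=1/\widehat f$, use that $Q$ has conditional density $f$ to rewrite $d_1(s)$ as $\mathbb E_{Q_X}\int|\widehat f-f|/\widehat f\,d\mu$, and then use $\widehat f\ge a$. Your treatment of the exceptional null set and the appeal to Tonelli only make explicit what the paper's two-line proof leaves implicit.
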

\begin{proof}
Because the conditional density under $Q$ is also $f$,
\[
 \mathbb E_Q\left|\frac1{\widehat f}-\frac1f\right|
 =\mathbb E_{Q_X}\int
       \frac{|\widehat f-f|}{\widehat f}\,d\mu.
\]
Apply Theorem~\ref{wo:regret}; the lower bound on $\widehat f$ gives
\eqref{wo:density-l1}.
\end{proof}

{
\begin{corollary}[Density ordering and oracle-length consistency]
\label{wo:density-ordering}
Assume, as in Theorem~\ref{wo:oracle}, that $f$ is positive and
finite $(P_X\otimes\mu)$-almost everywhere, and suppose
$V=\mu(\mathcal Y)<\infty$.
Let $\widetilde f\ge0$ be a fixed finite measurable density estimate and use
$s=-\widetilde f$ in \eqref{eq:calibrated}. If $\widetilde f>0$,
the score $1/\widetilde f$ gives exactly the same prediction rule.
Define
\[
 e=\mathbb E_{Q_X}
       \|\widetilde f(\cdot\mid X)-f(\cdot\mid X)\|_{L^1(\mu)},
\]
and, for $\tau>0$, define
\begin{equation}\label{eq:density-tail}
 \mathcal T_f(\tau)
 =\mathbb E_{Q_X}\int_{\mathcal Y}
       \left(1-\frac{f(y\mid X)}{\tau}\right)_+\,d\mu(y),
 \qquad u_+=\max\{u,0\}.
\end{equation}
For every calibration size $m\ge1$,
\begin{equation}\label{wo:density-order-bound}
 0\le R_m(C_s)-R_m(C^*)
 \le\inf_{\tau>0}\left\{\frac e\tau+\mathcal T_f(\tau)\right\}.
\end{equation}
The parameter $\tau$ is used only in the bound; the density estimate
is not clipped. In particular, if the true density satisfies
$f\ge c>0$ almost everywhere, then
\begin{equation}\label{wo:density-positive-lower}
 R_m(C_s)-R_m(C^*)\le e/c,
\end{equation}
without a positive lower bound on $\widetilde f$.

Now fix $P,Q,w,p$, and the response scale. For estimates
$\widetilde f_n$ trained independently of calibration and test data,
put $s_n=-\widetilde f_n$ and
\begin{equation}\label{eq:density-mean-error}
 e_n=\mathbb E_{Q_X}
       \|\widetilde f_n(\cdot\mid X)-f(\cdot\mid X)\|_{L^1(\mu)},
 \qquad \bar e_n=\mathbb E_{\rm train}e_n.
\end{equation}
Each risk below is evaluated conditional on training, and each oracle
uses the same calibration size as its fitted comparator. Then
\begin{equation}\label{wo:density-mean-bound}
 \sup_{m\ge1}\mathbb E_{\rm train}
       [R_m(C_{s_n})-R_m(C^*)]
 \le\inf_{\tau>0}
       \left\{\frac{\bar e_n}{\tau}+\mathcal T_f(\tau)\right\}.
\end{equation}
If $\bar e_n\to0$, the right side tends to zero.
If instead $e_n\to0$ in probability over training, then
$\sup_{m\ge1}[R_m(C_{s_n})-R_m(C^*)]\to0$ in probability
and in training expectation.
Conditional on training, each rule has coverage exactly $p$
under every $H\in\mathcal H_w$.
\end{corollary}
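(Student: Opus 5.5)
The plan is to reduce everything to the pointwise (fixed augmented sample $D$) ordering inequalities behind Theorem~\ref{wo:oracle} and Theorem~\ref{wo:regret}, but applied to a \emph{bounded, nonincreasing} transformation of the density rather than to $1/f$ itself. First I dispose of the easy claims. Since $t\mapsto 1/t$ is strictly decreasing on $(0,\infty)$, the scores $-\widetilde f$ and $1/\widetilde f$ induce the same strict ordering and the same ties. Hence $A_s$, $B_s$ and therefore $C_s$ in \eqref{eq:calibrated} coincide when $\widetilde f>0$. For the same reason $C^*=C_{-f}$. Exact coverage conditional on training is Proposition~\ref{wo:exact}. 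Also $R_m(C^*)\le R_m(C_s)\le V<\infty$, and the lower bound $0$ is Theorem~\ref{wo:oracle}.

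\textbf{Key step (pointwise decomposition).} Fix $\tau>0$ and split
\[
 \frac1f=a_\tau(f)+b_\tau(f),\qquad a_\tau(f)=\frac1f\Bigl(1-\frac f\tau\Bigr)_+,\qquad b_\tau(f)=\frac1{\max(f,\tau)}.
\]
This is checked separately on $f<\tau$ and $f\ge\tau$. For fixed $D$ with $W>0$, write $e_i^s,e_i^*$ as in the proof of Theorem~\ref{wo:regret}. The $a_\tau$ part is handled crudely: $\sum_iw_ia_\tau(f_i)(e_i^s-e_i^*)\le\sum_iw_ia_\tau(f_i)$. For the $b_\tau$ part, I use that the cutoff argument in the proof of Theorem~\ref{wo:oracle} only needs the cost to be a \emph{nonincreasing} function of the rule's own ordering variable, together with equal weighted sums \eqref{wo:equal-weighted-sums}. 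Units tied at the cutoff contribute zero, and ties created by flattening are harmless because the cost is constant across them. Since $C^*$ orders by $-f$ and $b_\tau$ is nonincreasing in $f$,
\[
\sum_iw_ib_\tau(f_i)(e_i^s-e_i^*)\ge0 .
\]
Since $C_s$ orders by $-\widetilde f$,
\[
\sum_iw_ib_\tau(\widetilde f_i)(e_i^s-e_i^*)\le0 .
\]
Subtracting, as in \eqref{wo:local-error}, gives
\[
\sum_iw_ib_\tau(f_i)(e_i^s-e_i^*)\le\sum_iw_i|b_\tau(f_i)-b_\tau(\widetilde f_i)| .
\]
The elementary inequality
\[
|b_\tau(f)-b_\tau(\widetilde f)|\le\frac{|f-\widetilde f|}{\max(f,\tau)\max(\widetilde f,\tau)}\le\frac{|f-\widetilde f|}{\tau f}
\]
holds even when $\widetilde f=0$. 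This is where no lower bound on $\widetilde f$ and no clipping is needed.

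\textbf{Integration.} Sum the two parts and take $P^N$ expectations. By the risk identity \eqref{wo:risk-identity} and exchangeability, $\frac1N\mathbb E_{P^N}\sum_iw_ig(Z_i)=\mathbb E_Qg=\mathbb E_{Q_X}\int g\,f\,d\mu$. Applied to $g=a_\tau(f)$ this gives $\mathcal T_f(\tau)$. Applied to $g=|f-\widetilde f|/(\tau f)$ it gives $e/\tau$; the factor $f$ cancels exactly. This yields \eqref{wo:density-order-bound} after taking the infimum over $\tau$. The case $f\ge c$ follows by taking $\tau=c$, for which $\mathcal T_f(c)=0$.

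\textbf{Sequences.} For \eqref{wo:density-mean-bound}, the bound holds for each $m$ with an $m$-free right side. For fixed $\tau$ it is affine in $e_n$, so I take training expectations first and then the infimum ($\mathbb E\inf\le\inf\mathbb E$). Next I show $\mathcal T_f(\tau)\to0$ as $\tau\downarrow0$, by dominated convergence with dominating function $1$, using $V<\infty$ and $f>0$ a.e. Together these give convergence when $\bar e_n\to0$, by first choosing $\tau$ small and then $n$ large. If only $e_n\to0$ in probability, the same choice gives convergence in probability of the $m$-uniform bound. The uniform bound $0\le R_m(C_{s_n})-R_m(C^*)\le V$ then upgrades this to convergence in training expectation by bounded convergence.

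\textbf{Main obstacle.} I expect the main obstacle to be the careful justification that the ordering inequality survives the flattening in $b_\tau$. The issue is ties in $b_\tau(f_i)$ among units straddling the cutoff of $C^*$, and analogously for $b_\tau(\widetilde f_i)$ and $C_s$. My plan is to rewrite the sum with cutoff $c=b_\tau(\text{cutoff value})$ and check that every term has the correct sign, since $e^*$ is $1$ strictly above the density cutoff and $0$ strictly below it.
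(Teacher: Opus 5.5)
Your proof is correct and reaches the same bound, but by a different mechanism from the paper's.

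\textbf{The paper's route.} The paper never reopens the pointwise ordering argument. It introduces the strictly decreasing surrogate
$\psi_{\tau,\eta}(v)=1/v$ for $v\ge\tau$ and $\psi_{\tau,\eta}(v)=1/\tau+\eta(\tau-v)/\tau^2$ for $0\le v<\tau$.
Strict monotonicity gives $C_{\psi_{\tau,\eta}(\widetilde f)}=C_{-\widetilde f}$ exactly, with the same comparisons and ties. The paper then applies the $d_1$ bound of Theorem~\ref{wo:regret} to this score as a black box. It bounds $\mathbb E_Q|\psi_{\tau,\eta}(\widetilde f)-1/f|\le e/\tau+\mathcal T_f(\tau)+\eta/\tau$ using $|\max\{\widetilde f,\tau\}-f|\le|\widetilde f-f|+(\tau-f)_+$, and lets $\eta\downarrow0$ only in the bound. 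The $\eta$ is needed because the flattened score $1/\max\{\widetilde f,\tau\}$ would merge ties below $\tau$ and change the delivered rule.

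\textbf{Your route.} You avoid that device. You keep $-\widetilde f$ as the rule's score and flatten only the \emph{cost} in the comparison. This is legitimate because the cutoff argument needs only weak monotonicity of the cost in each rule's own ordering, together with $\sum_iw_ie_i^s=\sum_iw_ie_i^*=pW$. Your split $1/f=a_\tau(f)+b_\tau(f)$ then produces the tail term $\mathcal T_f(\tau)$ from the crude bound on $a_\tau$. The paper gets it instead from the elementary inequality above.

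\textbf{Comparison.} The paper's approach is economical, since Theorem~\ref{wo:regret} is reused verbatim. Yours explains structurally why no clipping is needed, and it isolates a reusable fact: any cost that is weakly monotone in a rule's own ordering can be used. The price is re-checking tie handling, which works as you anticipated. Every sign check is a weak inequality, and units with equal flattened cost contribute zero after centering at the cutoff. The sequence and coverage parts match the paper.

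\textbf{Minor imprecision.} The identity $C^*=C_{-f}$ holds only off the null exceptional set where $s^*$ is an arbitrary extension. This is harmless: the $N$ sample points almost surely have $f\in(0,\infty)$, and the query points where the two rules differ form a $\mu$-null set for $Q_X$-almost every input.
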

\begin{proof}
Finite response volume gives
\begin{equation}\label{eq:finite-volume-risk}
 R_m(C^*)\le\mu(\mathcal Y)=V<\infty.
\end{equation}
Fix $\tau>0$ and $\eta>0$. The function
\[
 \psi_{\tau,\eta}(v)=
 \begin{cases}
   1/v,&v\ge\tau,\\[1mm]
   1/\tau+\eta(\tau-v)/\tau^2,&0\le v<\tau
 \end{cases}
\]
is finite and strictly decreasing on $[0,\infty)$.
Thus $\psi_{\tau,\eta}(\widetilde f)$ and $-\widetilde f$
preserve exactly the same comparisons and ties, and yield the same
rule in \eqref{eq:calibrated} for every calibration sample and $U$.
Write $f=f(y\mid x)$ at a fixed point. The inequality
\[
 \frac{|\max\{\widetilde f,\tau\}-f|}
      {\max\{\widetilde f,\tau\}}
 \le\frac{|\widetilde f-f|}{\tau}
       +\left(1-\frac f\tau\right)_+
\]
and the identity
\[
 \psi_{\tau,\eta}(\widetilde f)
 =\frac1{\max\{\widetilde f,\tau\}}
       +\frac{\eta}{\tau^2}(\tau-\widetilde f)_+
\]
give, because $\int f(y\mid x)\,d\mu(y)=1$,
\[
 \mathbb E_Q\left|
       \psi_{\tau,\eta}(\widetilde f)-\frac1f\right|
 \le \frac e\tau+\mathcal T_f(\tau)+\frac\eta\tau.
\]
Apply Theorem~\ref{wo:regret} and let $\eta\downarrow0$
only in this upper bound. The prediction rule is unchanged for
every $\eta>0$, so this limit does not merge score ties.
Taking the infimum over $\tau$ proves
\eqref{wo:density-order-bound}. If $f\ge c$, choose $\tau=c$;
then $\mathcal T_f(c)=0$, proving
\eqref{wo:density-positive-lower}.

For every fixed $\tau>0$, average the same bound over training.
It is independent of $m$, which proves
\eqref{wo:density-mean-bound}.
Since $f>0$ $(Q_X\otimes\mu)$-almost everywhere,
$\mathcal T_f(\tau)\to0$ as $\tau\downarrow0$ by dominated
convergence on the finite-volume response domain.
For fixed $\tau$, let $n\to\infty$, and then let $\tau\downarrow0$.
This proves the mean convergence when $\bar e_n\to0$.
The deterministic bound, with the same two limits, proves convergence
in probability when $e_n\to0$ in probability. The supremum risk gap
lies in $[0,V]$, so its convergence in probability also implies
convergence in expectation. Exact coverage follows from
Proposition~\ref{wo:exact}.
\end{proof}
}

{
\begin{proposition}[Linear density-error bound at fixed calibration size]
\label{prop:fixed-m-density}
Under the assumptions and notation of
Corollary~\ref{wo:density-ordering}, suppose $0<V<\infty$ and
the estimate is normalized:
\[
 \int_{\mathcal Y}\widetilde f(y\mid x)\,d\mu(y)=1
 \qquad\text{for }P_X\text{-almost every }x.
\]
For every $m\ge1$, the rule with score $s=-\widetilde f$ satisfies
\begin{equation}\label{eq:fixed-m-density-bound}
 0\le R_m(C_s)-R_m(C^*)
 \le\min\left\{V,\ mVe,\
       \inf_{\tau>0}\left(\frac e\tau+\mathcal T_f(\tau)\right)\right\}.
\end{equation}
For independently trained estimates $\widetilde f_n$ that are
normalized almost surely, use $s_n$ and $\bar e_n$ from
\eqref{eq:density-mean-error}. Then
\begin{equation}\label{eq:fixed-m-density-mean}
\begin{aligned}
 0&\le\mathbb E_{\rm train}[R_m(C_{s_n})-R_m(C^*)]\\
  &\le\min\left\{V,\ mV\bar e_n,\
       \inf_{\tau>0}\left(\frac{\bar e_n}{\tau}
                              +\mathcal T_f(\tau)\right)\right\}.
\end{aligned}
\end{equation}
Thus, for fixed $m$ and a deterministic sequence $a_n\downarrow0$,
a density-error rate $\bar e_n=O(a_n)$
gives an excess-length rate $O(a_n)$ without a uniform positive
density lower bound. No upper or positive lower bound on $w$ is
required beyond \eqref{eq:shift}.
\end{proposition}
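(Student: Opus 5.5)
The plan is to prove the three terms in the minimum separately and then average over training.

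\emph{First and third terms.} The third term is Corollary~\ref{wo:density-ordering} verbatim. Normalization is not needed for it.

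For the first term, I would use $0\le\mu(C_s(T,X,U))\le\mu(\mathcal Y)=V$ pointwise. This gives $R_m(C_s)\le V$. The lower bound $0\le R_m(C_s)-R_m(C^*)$ is Theorem~\ref{wo:oracle}, since $C_s$ is valid over $\mathcal H_w$ by Proposition~\ref{wo:exact}. Hence the gap lies in $[0,V]$.

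\emph{Middle term $mVe$: change of measure.} This is the new content, and normalization of $\widetilde f$ enters here. I would define a reference law $\widetilde P$ by keeping $P_X$ and replacing the conditional density by $\widetilde f(\cdot\mid x)$; normalization makes this a probability measure. Set $d\widetilde Q=w\,d\widetilde P$, and let $\widetilde R_m$ be the risk \eqref{eq:risk} with $P^m$ replaced by $\widetilde P^m$. The test marginal is still $Q_X$, and the risk never uses the test response.

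The class $\mathcal H_w$ depends only on $w$, not on $f$. So Theorem~\ref{wo:oracle} applied to $\widetilde P$ shows that the $\widetilde P$-oracle minimizes $\widetilde R_m$ over all $\mathcal H_w$-valid rules. That oracle has score $1/\widetilde f$, or equivalently $-\widetilde f$, with the monotone reparametrization used in the proof of Corollary~\ref{wo:density-ordering} handling points where $\widetilde f=0$. Since $C^*$ is $\mathcal H_w$-valid, $\widetilde R_m(C_s)\le\widetilde R_m(C^*)$. Therefore
\[
 R_m(C_s)-R_m(C^*)\le
 \bigl[R_m(C_s)-\widetilde R_m(C_s)\bigr]
 +\bigl[\widetilde R_m(C^*)-R_m(C^*)\bigr].
\]
For each rule, the risk is $\E$ of a $[0,V]$-valued function of $(T,X,U)$. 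Each bracket is therefore at most $V\,\|P^m-\widetilde P^m\|_{\rm TV}\le mV\,\|P-\widetilde P\|_{\rm TV}$, using subadditivity of total variation over product coordinates.

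\emph{Main obstacle: $P_X$ versus $Q_X$ weighting.} Here $\|P-\widetilde P\|_{\rm TV}=\tfrac12\E_{P_X}\|\widetilde f-f\|_{L^1(\mu)}$. The two brackets together then give $mV\,\E_{P_X}\|\widetilde f-f\|_{L^1(\mu)}$, but $e$ is weighted by $Q_X$, and no bound on $w$ is allowed.

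To repair this, I would avoid the crude total-variation step and use the risk identity \eqref{wo:risk-identity} in its symmetric form $\frac1N\E_{P^N}\sum_iw_is_i^*e_i$. I would then change the conditional law one coordinate at a time along a hybrid path, so that each swap is charged to a coordinate that carries the weight $w_i$. Concretely, I would work with the mixing measure $\nu_P=(W/N)P^N$ from Lemma~\ref{lem:covariate-shift-inclusion}, under which the weighted coordinate has law $Q$. The aim is for each of the $m$ swaps to cost $V$ times a $Q_X$-weighted $L^1$ error, with the factor $\tfrac12$ absorbing the two brackets.

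If this bookkeeping fails, the fallback is to state the middle term with the $P_X$-weighted error. I expect this step to be the crux of the proof.

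\emph{Training average.} The bound \eqref{eq:fixed-m-density-mean} follows by conditioning on training and taking $\E_{\rm train}$. The map $e\mapsto\min\{V,mVe,\inf_\tau(e/\tau+\mathcal T_f(\tau))\}$ is concave in $e$, as a minimum of affine functions. Jensen's inequality then lets me replace $e_n$ by $\bar e_n$. The rate statement is immediate from the $mV\bar e_n$ term.
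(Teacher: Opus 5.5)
Your treatment of the terms $V$ and $\inf_{\tau>0}\{e/\tau+\mathcal T_f(\tau)\}$ is fine. So is your add-and-subtract comparison of the two oracles under the law $\widetilde P$ with the same input marginal $P_X$, which is exactly the paper's decomposition, and so is the concavity/Jensen step for \eqref{eq:fixed-m-density-mean}. The gap is the middle term, which is the real content of the proposition.

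Your total-variation step proves only $mVe_P$, with $e_P=\mathbb E_{P_X}\|\widetilde f(\cdot\mid X)-f(\cdot\mid X)\|_{L^1(\mu)}$. No bound on $w$ is permitted, so $e_P$ can be arbitrarily larger than $e=\mathbb E_{Q_X}\|\widetilde f(\cdot\mid X)-f(\cdot\mid X)\|_{L^1(\mu)}$: place the density error where $w$ is nearly zero. Your fallback therefore does not prove the stated bound. The repair you sketch is not carried out, and its vehicle is unsuitable. The identity \eqref{wo:risk-identity} carries the unbounded factor $s^*=1/f$ and is tied to the true density. Swapping conditional laws inside it does not give per-swap costs of the form $V$ times a density error.

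What is missing is a deterministic sensitivity estimate for the weighted rule, applied directly to the length functional $\ell_s(T,x)=\int r_s(T,x;y)\,\mu(dy)$. Fix any score, the calibration inputs, and a query input $x$ with $w(x)>0$, and change only the response $Y_i$. The inclusion probability \eqref{eq:inclusion} then moves by at most $\min\{1,w(X_i)/w(x)\}$ at every $y$. Indeed, the three cases $s_i<s(x,y)$, $s_i=s(x,y)$, $s_i>s(x,y)$ give $[(z-u)/B]_0^1\le[z/(B+u)]_0^1\le[z/B]_0^1$, where $u=w(X_i)$, and $B\ge w(x)$ and $z$ do not depend on $Y_i$. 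Hence $\ell_s$ changes by at most $V\min\{1,w(X_i)/w(x)\}\mathbf 1\{Y_i\ne Y_i'\}$.

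Couple $P^m$ and $\widetilde P^m$ by sharing the inputs and maximally coupling the responses given each input. Then $\mathbb P(Y_i\ne Y_i'\mid X_i)=\tfrac12 g(X_i)$, with $g(x)=\|\widetilde f(\cdot\mid x)-f(\cdot\mid x)\|_{L^1(\mu)}$. Replace the responses one at a time. Integrating over the independent test input gives $\mathbb E_{X\sim Q_X}\min\{1,u/w(X)\}=\mathbb E_{P_X}\min\{w(X),u\}\le\min\{1,u\}\le u$. So each of the $m$ swaps in each of your two brackets costs at most $\tfrac V2\mathbb E_{P_X}[w(X)g(X)]=\tfrac V2 e$. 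This is where the calibration weight converts $P_X$-weighting into $Q_X$-weighting, and it yields $mVe$ (indeed $mV\min\{e_P,e\}$).

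A smaller point concerns Theorem~\ref{wo:oracle} applied to $\widetilde P$, which requires $\widetilde f>0$ almost everywhere. The reparametrization from Corollary~\ref{wo:density-ordering} alters only the score, not this hypothesis on the law. Instead, apply the positive case to $\widetilde f_\epsilon=(1-\epsilon)\widetilde f+\epsilon/V$. This density is positive and normalized (using $V<\infty$), produces the identical rule, and satisfies $\|\widetilde f_\epsilon(\cdot\mid x)-\widetilde f(\cdot\mid x)\|_{L^1(\mu)}\le2\epsilon$. Then let $\epsilon\downarrow0$.
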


{
Appendix~\ref{app:fixed-m-density-proof} gives the proof, a sharper
weighted bound, and examples showing that neither bound dominates:
$mVe$ is linear in density error at fixed $m$, whereas the tail bound
is uniform in $m$. Appendix~\ref{app:density-linear-sharpness} gives
smooth normalized density pairs with $m=1$, $p=1/2$, and excess
length $e/3$, ruling out
a uniform $o(e)$ bound. This establishes the error order, not optimal
dependence on $m$ or a statistical minimax rate.
}
}

\section{From the oracle to a fitted procedure}\label{sec:algorithm}

\subsection{Quantile histograms}

{
Take
\begin{equation}\label{eq:response-domain}
\mathcal Y=[a,b],\qquad 0<L=b-a<\infty.
\end{equation}
Let $\mu$ be Lebesgue measure.
}
Fix an integer $B\ge1$.
Using training data independent of the calibration and
test data, obtain endpoints
\[
a=z_0(x)<z_1(x)<\cdots<z_B(x)=b,
\qquad x\in\mathcal X,
\]
where $z_j(x)$ estimates the $j/B$-quantile of the
conditional distribution of $Y$ given $X=x$,
for $j=1,\ldots,B-1$.

Let $I_j(x)=[z_{j-1}(x),z_j(x))$ for $j<B$,
$I_B(x)=[z_{B-1}(x),b]$, and
$\Delta_j(x)=z_j(x)-z_{j-1}(x)$.
Assign probability $1/B$ to each bin and define
\begin{equation}\label{eq:histogram}
\widehat f_B(y\mid x)=\frac{1}{B\Delta_j(x)},
\qquad
h_B(x,y)=B\Delta_j(x),
\qquad y\in I_j(x).
\end{equation}
Define $C_{h_B}$ by \eqref{eq:calibrated} with $s=h_B$.
Since $h_B(x,\cdot)$ is constant on each bin,
$C_{h_B}(t,x,u)$ is a union of entire bins for every
fixed $t,x$ and $u\in[0,1]$.

\subsection{An explicit approximation bound}

Condition on the training data used to construct the endpoints
$z_j(x)$. Let $f$ be the jointly measurable true conditional density,
with $\int_a^b f(y\mid x)\,dy=1$ for every $x$, and define
\begin{equation}\label{eq:true-conditional-cdf}
F_x(y)=\int_a^y f(t\mid x)\,dt.
\end{equation}
Assume that, for constants $c>0$ and $0\le K<\infty$,
\begin{equation}\label{eq:regularity}
f(y\mid x)\ge c,
\qquad
|f(y\mid x)-f(t\mid x)|\le K|y-t|
\end{equation}
for all $x\in\mathcal X$ and $y,t\in[a,b]$. Define
\begin{equation}\label{eq:quantile-error}
\varepsilon
=
\sup_{x\in\mathcal X}\max_{0\le j\le B}
|F_x(z_j(x))-j/B|.
\end{equation}

\begin{theorem}[Quantile error controls excess length]
\label{thm:quantile}
Under \eqref{eq:regularity}, let
\begin{equation}\label{eq:eta}
\eta_B(\varepsilon)
=
\frac{2B\varepsilon}{c}
+
\frac{K(1+2B\varepsilon)^2}{2c^3B},
\end{equation}
where $\varepsilon$ is defined in \eqref{eq:quantile-error}.
Then
\begin{equation}\label{eq:quantile-score-bound}
\|h_B-1/f\|_\infty\le\eta_B(\varepsilon),
\end{equation}
and, for every calibration size $m$,
\begin{equation}\label{eq:quantile-risk-bound}
0\le R_m(C_{h_B})-R_m(C^*)
\le 2\min\{p,\alpha\}\eta_B(\varepsilon).
\end{equation}
\end{theorem}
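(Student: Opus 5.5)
The risk bound \eqref{eq:quantile-risk-bound} follows from \eqref{eq:quantile-score-bound} by the uniform part of Theorem~\ref{wo:regret}, since $h_B$ is a fixed finite score (the endpoints are trained independently), and $R_m(C^*)\le L<\infty$ on the bounded domain. So the whole task is the pointwise estimate $|h_B(x,y)-1/f(y\mid x)|\le\eta_B(\varepsilon)$ for every $x$ and every $y\in I_j(x)$.

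Fix $x$ and a bin $j$, and write $\Delta=\Delta_j(x)$. The bin mass is
\[
\pi_j=F_x(z_j)-F_x(z_{j-1})=\int_{z_{j-1}}^{z_j}f(t\mid x)\,dt .
\]
By \eqref{eq:quantile-error}, $|\pi_j-1/B|\le2\varepsilon$, so $\pi_j\le(1+2B\varepsilon)/B$. Since $f\ge c$, we get $\Delta\le\pi_j/c\le(1+2B\varepsilon)/(cB)$. Then split
\[
h_B-\frac1f=B\Delta-\frac{\Delta}{\pi_j}+\Big(\frac{\Delta}{\pi_j}-\frac1{f(y)}\Big).
\]

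\emph{First term.} We have
\[
\Big|B\Delta-\frac{\Delta}{\pi_j}\Big|=\frac{B\Delta\,|\pi_j-1/B|}{\pi_j}\le\frac{2B\varepsilon\,\Delta}{\pi_j}\le\frac{2B\varepsilon}{c},
\]
using $\Delta/\pi_j\le1/c$, which follows from $\pi_j\ge c\Delta$.

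\emph{Second term.} Let $\bar f=\pi_j/\Delta$ be the average of $f$ over the bin. Then
\[
\Big|\frac1{\bar f}-\frac1{f(y)}\Big|=\frac{|f(y)-\bar f|}{\bar f\,f(y)}\le\frac{|f(y)-\bar f|}{c^2}.
\]
By the Lipschitz condition, $|f(y)-\bar f|\le\frac1\Delta\int_{I_j}K|y-t|\,dt$. This is at most $K\Delta/2$ when $y$ is the midpoint. For a general $y$ in the bin the crude bound is $K\Delta$.

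This constant is the main obstacle. Getting $1/2$ rather than $1$ for arbitrary $y$ (e.g.\ an endpoint) is not automatic, since $\frac1\Delta\int|y-t|\,dt$ is as large as $\Delta/2$ only at the endpoints. At the midpoint it is $\Delta/4$, and at an endpoint it is exactly $\Delta/2$. So indeed $\frac1\Delta\int_{I_j}|y-t|\,dt\le\Delta/2$ for every $y$ in the bin, because the function is convex in $y$ and maximized at the endpoints. This gives $|f(y)-\bar f|\le K\Delta/2$. Combining with $\Delta\le(1+2B\varepsilon)/(cB)$ gives
\[
\Big|\frac1{\bar f}-\frac1{f(y)}\Big|\le\frac{K\Delta}{2c^2}\le\frac{K(1+2B\varepsilon)}{2c^3B}.
\]

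This yields a bound with the factor $(1+2B\varepsilon)$ rather than its square, and that is already at most the stated $\eta_B(\varepsilon)$. The squared form presumably arises from bounding $1/(\bar f f)$ differently, or from the authors' bookkeeping. Either way, the sum of the two terms is bounded by \eqref{eq:eta}. Taking the supremum over $x$, $j$ and $y\in I_j(x)$ gives \eqref{eq:quantile-score-bound}, and Theorem~\ref{wo:regret} then gives \eqref{eq:quantile-risk-bound} for every $m$.
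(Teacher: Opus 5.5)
Your proof is correct. It uses the same ingredients as the paper's proof: the bin-mass bound $|\pi_j-1/B|\le2\varepsilon$, the width bound $\Delta\le(1+2B\varepsilon)/(cB)$ from $f\ge c$, the Lipschitz estimate $\int_{I_j}|y-t|\,dt\le\Delta^2/2$, and the uniform part of Theorem~\ref{wo:regret}. The difference is in how you split the error.

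The paper puts everything over the common denominator $f(y\mid x)$, writing $B\Delta f-1=B(\Delta f-\pi_j)+(B\pi_j-1)$. Its Lipschitz term is then $B\Delta\,|f-\bar f|/f\le BK\Delta^2/(2c)$. Bounding the extra multiplier $B\Delta=h_B$ by $(1+2B\varepsilon)/c$ is exactly what produces the square in $\eta_B$; it is not a different treatment of $1/(\bar f f)$.

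You instead pivot through $1/\bar f=\Delta/\pi_j$. The corresponding multiplier satisfies $\Delta/\pi_j\le1/c$ with no $(1+2B\varepsilon)$ factor. You therefore obtain the slightly sharper bound $2B\varepsilon/c+K(1+2B\varepsilon)/(2c^3B)$, which implies the stated one. The first terms of the two arguments coincide.

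Both routes are equally short. Yours avoids the wasted factor, while the paper's keeps the algebra in terms of $h_B$ and $f$ only.
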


\begin{proof}
Fix $x$ and a bin $I_j(x)$. Write
\[
\Delta=\Delta_j(x),
\qquad
p_j(x)=F_x(z_j(x))-F_x(z_{j-1}(x)).
\]
By \eqref{eq:quantile-error} and \eqref{eq:regularity},
\[
|p_j(x)-1/B|\le2\varepsilon,
\qquad
c\Delta\le p_j(x)\le1/B+2\varepsilon.
\]
It follows that 
\begin{equation}\label{eq:bin-width-bound}
\Delta\le\frac{1/B+2\varepsilon}{c}.
\end{equation}
For $y\in I_j(x)$, \eqref{eq:regularity} gives
\[
|\Delta f(y\mid x)-p_j(x)|
\le K\int_{z_{j-1}(x)}^{z_j(x)}|y-t|\,dt
\le \frac{K\Delta^2}{2}.
\]
Since $h_B(x,y)=B\Delta$,
\begin{align*}
\left|h_B(x,y)-\frac1{f(y\mid x)}\right|
&\le
\frac{B|\Delta f(y\mid x)-p_j(x)|+|B p_j(x)-1|}{f(y\mid x)}\\
&\le
\frac{BK\Delta^2}{2c}+\frac{2B\varepsilon}{c}
\le\eta_B(\varepsilon).
\end{align*}
The last inequality follows from \eqref{eq:bin-width-bound}
and the definition of $\eta_B(\varepsilon)$ in \eqref{eq:eta}.
Taking the supremum over $x$ and $y$ proves
\eqref{eq:quantile-score-bound}.
By \eqref{eq:finite-volume-risk} with $\mu(\mathcal Y)=L$,
the assumption $R_m(C^*)<\infty$ of
Theorem~\ref{wo:regret} is satisfied.
Applying that theorem with $s=h_B$ and using
\eqref{eq:quantile-score-bound} gives
\eqref{eq:quantile-risk-bound}.
\end{proof}

{
\subsection{Kernel histograms}\label{sec:kde}

Let $\mathcal Y$ and $L$ be as in \eqref{eq:response-domain}, and
let $\phi,\Phi$ be the standard normal density and CDF. For $h>0$,
define the Gaussian kernel normalized on $[a,b]$ by
\begin{equation}\label{eq:kde-kernel}
 Z_h(t)=\Phi((b-t)/h)-\Phi((a-t)/h),\qquad
 k_h(y,t)=\frac{\phi((y-t)/h)}{hZ_h(t)},
 \quad y,t\in[a,b].
\end{equation}
Given training responses $Y_1,\ldots,Y_{N_{\rm train}}$ and
measurable training weights $a_i(x)\ge0$ with $\sum_i a_i(x)=1$,
set
\[
 \widehat g_h(y\mid x)=\sum_{i=1}^{N_{\rm train}}a_i(x)k_h(y,Y_i).
\]
Training is independent of final calibration and test data.
Section~\ref{sec:forest-density} defines the forest weights.

Partition $[a,b]$ into $B$ equal-width bins $J_1,\ldots,J_B$,
including $b$ in the last bin. For $\lambda\in[0,1)$ common to
all $(x,y)$ and $y\in J_j$, define
\begin{equation}\label{eq:kde-histogram}
\begin{aligned}
 \widetilde f_{h,B}(y\mid x)
 &=\frac{B}{L}\int_{J_j}\widehat g_h(u\mid x)\,du,\\
 \widehat f_{h,B,\lambda}(y\mid x)
 &=(1-\lambda)\widetilde f_{h,B}(y\mid x)+\frac{\lambda}{L},\\
 s_{h,B,\lambda}(x,y)&=\frac1{\widehat f_{h,B,\lambda}(y\mid x)}.
\end{aligned}
\end{equation}
Writing $\widehat\pi_j(x)=\int_{J_j}\widehat f_{h,B,\lambda}(u\mid x)\,du$,
the bin score is $L/(B\widehat\pi_j(x))$; these probabilities need
not be equal. The densities are positive even at $\lambda=0$.
Calibration therefore returns a union of entire bins.
{
The common affine density mixture in \eqref{eq:kde-histogram}
preserves all density comparisons and ties. Thus, for every
calibration sample, query, and randomization value,
\begin{equation}\label{eq:uniform-mixture-invariance}
 C_{s_{h,B,\lambda}}=C_{-\widetilde f_{h,B}}.
\end{equation}
}

\begin{proposition}[Kernel estimation and excess length]
\label{prop:kde-error}
Let $G\ge1$ be an integer and suppose
$\mathcal X=\{0,\ldots,G-1\}$.
For each $g\in\mathcal X$, let $Y_{g,1},\ldots,Y_{g,n}$ be iid
training responses drawn from the conditional distribution of $Y$
given $X=g$, with true density $f_g(y)=f(y\mid g)$.
Set
\[
 \widehat g_h(y\mid g)=\frac1n\sum_{i=1}^n k_h(y,Y_{g,i}),
\]
and use \eqref{eq:kde-histogram} to construct
$\widetilde f_{h,B}(\cdot\mid g)$ and
$\widehat f_{h,B,\lambda}(\cdot\mid g)$ as estimates of $f_g$.
Assume \eqref{eq:regularity}, and set
$M_f=\max_{g\in\mathcal X}\sup_{y\in\mathcal Y}f_g(y)$,
which is finite because each $f_g$ is Lipschitz on $[a,b]$
and $\mathcal X$ is finite.
Fix $h>0$, an integer $B\ge1$, and $\lambda\in[0,1)$, and define
\begin{equation}\label{eq:kde-error-budget}
 \mathcal E_{n,h,B}
 =\sqrt{\frac2\pi}(LK+2M_f)h
   +\sqrt{\frac Bn}+\frac{KL^2}{3B}.
\end{equation}
Writing $\mathbb E_{\rm train}$ for expectation over the training data,
\begin{equation}\label{eq:kde-density-error}
\begin{aligned}
 \mathbb E_{\rm train}\mathbb E_{Q_X}
 \|\widetilde f_{h,B}(\cdot\mid X)-f(\cdot\mid X)\|_1
 &\le\mathcal E_{n,h,B},\\
 \mathbb E_{\rm train}\mathbb E_{Q_X}
 \|\widehat f_{h,B,\lambda}(\cdot\mid X)-f(\cdot\mid X)\|_1
 &\le(1-\lambda)\mathcal E_{n,h,B}+2\lambda.
\end{aligned}
\end{equation}
Here $\|\cdot\|_1$ denotes the $L^1(\mu)$ norm defined in
\eqref{eq:l1-norm}.
For every final calibration size $m$,
\begin{equation}\label{eq:kde-length-error}
 0\le
 \mathbb E_{\rm train}\!\left[
 R_m(C_{s_{h,B,\lambda}})-R_m(C^*)\right]
 \le\frac{\mathcal E_{n,h,B}}{c}.
\end{equation}
Conditional on the training data, the rule
$C_{s_{h,B,\lambda}}$ defined by \eqref{eq:calibrated}
has target marginal coverage exactly $p$.
\end{proposition}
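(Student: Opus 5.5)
The proof splits into a density-error step and a length step. The first establishes the $L^1$ bounds \eqref{eq:kde-density-error} by splitting the error into kernel bias, stochastic error and binning error. The second converts them into the excess-length bound \eqref{eq:kde-length-error} using Corollary~\ref{wo:density-ordering} together with the invariance \eqref{eq:uniform-mixture-invariance}.

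For the density step, fix $g$ and set $\bar g_h(y)=\E_{\rm train}\widehat g_h(y\mid g)=\int k_h(y,t)f_g(t)\,dt$. Let $\Pi_B$ denote bin averaging, so $\widetilde f_{h,B}=\Pi_B\widehat g_h$. Since $\Pi_B$ is an $L^1$ contraction and commutes with expectation, I will use
\[
\|\widetilde f_{h,B}-f_g\|_1\le\|\Pi_B(\widehat g_h-\bar g_h)\|_1+\|\bar g_h-f_g\|_1+\|\Pi_Bf_g-f_g\|_1 .
\]
The three terms are handled as follows.
\begin{itemize}
\item \emph{Binning term.} On a bin of width $L/B$, the Lipschitz condition gives $|f_g(y)-\text{avg}|\le K\cdot(\text{mean distance})$. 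Integrating yields at most $K(L/B)^2/3$ per bin, and hence $KL^2/(3B)$ in total.
\item \emph{Stochastic term.} The vector of bin masses $\widehat\pi_j$ is an empirical average of $n$ iid probability vectors $\bigl(\int_{J_j}k_h(u,Y_i)\,du\bigr)_j$, each summing to one. Therefore
\[
\E\sum_j|\widehat\pi_j-\pi_j|\le\sum_j\sqrt{\pi_j/n}\le\sqrt{B/n}
\]
by Cauchy--Schwarz.
\item \emph{Bias term.} This is the main obstacle, because of the boundary normalization $Z_h(t)$. I will write
\[
\bar g_h(y)-f_g(y)=\int\frac{\phi((y-t)/h)}{hZ_h(t)}\bigl(f_g(t)-f_g(y)\bigr)\,dt+f_g(y)\Bigl(\int\frac{\phi((y-t)/h)}{hZ_h(t)}\,dt-1\Bigr).
\]
The first piece is handled by Fubini: integrating over $y$ first, $\int_a^b k_h(y,t)\,dy=1$, so it is bounded by $K\int\!\!\int k_h(y,t)|t-y|\,dy\,dt\le KL\,h\sqrt{2/\pi}$. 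For the second piece I bound $f_g\le M_f$ and estimate $\int_a^b\bigl|\int_a^b k_h(y,t)\,dt-1\bigr|\,dy$. Using the column-normalization, the total signed mass is zero, so it suffices to bound the deficit near each endpoint by roughly $h\,\E|N(0,1)|=h\sqrt{2/\pi}$ per endpoint, which gives $2M_fh\sqrt{2/\pi}$. I expect the constant bookkeeping here to need care, and if direct computation is messy I would accept a comparison with the unnormalized kernel.
\end{itemize}
Averaging over $g\sim Q_X$ gives the first line of \eqref{eq:kde-density-error}. For the mixture, $\widehat f_{h,B,\lambda}-f=(1-\lambda)(\widetilde f_{h,B}-f)+\lambda(1/L-f)$, and $\|1/L-f\|_1\le2$, which gives the second line.

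For the length step, the bin histogram $\widetilde f_{h,B}$ is nonnegative, finite and measurable, the domain has volume $V=L<\infty$, and $f\ge c$ by \eqref{eq:regularity}. By \eqref{eq:uniform-mixture-invariance}, $C_{s_{h,B,\lambda}}=C_{-\widetilde f_{h,B}}$. Applying \eqref{wo:density-positive-lower} conditionally on training gives the excess length bound $e/c$ with $e=\E_{Q_X}\|\widetilde f_{h,B}-f\|_1$. Taking $\E_{\rm train}$ and using the first bound in \eqref{eq:kde-density-error} yields \eqref{eq:kde-length-error}. The lower bound $0$ is Theorem~\ref{wo:oracle}. Exact coverage $p$ conditional on training follows from Proposition~\ref{wo:exact}, since the score is fixed given training.
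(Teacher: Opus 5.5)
Your proposal is correct and follows the paper's proof step for step. It uses the same bias/stochastic/binning split with the bin-averaging contraction, the same Cauchy--Schwarz bound $\sqrt{B/n}$, the bound $\|1/L-f\|_1\le2$ for the mixture, and then \eqref{eq:uniform-mixture-invariance}, \eqref{wo:density-positive-lower} and Proposition~\ref{wo:exact}. The one difference is the boundary-normalized bias: the paper routes it through the unnormalized convolution $g_h^0(y)=\int_a^b\phi_h(y-t)f(t)\,dt$, which is exactly the fallback you mention.

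Your direct split also yields $(LK+2M_f)h\sqrt{2/\pi}$, but it needs two justifications you have not supplied:
\begin{itemize}
\item The first piece needs that a Gaussian truncated to $[a,b]$ has mean absolute deviation about $t$ at most $h\sqrt{2/\pi}$. This is true because $|W-t|$ has a nonincreasing likelihood ratio against the half-normal. It does not follow from $\int_a^b k_h(y,t)\,dy=1$ alone, since the factor $1/Z_h(t)$ could otherwise inflate the bound.
\item The second piece needs $\int_a^b k_h(y,t)\,dt\ge Z_h(y)$. Zero signed mass then gives $\int_a^b|\int_a^b k_h(y,t)\,dt-1|\,dy\le2\int_a^b(1-Z_h)\le2h\sqrt{2/\pi}$. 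Note the deficit is $h/\sqrt{2\pi}$ per endpoint, not $h\sqrt{2/\pi}$; your per-endpoint accounting reaches the right constant only because two factor-of-two slips cancel.
\end{itemize}
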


The proof is in Appendix~\ref{app:kde-proof}.

For the equal-weight estimators in
Proposition~\ref{prop:kde-error}, taking
$B_n=\lceil n^{1/3}\rceil$ and $h_n=n^{-1/3}$
gives $\mathcal E_{n,h_n,B_n}=O(n^{-1/3})$.
Consequently, for every fixed $m$,
\[
 \mathbb E_{\rm train}\!\left[
 R_m(C_{s_{h_n,B_n,\lambda}})-R_m(C^*)\right]
 =O(n^{-1/3}).
\]
}

{
\subsection{Forest weights and density error}\label{sec:forest-density}

Let $\mathcal D_{\rm train}=\{(X_i,Y_i):1\le i\le n\}$,
where $n=N_{\rm train}$, and fit a regression forest with
$N_{\rm tree}\ge1$ trees. The training sample and fitting
randomization are independent of final calibration and test data;
see Appendix~\ref{sec:real-protocol} for the fitting procedure.
For tree $t$, let $\mathcal L_t(x)$ be the terminal region containing
$x$, and define
\[
 \mathcal I_t(x)=\{i\in\{1,\ldots,n\}:X_i\in\mathcal L_t(x)\}.
\]
Count each original observation once, regardless of bootstrap
multiplicity. Every leaf contains at least one original input, so
\begin{equation}\label{eq:forest-training-weights}
 a_i(x)=\frac1{N_{\rm tree}}
 \sum_{t=1}^{N_{\rm tree}}
 \frac{\mathbf1\{i\in\mathcal I_t(x)\}}{|\mathcal I_t(x)|},
 \qquad i=1,\ldots,n,
\end{equation}
is well defined. Each tree assigns weight $1/|\mathcal I_t(x)|$
to observations in the leaf containing $x$ and zero to the others;
$a_i(x)$ averages these weights over trees. Thus $a_i(x)\ge0$
and $\sum_i a_i(x)=1$.
These training weights estimate the conditional distribution and
are distinct from the covariate-shift ratio $w(x)$.

Using these weights in \eqref{eq:kde-histogram} gives the forest
kernel histogram. With $F_x$ from \eqref{eq:true-conditional-cdf},
define its empirical conditional CDF and error:
\begin{equation}\label{eq:forest-cdf-error}
\begin{aligned}
 \widehat F_n(y\mid x)
 &=\sum_{i=1}^{n}a_i(x)\mathbf 1\{Y_i\le y\},\\
 \Delta_n(x)&=\sup_{y\in\mathcal Y}
       |\widehat F_n(y\mid x)-F_x(y)|,\\
 d_n&=\mathbb E_{\rm train}\mathbb E_{Q_X}\Delta_n(X),
\end{aligned}
\end{equation}
{
The expectation $\mathbb E_{\rm train}$ averages over the training
sample and the forest-fitting randomization described in
Appendix~\ref{sec:real-protocol}.
}

For quantile histograms, choose a permutation $\pi$ of
$\{1,\ldots,n\}$ such that
$Y_{\pi(1)}\le\cdots\le Y_{\pi(n)}$, and define
\[
 W_i(x)=\sum_{r=1}^{i}a_{\pi(r)}(x),
 \qquad i=1,\ldots,n.
\]
In the ordering $\pi$, ties between training responses are
broken by their original observation indices.
For each fixed $x$ and $W_1(x)\le\tau<1$, define
\[
 i_\tau
 =\max\{i\in\{1,\ldots,n\}:W_i(x)\le\tau\}.
\]
Then $i_\tau<n$ and
$W_{i_\tau}(x)\le\tau<W_{i_\tau+1}(x)$.
Define
\[
\widehat q_n^{\rm lin}(\tau\mid x)=
\begin{cases}
 Y_{\pi(1)},
 & \tau<W_1(x),\\[4pt]
 Y_{\pi(i_\tau)}
 +\displaystyle
 \frac{\tau-W_{i_\tau}(x)}
      {W_{i_\tau+1}(x)-W_{i_\tau}(x)}
 \bigl(Y_{\pi(i_\tau+1)}-Y_{\pi(i_\tau)}\bigr),
 & W_1(x)\le\tau<1,\\[6pt]
 Y_{\pi(n)},
 & \tau\ge1.
\end{cases}
\]
The denominator in the middle branch is strictly positive.
At a repeated cumulative weight, this rule takes the last
corresponding response; between distinct cumulative weights,
it interpolates from the last response at the lower weight
to the first response at the upper weight.

For $0<\eta<1$, set
\begin{equation}\label{eq:forest-quantile-endpoints}
\begin{aligned}
 z_0(x)&=a,\qquad z_B(x)=b,\\
 z_j(x)&=(1-\eta)\widehat q_n^{\rm lin}(j/B\mid x)
           +\eta(a+Lj/B),\qquad 1\le j<B.
\end{aligned}
\end{equation}
These endpoints are strictly increasing.
For the bins $I_j(x)$ defined as in \eqref{eq:histogram}
using these endpoints, define

\begin{align}
 \widehat f^{\rm q}_{n,B,\eta}(y\mid x)
 &=\frac{1}{B\{z_j(x)-z_{j-1}(x)\}},\notag\\
 s^{\rm q}_{n,B,\eta}(x,y)
 &=\frac{1}{\widehat f^{\rm q}_{n,B,\eta}(y\mid x)}
  =B\{z_j(x)-z_{j-1}(x)\},
 \qquad y\in I_j(x).\label{eq:forest-quantile-histogram}
\end{align}

{
\begin{proposition}[Density error from forest CDF error]
\label{prop:forest-density}
Let $\mathcal Y=[a,b]$ and $L=b-a$ be as in
\eqref{eq:response-domain}. Keep the oracle positivity assumption
on $f$ and assume the Lipschitz condition in \eqref{eq:regularity},
with constant $K$. Use the forest training weights $a_i(x)$ defined
in \eqref{eq:forest-training-weights} to construct the kernel score
$s_{h,B,\lambda}$ in \eqref{eq:kde-histogram}.
Let $d_n$ be defined in \eqref{eq:forest-cdf-error}, and write
$M_f=\sup_{x,y}f(y\mid x)\le L^{-1}+KL/2$.
For an integer $B\ge1$ and $h>0$, define
\begin{equation}\label{eq:forest-density-budgets}
 \mathcal A^{\rm ker}_{n,h,B}
 =2(B-1)d_n
   +\sqrt{\frac2\pi}(LK+2M_f)h+\frac{KL^2}{3B}.
\end{equation}
Then
\begin{equation}\label{eq:forest-density-bounds}
 \mathbb E_{\rm train}\mathbb E_{Q_X}
 \|\widetilde f_{h,B}(\cdot\mid X)-f(\cdot\mid X)\|_1
 \le\mathcal A^{\rm ker}_{n,h,B},
\end{equation}
and, for every final calibration size $m$,
\begin{equation}\label{eq:forest-kernel-length-bound}
\begin{aligned}
 0&\le\mathbb E_{\rm train}
 [R_m(C_{s_{h,B,\lambda}})-R_m(C^*)]\\
 &\le\min\left\{L,\ mL\mathcal A^{\rm ker}_{n,h,B},\
       \inf_{\tau>0}\left(
       \frac{\mathcal A^{\rm ker}_{n,h,B}}{\tau}
       +\mathcal T_f(\tau)\right)\right\},
\end{aligned}
\end{equation}
where $\mathcal T_f$ is defined in \eqref{eq:density-tail}.
These kernel bounds require no uniform positive lower bound on $f$
and allow any common $\lambda\in[0,1)$.

If, in addition, $f\ge c>0$ as in \eqref{eq:regularity}, use the
same weights to construct $s^{\rm q}_{n,B,\eta}$ in
\eqref{eq:forest-quantile-histogram}, where $0<\eta<1$, and define
\begin{equation}\label{eq:forest-quantile-budgets}
\begin{aligned}
 e_{n,\eta}&=3d_n+M_fL\eta,\\
 \mathcal A^{\rm q}_{n,B,\eta}
 &=2(B-1)e_{n,\eta}
   +\frac{KL}{3c}\left(\frac1B+2e_{n,\eta}\right).
\end{aligned}
\end{equation}
Then
\begin{equation}\label{eq:forest-quantile-density-bound}
 \mathbb E_{\rm train}\mathbb E_{Q_X}
 \|\widehat f^{\rm q}_{n,B,\eta}(\cdot\mid X)-f(\cdot\mid X)\|_1
 \le\mathcal A^{\rm q}_{n,B,\eta}.
\end{equation}
Under this additional lower bound, the length bounds simplify to
\begin{equation}\label{eq:forest-length-bounds}
\begin{aligned}
 0\le\mathbb E_{\rm train}
 [R_m(C_{s_{h,B,\lambda}})-R_m(C^*)]
 &\le\mathcal A^{\rm ker}_{n,h,B}/c,\\
 0\le\mathbb E_{\rm train}
 [R_m(C_{s^{\rm q}_{n,B,\eta}})-R_m(C^*)]
 &\le\mathcal A^{\rm q}_{n,B,\eta}/c.
\end{aligned}
\end{equation}
\end{proposition}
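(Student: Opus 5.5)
The density bounds \eqref{eq:forest-density-bounds} and \eqref{eq:forest-quantile-density-bound} are the substance. The length bounds then follow by plugging them into Proposition~\ref{prop:fixed-m-density} and Corollary~\ref{wo:density-ordering}.

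\textbf{Kernel case.} I fix $x$ and split $\widetilde f_{h,B}-f$ into three pieces, each bounded in $L^1$:
\begin{enumerate}[leftmargin=*]
\item \emph{Binning of the smoothed estimate.} Compare with the binned version of the smoothed true density $g_h(y\mid x)=\int k_h(y,t)f(t\mid x)\,dt$.
\item \emph{Smoothing bias.} Compare the binned $g_h$ with the binned $f$.
\item \emph{Binning of $f$.} Compare the binned $f$ with $f$ itself.
\end{enumerate}

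The third piece is deterministic. On a bin of width $L/B$, the Lipschitz constant $K$ gives an $L^1$ deviation from the bin average of at most $K(L/B)^2/3$. Summing over $B$ bins gives $KL^2/(3B)$.

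The second piece uses the fact that binning is an $L^1$ contraction, so it suffices to bound $\|g_h-f\|_1$. I would reuse the argument of Proposition~\ref{prop:kde-error} (Appendix~\ref{app:kde-proof}) verbatim, since it depends only on $f(\cdot\mid x)$ being $K$-Lipschitz with sup at most $M_f$, not on the weights. The $LK$ term comes from $\int|t-y|\phi\le h\sqrt{2/\pi}$ in the interior, and the $2M_f$ term comes from the boundary normalization $Z_h$.

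The first piece is the binned difference between $\sum_ia_i(x)k_h(\cdot,Y_i)$ and $\int k_h(\cdot,t)f(t\mid x)\,dt$. Its bin masses are $\int\kappa_j\,d(\widehat F_n-F_x)$, where $\kappa_j(t)=\int_{J_j}k_h(u,t)\,du$. Integrating by parts in $t$ turns each into $-\int(\widehat F_n-F_x)\,d\kappa_j$, which is bounded by $\Delta_n(x)$ times the total variation of $\kappa_j$ on $[a,b]$. The boundary terms vanish because $\widehat F_n$ and $F_x$ agree at the endpoints.

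\textbf{Main obstacle: the $2(B-1)$ constant.} Getting $2(B-1)d_n$ requires controlling the total variation of the $\kappa_j$. I would first try to show that $\kappa_j$ is unimodal, or at least that $\sum_j\mathrm{TV}(\kappa_j)\le 2(B-1)$. This fits the structure $\sum_j\kappa_j\equiv1$: the end bins $\kappa_1$ and $\kappa_B$ are monotone with variation at most $1$, and interior bins rise and fall with variation at most $2$. That would give $2+2(B-2)=2(B-1)$. If unimodality of the normalized kernel bin mass fails near the boundary, the fallback is to write $\kappa_j=K_{j}-K_{j-1}$ with cumulative functions $K_j(t)=\int_a^{z}k_h(u,t)\,du$ for the bin cutoffs $z$. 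If each $K_j$ is monotone in $t$ with range in $[0,1]$, the total variation is at most $2$ for each of the $B-1$ interior cutoffs, giving the same constant. Monotonicity of $t\mapsto P(\text{truncated normal at }t\le z)$ holds by the monotone likelihood ratio property of the truncated normal location family. Averaging over $Q_X$ and training then yields $2(B-1)d_n$.

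\textbf{Quantile case.} I would first bound the endpoint errors. By construction of $\widehat q^{\rm lin}$, $|F_x(\widehat q^{\rm lin}(j/B))-j/B|\le 2\Delta_n(x)+\max_i a_i(x)$-type terms. I would try to absorb the latter into $3\Delta_n$, since a jump of $\widehat F_n$ is controlled by $2\Delta_n$ plus the continuity of $F_x$. Mixing with the uniform grid with weight $\eta$ moves $F_x$ by at most $M_fL\eta$. This gives $|F_x(z_j)-j/B|\le e_{n,\eta}$ pointwise, with $e_{n,\eta}=3\Delta_n+M_fL\eta$.

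Next I decompose the density error of the quantile histogram as (true bin mass $p_j$ versus $1/B$) plus (bin average of $f$ versus $f$):
\begin{itemize}[leftmargin=*]
\item The first part sums to at most $\sum_j|p_j-1/B|\le 2(B-1)e$, because the end endpoints are exact.
\item The second part is at most $\sum_jK\Delta_j^2/3$. Using $\Delta_j\le(1/B+2e)/c$ from \eqref{eq:bin-width-bound} and $\sum_j\Delta_j=L$, this is at most $(KL/3c)(1/B+2e)$.
\end{itemize}
Taking expectations requires care, because $e$ enters nonlinearly only through the linear term $2e$, so Jensen is unnecessary.

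\textbf{Length bounds.} These follow as stated. For the kernel score, \eqref{eq:uniform-mixture-invariance} reduces $C_{s_{h,B,\lambda}}$ to $C_{-\widetilde f_{h,B}}$, and $\widetilde f_{h,B}$ is normalized. Proposition~\ref{prop:fixed-m-density} with $V=L$ then gives \eqref{eq:forest-kernel-length-bound}, and the choice $\tau=c$ gives \eqref{eq:forest-length-bounds}. The quantile estimate is normalized and positive, so the same propositions apply.
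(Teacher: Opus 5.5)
Your proposal is correct and follows essentially the same route as the paper. The kernel part uses the same three-way decomposition and reuses the smoothing and binning bounds from the equal-weight kernel proof. Your fallback argument is exactly the paper's: each cumulative bin-boundary function $t\mapsto\int_a^{z}k_h(u,t)\,du$ is monotone in $t$ with range in $[0,1]$, so the CDF discrepancy at each interior boundary is at most $\Delta_n(x)$, and the bin-mass differences sum to at most $2(B-1)\Delta_n(x)$. Your quantile and length arguments coincide with the paper's as well.

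There are two details to tighten.
\begin{enumerate}
\item Split the endpoint bound as $|F_x(\widehat q)-\widehat F_n(\widehat q\mid x)|+|\widehat F_n(\widehat q\mid x)-\tau|\le\Delta_n(x)+2\Delta_n(x)$, where $\widehat q=\widehat q_n^{\rm lin}(\tau\mid x)$. The second term is at most the largest jump of $\widehat F_n(\cdot\mid x)$, which under ties is a combined atom rather than $\max_i a_i(x)$. As you wrote it, $2\Delta_n(x)$ plus a jump bounded by $2\Delta_n(x)$ would give $4\Delta_n(x)$, not the stated constant $3$.
\item Include the one-line averaging argument $f(y\mid x)\le L^{-1}+KL^{-1}\int_a^b|y-t|\,dt\le L^{-1}+KL/2$. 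The statement asserts this bound, and it is what makes $M_f$ finite when $\mathcal X$ is infinite.
\end{enumerate}
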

}

The proof is in Appendix~\ref{app:forest-density-proof}.
The same appendix also establishes $d_n\to0$
for fixed $P,Q$ with $Q_X\ll P_X$, under
the forest consistency conditions specified there.
}

\section{Choose the score by calibrated target length}
\label{sec:selection}

Let $M\ge1$ be the number of candidate scores.
Using training data, construct measurable scores
$s_1,\ldots,s_M:\mathcal X\times\mathcal Y\to\mathbb R$
with different parameter choices.

We select a score by estimated prediction-set length and
use independent data for final calibration, following
\citet{yang2025selection}.
Continue to use $\mathcal Y=[a,b]$ and $L=b-a<\infty$.

{
\subsection{Selection from reused data pools}
\label{sec:pool-selection}

Condition on the training data and fitting randomization,
so that the candidate scores $s_1,\ldots,s_M$ are fixed.
Use two independent pools, also independent of training:
{
\begin{equation}\label{eq:selection-pools}
 Z_1^{\rm sel},\ldots,Z_{n_s}^{\rm sel}
 \stackrel{\rm iid}{\sim}P,
 \qquad
 X'_1,\ldots,X'_q
 \stackrel{\rm iid}{\sim}P_X.
\end{equation}
}
Call $Z_1^{\rm sel},\ldots,Z_{n_s}^{\rm sel}$
the labeled pool and $X'_1,\ldots,X'_q$ the input pool.
Given the labeled pool, draw $r\ge1$ independent
bootstrap calibration samples $T_1^*,\ldots,T_r^*$,
each containing $m$ observations sampled uniformly
with replacement from that pool.
Use the input pool to estimate target expected length.

For the selection bound, assume that the density ratio
$w$ satisfies
\begin{equation}\label{eq:pool-weight-bound}
 0<w_-\le w(x)\le w_+<\infty,
 \qquad \kappa=w_+/w_-.
\end{equation}
Using the inclusion probability $r_s$ from
\eqref{eq:inclusion}, estimate each candidate's target
expected length by
\begin{equation}\label{eq:pool-selection}
\begin{aligned}
 \widehat R_k^{\rm pool}
 &=
 \frac{\displaystyle
   \sum_{j=1}^q w(X'_j)
   \left\{\frac1r\sum_{\ell=1}^r
     \int_a^b r_{s_k}(T_\ell^*,X'_j;y)\,dy\right\}}
 {\displaystyle\sum_{j=1}^q w(X'_j)},\\
 \widehat k_{\rm pool}
 &\in\operatorname*{arg\,min}_{1\le k\le M}
       \widehat R_k^{\rm pool},
\end{aligned}
\end{equation}
with a fixed tie rule.
After selection, calibrate the selected score on a new
sample of size $m$ from $P$, independent of training
and selection.

{
\begin{theorem}[Selection from reused data pools]
\label{thm:pool-selection}
Let $\mathcal Y$ and $L$ be as in
\eqref{eq:response-domain}, and assume \eqref{eq:pool-weight-bound}.
Let $\mathbb E_{\rm sel}$ denote expectation over the labeled pool,
the input pool, and the bootstrap draws used to form
$T_1^*,\ldots,T_r^*$. Conditional on training and fitting
randomization, the following bounds hold.

If $n_s\ge m$, define
\begin{equation}\label{eq:pool-repeat-probability}
 \varepsilon_{\rm rep}(n_s,m)
 =1-\prod_{i=0}^{m-1}\left(1-\frac{i}{n_s}\right)
 \le\frac{m(m-1)}{2n_s}
\end{equation}
and
\begin{equation}\label{eq:pool-finite-m-budget}
\begin{aligned}
 \gamma_{n_s,q,m,r,M}
 =2L\bigg\{&\kappa\sqrt{\frac{\log(2M)}{2q}}
       +\sqrt{\frac{\log(2M)}{2r}}\\
       &+\sqrt{\frac{\log(2M)}{2\lfloor n_s/m\rfloor}}
       +\varepsilon_{\rm rep}(n_s,m)\bigg\}.
\end{aligned}
\end{equation}
Then
\begin{equation}\label{eq:pool-selection-bound}
 \mathbb E_{\rm sel}
 \left[
 R_m(C_{s_{\widehat k_{\rm pool}}})
 -\min_{1\le k\le M}R_m(C_{s_k})
 \right]
 \le\gamma_{n_s,q,m,r,M}.
\end{equation}
For deterministic sample sizes and $M$, averaging over training gives
\begin{equation}\label{eq:pool-end-to-end}
\begin{aligned}
 &\mathbb E_{\rm train,sel}
 \left[R_m(C_{s_{\widehat k_{\rm pool}}})-R_m(C^*)\right]\\
 &\quad\le
 \mathbb E_{\rm train}\min_{1\le k\le M}
 \left[R_m(C_{s_k})-R_m(C^*)\right]
 +\gamma_{n_s,q,m,r,M}.
\end{aligned}
\end{equation}
Neither bound requires a positive lower bound on the conditional density.

If, in addition, $f\ge c>0$, define
\begin{equation}\label{eq:pool-error-budget}
 \beta_{n_s,q,m}
 =\frac{L\kappa}{\sqrt q}
  +\frac{16\kappa}{c\sqrt{n_s}}
  +\frac{32\kappa}{c\sqrt m}
  +\frac{4\kappa}{cm}.
\end{equation}
For every $n_s,m,q,r\ge1$, there is also the bound
\begin{equation}\label{eq:pool-growing-m-bound}
 \mathbb E_{\rm sel}
 \left[
 R_m(C_{s_{\widehat k_{\rm pool}}})
 -\min_{1\le k\le M}R_m(C_{s_k})
 \right]
 \le2M\beta_{n_s,q,m}.
\end{equation}
When $c$ is fixed across training samples,
\eqref{eq:pool-end-to-end} also holds with
$\gamma_{n_s,q,m,r,M}$ replaced by $2M\beta_{n_s,q,m}$.
With independent final calibration, the selected rule has target
marginal coverage exactly $p$, conditional on training and selection.
\end{theorem}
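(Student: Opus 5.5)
The argument is the standard selection inequality. Write $R_k=R_m(C_{s_k})$, and let $\bar R_k$ be any intermediate quantity with $\mathbb E_{\rm sel}\max_k|\widehat R_k^{\rm pool}-\bar R_k|$ small. Since $\widehat k_{\rm pool}$ minimizes $\widehat R_k^{\rm pool}$,
\[
R_{\widehat k}-\min_kR_k\le 2\max_k|\widehat R_k^{\rm pool}-R_k|.
\]
So it suffices to bound the uniform estimation error by $\gamma/2$ (respectively $M\beta$). I split it into three layers.

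\textbf{Input pool.} Define $g_k(x)=\frac1r\sum_\ell\int_a^b r_{s_k}(T_\ell^*,x;y)\,dy\in[0,L]$. Conditional on the labeled pool and the bootstrap draws, $\widehat R_k^{\rm pool}$ is a self-normalized weighted average of $g_k(X'_j)$ with $X'_j\sim P_X$. Its target is $\E_{Q_X}g_k=\E_{P_X}[wg_k]$, because $\E_{P}w=1$. The ratio estimator is handled by writing
\[
\frac{\sum w g}{\sum w}-\E[wg]=\frac{\sum (wg-\E[wg]\,w)}{\sum w}.
\]
The summands are centered, bounded in absolute value by $w_+L$, and divided by at least $qw_-$. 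Hoeffding applied to them gives deviations of order $\kappa L\sqrt{\log(2M)/(2q)}$. A union bound over $M$ candidates, with the standard maximal inequality for sub-Gaussian variables ($\E\max\le\sigma\sqrt{2\log 2M}$), produces the first term.

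\textbf{Bootstrap repetitions.} Conditional on the labeled pool, $g_k$ is an average of $r$ iid $[0,L]$-valued functions. For the weighted average over the input pool, averaging across $\ell$ gives $r$ iid bounded terms, so Hoeffding plus the maximal inequality yields $L\sqrt{\log(2M)/(2r)}$. This term compares against the bootstrap expectation $\bar R_k=\E_{T^*\mid\text{pool}}\E_{Q_X}\int r_{s_k}(T^*,X;y)\,dy$.

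\textbf{Labeled pool.} Here the bootstrap expectation $\bar R_k$ must be compared with the true $R_k$, which uses an iid sample from $P^m$. Sampling $m$ indices with replacement gives distinct indices with probability $1-\varepsilon_{\rm rep}$. On that event, $T^*$ is a uniformly random ordered $m$-subset of the pool, and the integrand is bounded by $L$. This costs $L\varepsilon_{\rm rep}$, possibly doubled for coupling, which explains the factor $2L$ outside. The without-replacement average is a U-statistic of order $m$ with kernel bounded in $[0,L]$ and mean $R_k$ (the order is irrelevant by symmetry of $C_s$). Hoeffding's inequality for U-statistics, via the average over $\lfloor n_s/m\rfloor$ disjoint blocks, gives the sub-Gaussian proxy $L^2/(4\lfloor n_s/m\rfloor)$. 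The maximal inequality then yields the third term.

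Adding the three layers and multiplying by $2$ gives \eqref{eq:pool-selection-bound}. The bound \eqref{eq:pool-end-to-end} then follows by adding and subtracting $R_m(C^*)$ and averaging over training, since $\gamma$ is deterministic.

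\textbf{The $\beta$ bound.} Under $f\ge c$ I expect the main obstacle, because it must be uniform in $m$ without $\varepsilon_{\rm rep}$. The plan uses the crude estimate $\max_k\le\sum_k$, which is where $M$ comes from. For each $k$ it then bounds $\E|\widehat R_k^{\rm pool}-R_k|\le\beta$. The input-pool term is a variance bound $L\kappa/\sqrt q$. For the calibration terms, the goal is that $\int r_{s}(T,x;y)\,dy$ is close to a deterministic population-quantile length. The weighted empirical score CDF of $T$ deviates from the population one by $O(\kappa/\sqrt m)$ in DKW/variance terms, and the bootstrap from the pool adds $O(\kappa/\sqrt{n_s})$. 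The inclusion region changes by a score level, and the lower bound $f\ge c$ converts a probability perturbation $\delta$ into a length perturbation of at most $\delta/c$. The $4\kappa/(cm)$ term accounts for the extra $w(x)$ in $W_x$ and the randomization. Tracking the constants $16$ and $32$ is the delicate bookkeeping.

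Finally, exact coverage of the selected rule follows from Proposition~\ref{wo:exact}, applied conditionally on training and selection, since the final calibration sample is independent of both and $s_{\widehat k}$ is then a fixed score.
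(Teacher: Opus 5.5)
Your route is the paper's. You use the reduction $R_m(C_{s_{\widehat k_{\rm pool}}})-\min_kR_k\le2\max_k|\widehat R_k^{\rm pool}-R_k|$, then the chain from $\widehat R_k^{\rm pool}$ to $A_k=\mathbb E_{Q_X}g_k$, to your $\bar R_k$, to the without-replacement average $U_k$, to $R_k$. Each link goes through $\mathbb E\max_k|\xi_k|\le\sigma\sqrt{2\log(2M)}$. For $\beta$ you compare with a population rule, convert via $1/c$, and use $\max_k\le\sum_k$.

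Two bookkeeping steps, as written, miss the stated constants.
\begin{itemize}
\item The outer factor $2$ in $\gamma$ comes only from the selection reduction. The collision step is an exact convex combination: $\bar R_k=(1-\varepsilon_{\rm rep})U_k+\varepsilon_{\rm rep}W_k$ with $W_k\in[0,L]$, so $|\bar R_k-U_k|\le L\varepsilon_{\rm rep}$. ``Doubling for coupling'' followed by your final factor $2$ would give $4L\varepsilon_{\rm rep}$.
\item In the input-pool step, bounding $|w(X'_j)\{g_k(X'_j)-A_k\}|$ by $w_+L$ gives a Hoeffding range of length $2w_+L$ and doubles the first term. You need that the summands lie in $[-w_+A_k,\,w_+(L-A_k)]$, an interval of length $w_+L$, before dividing by the mean weight, which is at least $w_-$.
\end{itemize}

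The $\beta$ part is the right plan, but it leaves out the two devices that produce the constants $16$, $32$ and $4$.
\begin{itemize}
\item DKW is stated for unweighted empirical CDFs. For the self-normalized weighted score CDF with arbitrary ties, the paper proves $\mathbb E\|\widehat H_n-H_R\|_\infty\le8\kappa/\sqrt n$. It symmetrizes, sorts by score, and applies Doob's $L^2$ maximal inequality to the signed weighted partial sums, getting $\mathbb E D_n\le4w_+/\sqrt n$; normalization then costs a factor $2/w_-$. This is applied to a fresh sample, to the pool, and to each bag given the pool.
\item A ``threshold moves by a score level'' picture does not control atoms of the score. Atoms are typical here, since histogram scores are piecewise constant. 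The paper uses the randomization explicitly: it sets $U=1-V$ and rewrites inclusion as $H_T(S-)+V\{H_T(S)-H_T(S-)\}+(V-p)w(X)/\sum_iw(X_i)<p$. The left side is within $\delta_T+\kappa/m$ of $H(S-)+V\{H(S)-H(S-)\}$, which is exactly uniform on $[0,1]$ under $Q\otimes\mathrm{Unif}$. So the calibrated and population inclusion events disagree with probability at most $2(\delta_T+\kappa/m)$, and $f\ge c$ turns this into $|\ell_s(T)-\ell_s^\circ|\le\frac2c(\delta_T+\kappa/m)$.
\end{itemize}
Applying this to a fresh sample, and averaging it over the $r$ bags, gives $\frac2c(8\kappa/\sqrt m+\kappa/m)+\frac2c(8\kappa/\sqrt{n_s}+8\kappa/\sqrt m+\kappa/m)$. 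Adding $L\kappa/\sqrt q$ for the input pool gives exactly $\beta_{n_s,q,m}$, uniformly in $r$.
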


The proof is in Appendix~\ref{app:pool-proof}.
The four terms in \eqref{eq:pool-finite-m-budget} account for the
input pool, the finite number of bootstrap samples, the labeled pool,
and repeated indices within a bootstrap sample.
For fixed $m$ and $M$, this bound tends to zero as $n_s,q,r\to\infty$.
The complementary bound \eqref{eq:pool-growing-m-bound} instead
allows fixed $r$ when $n_s,q,m$ grow and $f\ge c$.
Neither bound is uniformly smaller than the other; the latter
remains useful for the experimental allocation $n_s=m$, $r=30$.
}
}

{
\subsection{Oracle-length consistency at fixed calibration size}
\label{sec:forest-fixed-m-consistency}

{
{
\begin{corollary}[Forest kernel histograms at fixed calibration size]
\label{cor:forest-fixed-m-oracle}
Fix $P,Q,w,p$ and $\mathcal Y=[a,b]$, with $L=b-a\in(0,\infty)$.
Assume the oracle positivity condition on $f$ and the Lipschitz
condition in \eqref{eq:regularity}, with $K$ independent of $n$.
Fit the forest weights independently of calibration and test data. Let $d_n$ be as in
\eqref{eq:forest-cdf-error}. For deterministic integers $B_n\ge1$
and bandwidths $h_n>0$, suppose
\begin{equation}\label{eq:forest-fixed-m-growth}
 B_n\to\infty,\qquad h_n\to0,\qquad B_nd_n\to0.
\end{equation}
For deterministic $\lambda_n\in[0,1)$ common to all $(x,y)$,
use \eqref{eq:kde-histogram} to construct
$s_n=s_{h_n,B_n,\lambda_n}$, and calibrate $C_n=C_{s_n}$ on a
fresh sample of fixed size $m\ge1$. Then
\begin{equation}\label{eq:forest-fixed-m-oracle}
 0\le\mathbb E_{\rm train}[R_m(C_n)-R_m(C^*)]
 \le mL\mathcal A^{\rm ker}_{n,h_n,B_n}\longrightarrow0.
\end{equation}
Here $R_m$ is conditional on training, and $C^*$ is the same
finite-$m$ oracle for every $n$. Conditional on training, $C_n$
has target marginal coverage exactly $p$ at every $n$.
\end{corollary}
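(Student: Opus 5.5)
The corollary should follow directly from Proposition~\ref{prop:forest-density} combined with the growth conditions \eqref{eq:forest-fixed-m-growth}. The steps are to verify the hypotheses of that proposition at each $n$, read off the middle term of the minimum in \eqref{eq:forest-kernel-length-bound}, and then show that the budget $\mathcal A^{\rm ker}_{n,h_n,B_n}$ tends to zero.

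First, fix $n$. The forest weights $a_i(x)$ in \eqref{eq:forest-training-weights} are nonnegative and sum to one, and they are fitted independently of calibration and test data. The integers $B_n\ge1$, bandwidths $h_n>0$, and $\lambda_n\in[0,1)$ are deterministic and common to all $(x,y)$. So Proposition~\ref{prop:forest-density} applies with $(h,B,\lambda)=(h_n,B_n,\lambda_n)$. Its bound \eqref{eq:forest-kernel-length-bound} contains the term $mL\mathcal A^{\rm ker}_{n,h_n,B_n}$ inside a minimum, which gives
\[
 0\le\mathbb E_{\rm train}[R_m(C_n)-R_m(C^*)]\le mL\mathcal A^{\rm ker}_{n,h_n,B_n}.
\]
This term comes ultimately from the normalized-estimate bound $mVe$ of Proposition~\ref{prop:fixed-m-density} with $V=L$. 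It applies because $\widetilde f_{h,B}$ integrates to one: each kernel $k_h(\cdot,t)$ is normalized on $[a,b]$ and the weights sum to one. The identity \eqref{eq:uniform-mixture-invariance} shows the rule with score $s_{h,B,\lambda}$ coincides with $C_{-\widetilde f_{h,B}}$. So $\lambda_n$ plays no role, and $C^*$ is the same finite-$m$ oracle throughout.

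Second, show the budget vanishes. By \eqref{eq:forest-density-budgets},
\[
 \mathcal A^{\rm ker}_{n,h_n,B_n}
 =2(B_n-1)d_n+\sqrt{2/\pi}\,(LK+2M_f)h_n+\frac{KL^2}{3B_n}.
\]
The constants $L$, $K$, and $M_f\le L^{-1}+KL/2$ do not depend on $n$, since $P,Q$ and $K$ are fixed. So the three terms vanish by $B_nd_n\to0$, $h_n\to0$, and $B_n\to\infty$ respectively. Multiplying by the fixed factor $mL$ preserves the limit zero.

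Third, coverage. Conditional on training, $s_n$ is a fixed measurable finite score. The densities are positive, and even $-\widetilde f$ is a valid finite score. Proposition~\ref{wo:exact} then gives exact coverage $p$ under every $H\in\mathcal H_w$, and in particular under $P^m\times Q$ by Lemma~\ref{lem:covariate-shift-inclusion}.

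There is little real obstacle, since all the analytic work sits in Proposition~\ref{prop:forest-density}. The points needing care are confirming that the normalization hypothesis of Proposition~\ref{prop:fixed-m-density} holds almost surely for the forest kernel histogram, and that the constants in $\mathcal A^{\rm ker}$ are uniform in $n$. Once those are checked, the conclusion is immediate.
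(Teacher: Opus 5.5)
Your proposal is correct and follows the paper's proof. Like the paper, you read the bound $mL\mathcal A^{\rm ker}_{n,h_n,B_n}$ off Proposition~\ref{prop:forest-density}, observe that all three budget terms vanish under \eqref{eq:forest-fixed-m-growth} because $M_f\le L^{-1}+KL/2$ does not depend on $n$, and obtain exact coverage from Proposition~\ref{wo:exact}. Your extra checks, the normalization of $\widetilde f_{h,B}$ and the $\lambda$-invariance \eqref{eq:uniform-mixture-invariance}, are correct and are already used inside the proof of Proposition~\ref{prop:forest-density}.
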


The proof is in Appendix~\ref{app:forest-fixed-m-proof}.
No uniform positive lower bound on $f$ is required.
The condition $B_nd_n\to0$ limits bin growth relative to CDF error.
}

{
\paragraph{Selection at fixed calibration size.}
\label{sec:forest-fixed-m-selection}
{
The next corollary adds pooled selection, allowing the candidate
library and selection data to grow while the final calibration size
remains fixed.
}

\begin{corollary}[Oracle-length consistency after selection at fixed $m$]
\label{cor:forest-selected-fixed-m}
Fix $P,Q$, the known density ratio $w$, the coverage level $p$, and
$\mathcal Y=[a,b]$ with $L=b-a\in(0,\infty)$.
Assume \eqref{eq:pool-weight-bound}. Suppose $f>0$
$(Q_X\otimes\mu)$-almost everywhere and
$|f(y\mid x)-f(t\mid x)|\le K|y-t|$ on $[a,b]$, with a constant
$K$ independent of the training size $n$.
Let the forest weights be fitted independently of the selection,
final calibration, and test data, and suppose
$d_n\to0$ in \eqref{eq:forest-cdf-error}.

For deterministic integers $J_n\ge1$ and each $j=1,\ldots,J_n$,
include the kernel histogram score
in \eqref{eq:kde-histogram} with
\begin{equation}\label{eq:forest-fixed-m-library}
 B_j=2^j,\qquad h_j=L2^{-j},\qquad \lambda\in[0,1),
\end{equation}
using these weights. Within each candidate, $\lambda$ is common to
all $(x,y)$. Other training-fitted scores may also be included;
let $M_n$ be the deterministic total number of candidates.
For a fixed integer $m\ge1$, use \eqref{eq:pool-selection} with
pool sizes $n_s=n_s(n)$ and $q=q(n)$, and $r=r_n$ bootstrap samples.
If
\begin{equation}\label{eq:forest-selected-fixed-m-growth}
 J_n\longrightarrow\infty,
 \qquad
 \log(2M_n)\left(\frac1{n_s}+\frac1q+\frac1{r_n}\right)
 \longrightarrow0,
\end{equation}
then
\begin{equation}\label{eq:forest-selected-fixed-m}
 \mathbb E_{\rm train,sel}
 \left[R_m(C_{s_{\widehat k_{\rm pool}}})-R_m(C^*)\right]
 \longrightarrow0.
\end{equation}
The comparator is the same finite-$m$ oracle for every $n$.
Independent final calibration gives target marginal coverage exactly
$p$ at every training size, conditional on training and selection.
\end{corollary}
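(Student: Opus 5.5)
The plan is to combine the end-to-end selection bound \eqref{eq:pool-end-to-end} of Theorem~\ref{thm:pool-selection} with the fixed-$m$ kernel bounds of Proposition~\ref{prop:forest-density}. Conditional on training, \eqref{eq:pool-end-to-end} gives
\[
 \mathbb E_{\rm train,sel}\bigl[R_m(C_{s_{\widehat k_{\rm pool}}})-R_m(C^*)\bigr]
 \le\mathbb E_{\rm train}\min_{k}\bigl[R_m(C_{s_k})-R_m(C^*)\bigr]
 +\gamma_{n_s,q,m,r_n,M_n}.
\]
The proof then has two parts: show that the selection budget $\gamma$ vanishes, and show that the best candidate's excess length vanishes.

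\textbf{Step 1: the selection budget.} Here $m$ and $\kappa$ are fixed. Each square-root term in \eqref{eq:pool-finite-m-budget} tends to zero by \eqref{eq:forest-selected-fixed-m-growth}. For the labeled-pool term I will use $\lfloor n_s/m\rfloor\ge n_s/(2m)$ once $n_s\ge 2m$. The condition also forces $n_s\to\infty$, because $\log(2M_n)\ge\log2$. Hence $\varepsilon_{\rm rep}\le m(m-1)/(2n_s)\to0$, and $n_s\ge m$ holds eventually.

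\textbf{Step 2: the oracle term.} I will bound the minimum over candidates by a single library candidate $j=j_n\le J_n$, chosen deterministically, and then move $\mathbb E_{\rm train}$ inside. Proposition~\ref{prop:forest-density} gives
\[
 \mathbb E_{\rm train}[R_m(C_{s_{j}})-R_m(C^*)]\le mL\,\mathcal A^{\rm ker}_{n,h_j,B_j},
 \qquad
 \mathcal A^{\rm ker}_{n,h_j,B_j}\le 2^{j+1}d_n+\sqrt{2/\pi}(LK+2M_f)L2^{-j}+\frac{KL^2}{3}2^{-j}.
\]
The bound $M_f\le L^{-1}+KL/2$ is finite. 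Note that Proposition~\ref{prop:forest-density} needs only the Lipschitz half of \eqref{eq:regularity}, not the lower bound $c$. The choice of $j_n$ must make $2^{j_n}\to\infty$ while $2^{j_n}d_n\to0$, and must keep $j_n\le J_n$. I will take
\[
 j_n=\min\Bigl\{J_n,\ \bigl\lfloor\tfrac12\log_2(1/d_n)\bigr\rfloor\Bigr\}
\]
when $d_n>0$, and $j_n=J_n$ when $d_n=0$, with $j_n\ge1$ enforced. Then $2^{j_n}d_n\le\sqrt{d_n}\to0$. Since both $J_n\to\infty$ and $\log_2(1/d_n)\to\infty$, we also get $j_n\to\infty$, so the $2^{-j_n}$ terms vanish. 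This step is essentially Corollary~\ref{cor:forest-fixed-m-oracle} applied along the subsequence $B_n=2^{j_n}$, $h_n=L2^{-j_n}$.

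\textbf{Main obstacle.} The delicate point is the interchange
\[
 \mathbb E_{\rm train}\min_k\le\min_k\mathbb E_{\rm train}.
\]
This is legitimate only because $j_n$ is deterministic: it depends on $d_n$, which is an expectation, and on $J_n$. The point must be stated explicitly, since the index actually selected is random. I also need to check measurability of the minimum over the finite library, and that \eqref{eq:pool-end-to-end} applies with deterministic $M_n$ and pool sizes, which the statement assumes. The extra training-fitted candidates only decrease the minimum, so they are harmless.

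\textbf{Conclusion.} Combining the two parts gives \eqref{eq:forest-selected-fixed-m}. Exact coverage after independent final calibration is the final claim of Theorem~\ref{thm:pool-selection}, via Proposition~\ref{wo:exact}.
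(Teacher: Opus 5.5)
Your proposal is correct and follows the paper's proof. Both arguments combine \eqref{eq:pool-end-to-end} with the $mL\,\mathcal A^{\rm ker}_{n,h_j,B_j}$ bound from Proposition~\ref{prop:forest-density} (through Proposition~\ref{prop:fixed-m-density}) for the candidates with $B_j=2^j$ and $h_j=L2^{-j}$, and both check that every term of $\gamma_{n_s,q,m,r_n,M_n}$ vanishes once $n_s\ge m$. The only difference is cosmetic: you choose an explicit deterministic diagonal index $j_n$ with $2^{j_n}d_n\le\sqrt{d_n}$, whereas the paper fixes $j$, lets $n\to\infty$, and then lets $j\to\infty$.
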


{
The proof is in Appendix~\ref{app:forest-selected-fixed-m-proof}.
No rate for $d_n$ is needed: one permitted choice is
$M_n=O(\log n)$, $n_s,q$ proportional to $n$, and $r_n$ proportional
to $\log n$.
The required $r_n\to\infty$ excludes fixed $r=30$.
Appendix~\ref{sec:forest-selection-consistency} treats growing $m$
with fixed $r$; Appendix~\ref{sec:independent-selection} treats
independent calibration samples.
}
}

\section{Experiments}\label{sec:experiments}

{
The main benchmark uses randomized weighted calibration with target
coverage $p=0.9$. Section~\ref{sec:dcp-shift10} and
Appendices~\ref{app:hpd-low-coverage} and~\ref{sec:coverage-joint}
report additional coverage levels and shift settings.
}

\subsection{Known-density simulations}\label{sec:synthetic}

Calibration data follow $P$, with $P(X=0)=P(X=1)=1/2$;
test data follow $Q$, with $Q(X=0)=1/(1+\rho)$ and
$Q(X=1)=\rho/(1+\rho)$. Both laws have the same conditional
response distribution. Weighted calibration uses
\[
 w(0)=\frac{2}{1+\rho},
 \qquad
 w(1)=\frac{2\rho}{1+\rho}.
\]

Let $f_0(y)$ and $f_1(y)$ denote the conditional densities of $Y$ given
$X=0$ and $X=1$. For $0\le y\le1$, define the beta density
\[
b_{a,b}(y)
=\frac{y^{a-1}(1-y)^{b-1}}
{\displaystyle\int_0^1 t^{a-1}(1-t)^{b-1}\,dt}.
\]
We use the following four pairs of conditional densities.

\begin{center}
\small
\setlength{\tabcolsep}{5pt}
\renewcommand{\arraystretch}{1.1}
\begin{tabular}{@{}lll@{}}
\toprule
Case & $f_0(y)$ & $f_1(y)$ \\
\midrule
Asymmetric unimodal & $0.1+0.9\,b_{2,5}(y)$ & $0.1+0.9\,b_{8,2}(y)$ \\
Heteroscedastic & $0.1+0.9\,b_{2,2}(y)$ & $0.1+0.9\,b_{15,15}(y)$ \\
Separated modes & $0.1+0.9\,b_{2,5}(y)$ & $0.1+0.45\,b_{3,15}(y)+0.45\,b_{15,3}(y)$ \\
Unequal mode weights & $0.1+0.9\,b_{3,7}(y)$ & $0.1+0.25\,b_{3,15}(y)+0.65\,b_{15,3}(y)$ \\
\bottomrule
\end{tabular}
\end{center}

Here $\mathcal{Y}=[0,1]$, and each density has a uniform component
of weight $0.1$.

We use $n$ training observations with $X=0$ and $n$ with $X=1$, where $n\in\{500,2000,8000\}$. We set $\rho=3$ and $m=200$. The seven candidate conditional-density estimates are four
empirical-quantile histograms with $B\in\{10,20,40,80\}$ bins
and three kernel histograms with bandwidths
$h\in\{0.01,0.025,0.05\}$, each using 80 equal-width bins
on $[0,1]$. Further implementation details are given in
Appendix~\ref{app:synthetic}.

{
For kernel estimation of $f(y\mid g)$ at $x=g\in\{0,1\}$,
use $a_i(g)=\1\{X_i=g\}/n$: weight $1/n$ within group $g$
and zero outside it.
}

For length selection, calibrate each fixed candidate on 30 independent
samples of size $m$ from $P$. For each sample,
\eqref{eq:inclusion}--\eqref{eq:integrated-length} give the lengths
at $X=0$ and $X=1$, averaged over $U$. Weight these by $1/4$ and
$3/4$, their target probabilities at $\rho=3$, then average over the
30 samples and select the smallest mean. Integration over $X$ and
$U$ is exact; the 30 samples approximate the calibration expectation.

For fitted densities $\widehat f_k$, $k=1,\ldots,7$, and the same
samples $T_\ell=\{(X_{\ell i},Y_{\ell i})\}_{i=1}^{m}$,
$\ell=1,\ldots,30$, negative log-likelihood (NLL) selection uses
{
\begin{equation}\label{eq:nll-selection}
\begin{aligned}
 \widehat{\mathrm{NLL}}_k
 &=-\frac{1}{30m}\sum_{\ell=1}^{30}\sum_{i=1}^{m}
 w(X_{\ell i})\log\widehat f_k(Y_{\ell i}\mid X_{\ell i}),
 \qquad k=1,\ldots,7,\\
 \widehat k_{\mathrm{NLL}}
 &\in\operatorname*{arg\,min}_{1\le k\le7}
 \widehat{\mathrm{NLL}}_k.
\end{aligned}
\end{equation}
}
NLL evaluates density fit without temporary calibration. Both selectors
break ties by the smallest index and calibrate the selected reciprocal
density score on the same fresh sample of $m$ observations from $P$.
Training, selection, and final calibration are independent.

For all four cases, we also vary $\rho\in\{1,3,9\}$ at $n=2000$ and $m=200$. For the asymmetric and separated-mode cases, we vary $m\in\{50,200,800\}$ at $n=2000$ and $\rho=3$. This gives 24 distinct configurations, each repeated independently 50 times. Within each repetition, all methods share the same data.
Across training sizes, training samples are nested, while
the selection and final calibration samples remain unchanged.

{
The oracle $C^*$ uses the true density and the same final calibration
sample. We integrate length and coverage over $Q$ and randomization,
then report means and standard errors over 50 repetitions, with paired
oracle differences. Appendix~\ref{app:synthetic} gives numerical details.
}
At $n=8000$, the primary selector's paired oracle gaps are
$0.00439$ (SE $0.00138$), $-0.00030$ ($0.00121$),
$0.00298$ ($0.00180$), and $0.00164$ ($0.00251$)
(Table~\ref{tab:synthetic-length}; Figure~\ref{fig:oracle-gap}).
All four gaps are smaller at $n=8000$ than at $n=500$, without
necessarily decreasing at intermediate sizes. Fixed bin counts and
bandwidths do not ensure oracle convergence. Negative gaps can arise from
variation across the 50 repetitions: oracle optimality averages over
calibration samples and need not hold for each realized sample.

For separated and unequal modes, our selector's mean lengths are
$0.5746$ and $0.5583$, versus $0.7920$ and $0.7647$ for DCP-opt.
Our sets average $1.762$ and $1.776$ disjoint intervals; length excludes
the gaps. Gao-style two-component lengths are $0.5804$ and $0.5676$,
and HPD is competitive. NLL selection is slightly shorter in all four
$n=8000$ settings. 

\begin{table}[!htbp]
\centering\small
\setlength{\tabcolsep}{4pt}
\begin{tabular}{@{}lrrrr@{}}
\toprule
Method & Asymmetric & Heterosc. & Separated & Unequal \\
\midrule
Known-density oracle & 0.4603 (0.0060) & 0.4663 (0.0055) & 0.5716 (0.0066) & 0.5566 (0.0067) \\
Density: length selection & 0.4647 (0.0062) & 0.4660 (0.0054) & 0.5746 (0.0069) & 0.5583 (0.0064) \\
Density: NLL selection & 0.4639 (0.0060) & 0.4644 (0.0055) & 0.5740 (0.0066) & 0.5563 (0.0066) \\
Density: one tuning bag & 0.4687 (0.0080) & 0.4709 (0.0063) & 0.5770 (0.0069) & 0.5649 (0.0073) \\
CTI: fixed $B=20$ & 0.4702 (0.0060) & 0.4726 (0.0072) & 0.6092 (0.0076) & 0.5984 (0.0078) \\
Density: quantiles only & 0.4645 (0.0060) & 0.4667 (0.0058) & 0.5766 (0.0068) & 0.5686 (0.0069) \\
Density: kernels only & 0.4633 (0.0060) & 0.4652 (0.0055) & 0.5749 (0.0068) & 0.5560 (0.0067) \\
HPD score & 0.4627 (0.0061) & 0.4673 (0.0058) & 0.5780 (0.0070) & 0.5578 (0.0065) \\
PIT score & 0.5208 (0.0076) & 0.4676 (0.0062) & 0.8006 (0.0027) & 0.7698 (0.0029) \\
DCP-opt score & 0.4636 (0.0058) & 0.4682 (0.0060) & 0.7920 (0.0026) & 0.7647 (0.0032) \\
CQR score & 0.5381 (0.0034) & 0.4670 (0.0056) & 0.8052 (0.0022) & 0.7775 (0.0024) \\
Gao-style, $k=2$ & 0.4631 (0.0062) & 0.4692 (0.0065) & 0.5804 (0.0070) & 0.5676 (0.0068) \\
\bottomrule
\end{tabular}
\caption{Synthetic expected lengths at $n=8{,}000$ {training} labels per group, $m=200$, ratio $3$, and $p=0.9$. Parentheses are standard errors across 50 independent repetitions. The oracle is the exact finite-$m$ weighted density rule, not the population highest-density set. All fitted methods use the same data within a repetition.}
\label{tab:synthetic-length}
\end{table}

\begin{figure}[!htbp]
\centering
\includegraphics[width=\linewidth]{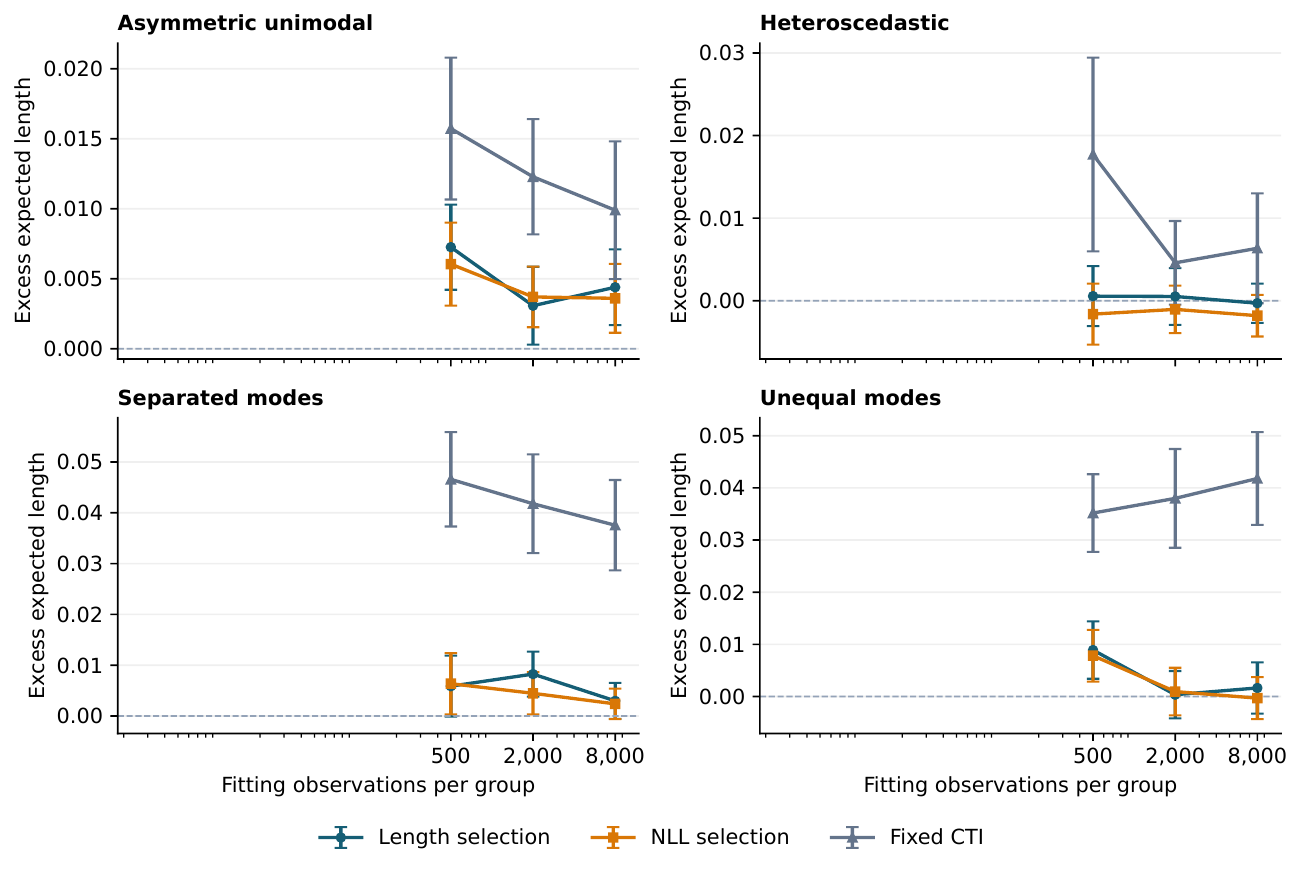}
\caption{Excess expected length above the weighted finite-$m$ oracle
at $m=200$ and shift ratio $3$. Bars are $1.96$ times the standard error
of the paired oracle difference, computed from 50 independent
repetitions. The zero line denotes oracle risk. {Training} observations
are counted per group; design and final calibration sizes stay fixed.}
\label{fig:oracle-gap}
\end{figure}

{
\subsection{Fixed calibration with densities approaching zero}
\label{sec:vanishing-density}

We remove the uniform component from two cases in
Section~\ref{sec:synthetic}:
\[
\begin{array}{lll}
\text{Asymmetric:}&f_0=b_{2,5},&f_1=b_{8,2},\\[2pt]
\text{Separated:}&f_0=b_{2,5},&f_1=\tfrac12b_{3,15}+\tfrac12b_{15,3}.
\end{array}
\]
These Lipschitz densities are positive on $(0,1)$ but vanish at both
endpoints. Fix $P(X=1)=1/2$, $Q(X=1)=3/4$, $p=0.9$, and $m=200$.
{
For $n\in\{250,500,1000,2000,4000,8000\}$, use $n$ training
responses per group (total $2n$), the two independent selection
pools in \eqref{eq:selection-pools} with $n_s=q=n$,
independent of training, and $r_n=\lceil n/25\rceil$
bootstrap calibration samples of size $m$ drawn from the
labeled pool.
} Selection uses
\eqref{eq:pool-selection}, including the observed input-pool group
counts and self-normalized weights $(1,3)$.

Using weight $1/n$ within each input group, construct
\[
J_n=\left\lceil\frac{\log_2n}{3}\right\rceil+2
\]
kernel-histogram candidates as in Section~\ref{sec:kde}, with
\[
B_j=2^j,\qquad h_j=2^{-j},\qquad 1\le j\le J_n.
\]
Set the uniform-mixture weight in \eqref{eq:kde-histogram}
to $\lambda=10^{-6}$ for every candidate.

{
The within-group CDF error $d_n$ in \eqref{eq:forest-cdf-error}
satisfies $d_n=O(n^{-1/2})$ by DKW--Massart
\citep{massart1990dkw}; see \eqref{eq:group-cdf-rate}.
For $j_n=\lceil(\log_2n)/3\rceil$,
\[
n^{1/3}\le B_{j_n}<2n^{1/3},
\qquad h_{j_n}=B_{j_n}^{-1}\to0,
\qquad B_{j_n}d_n=O(n^{-1/6})\to0.
\]
Thus these candidates satisfy
\eqref{eq:forest-fixed-m-growth}.
}
With $M_n=J_n$, $n_s=q=n$, and $r_n=\lceil n/25\rceil$,
the selection condition
\eqref{eq:forest-selected-fixed-m-growth}
in Corollary~\ref{cor:forest-selected-fixed-m} also holds.
The proof in Appendix~\ref{app:forest-selected-fixed-m-proof}
applies to the within-group weights used here.

We include two controls: selection from the same library and pools
with only $r=30$ bootstrap samples, and direct use of candidate
$j_n$ without selection.

{
We use 50 independent repetitions, nested training/pool prefixes,
and shared initial bootstrap draws. All sizes and methods share final
calibration and the finite-$m$ oracle; evaluation follows
Section~\ref{sec:synthetic}. Serial one-thread times include fitting,
selection, calibration, and prediction volume, excluding data generation,
true-density coverage/oracle integration, and output. Selectors fit the
whole library; the prescribed method fits only $j_n$, without selection.
}

{
Figure~\ref{fig:vanishing-density} shows gaps decreasing
} from $0.00745$ to $-0.00108$ and from $0.01381$
to $0.00090$. {Both final gaps are within $1.96$ standard errors of zero.}
Final coverages are $0.9020$ (SE $0.0034$) and $0.9083$ ($0.0029$),
versus oracle coverages $0.9034$ ($0.0033$) and $0.9085$ ($0.0028$).
At $n=8000$, growing-$r_n$ selection takes $0.228/0.230$ seconds,
versus $0.195/0.201$ for $r=30$. Its paired length differences from
$r=30$ are $-0.00057$ (SE $0.00030$) and $0.00066$ ($0.00050$),
with no clear advantage. The prescribed estimator takes $0.025/0.026$
seconds, with mean oracle gaps $0.00500/0.00258$.
{
The complete numerical table is retained in the supplementary result files.
}

\begin{figure}[!htbp]

\centering
\includegraphics[width=\linewidth]{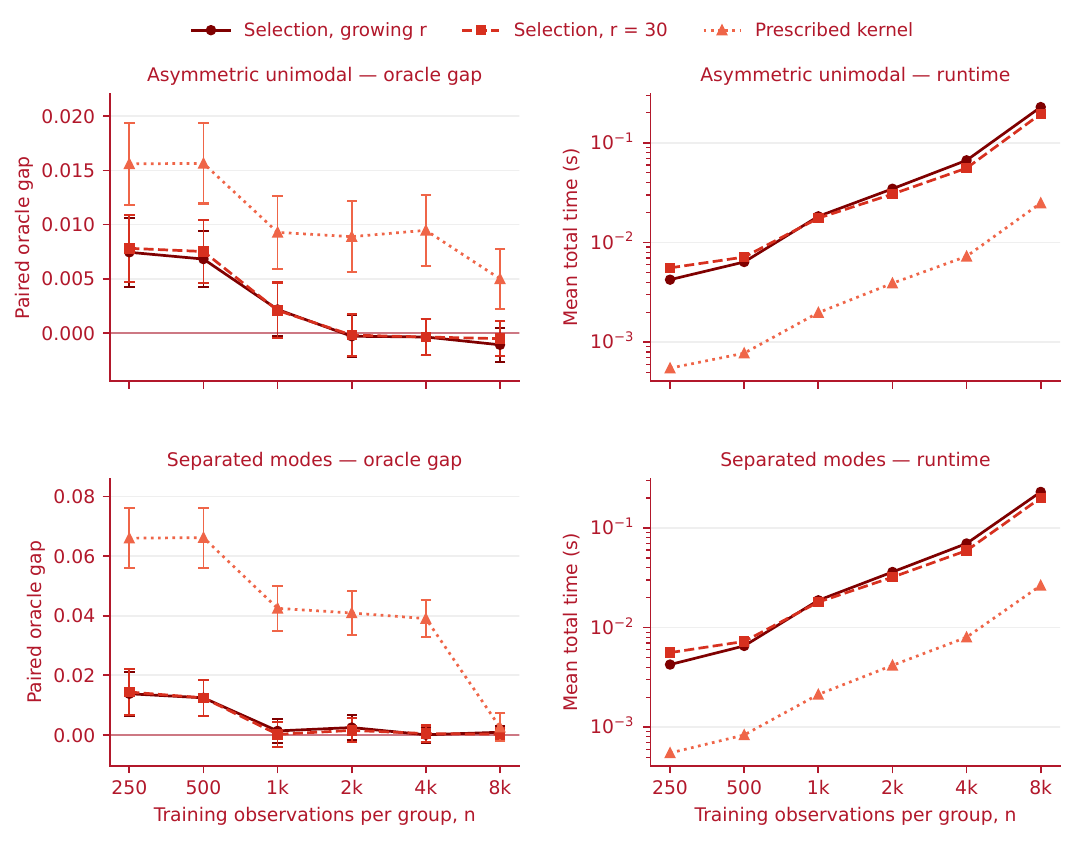}
\caption{{Fixed-$m$ comparison of growing-$r_n$ and
$r=30$ selection with the prescribed estimator (no selection).
Left: paired oracle gaps with $1.96$ Monte Carlo standard-error bars
over 50 repetitions. Right: mean total time on a logarithmic scale.
The design and timing scope are specified in the text.}}
\label{fig:vanishing-density}
\end{figure}
}

\subsection{Public regression datasets under designed shift}\label{sec:real-data}

{
Four public datasets use 30 shared splits, allocating approximately
$40/15/10/15/20\%$ to training, the labeled pool, the input pool,
final calibration, and testing; the pools are defined in
Section~\ref{sec:pool-selection}. Lengths use the transformed $[0,1]$
scale in \eqref{eq:response-sigmoid}, with training-fitted $c,a$.
Relative shift weights are $(1,3)$, with $(1,1)$ as a control;
Section~\ref{sec:dcp-shift10} uses $(1,10)$.
Appendix~\ref{sec:real-protocol} gives the full protocol.
}

{
The primary method selects among 12 forest-weighted quantile and
kernel histograms. HPD, PIT, DCP-opt, and CQR use the same fits.
Ablations change the selection criterion, use fixed CTI, or restrict
the histogram family. Other baselines are reference-classification
density estimation (WQ) \citep{wang2025density}, two-interval DP
\citep{gao2025volume}, and fixed or tuned Ad-EffOrt
\citep{lebars2025volume}.
}

{
Length selection uses \eqref{eq:pool-selection} with $r=30$:
labels from the $15\%$ pool and inputs from the $10\%$ pool.
NLL uses \eqref{eq:nll-selection} on labels from both pools ($25\%$).
Candidate counts are 12 for the full histogram library, two for
selected WQ, six for DP, and three for tuned Ad-EffOrt.
}

\begin{table}[!htbp]
\centering\small
\setlength{\tabcolsep}{4pt}
\renewcommand{\arraystretch}{1.08}
\begin{tabular}{@{}lrrrr@{}}
\toprule
Method & Power Plant & Concrete & Real Estate & Yacht \\
\midrule
Density: length selection & \shortstack[r]{0.0912 (0.0015)\\0.8998 (0.0021)} & \shortstack[r]{0.2303 (0.0037)\\0.9002 (0.0055)} & \shortstack[r]{0.2345 (0.0047)\\0.9026 (0.0071)} & \shortstack[r]{0.0870 (0.0045)\\0.9009 (0.0134)} \\
\addlinespace[2pt]
Density: NLL selection & \shortstack[r]{0.0893 (0.0005)\\0.8973 (0.0021)} & \shortstack[r]{0.2354 (0.0039)\\0.9024 (0.0052)} & \shortstack[r]{0.2414 (0.0058)\\0.9101 (0.0060)} & \shortstack[r]{0.0937 (0.0058)\\0.8947 (0.0159)} \\
\addlinespace[2pt]
Density: one tuning bag & \shortstack[r]{0.0913 (0.0015)\\0.9000 (0.0021)} & \shortstack[r]{0.2289 (0.0034)\\0.8997 (0.0053)} & \shortstack[r]{0.2382 (0.0047)\\0.9027 (0.0073)} & \shortstack[r]{0.0915 (0.0048)\\0.9003 (0.0136)} \\
\addlinespace[2pt]
CTI: fixed QRF & \shortstack[r]{0.1023 (0.0005)\\0.8994 (0.0022)} & \shortstack[r]{0.3088 (0.0044)\\0.8954 (0.0061)} & \shortstack[r]{0.3056 (0.0086)\\0.9003 (0.0083)} & \shortstack[r]{0.2112 (0.0111)\\0.8967 (0.0172)} \\
\addlinespace[2pt]
Density: quantiles only & \shortstack[r]{0.0978 (0.0019)\\0.9013 (0.0022)} & \shortstack[r]{0.2429 (0.0036)\\0.8961 (0.0050)} & \shortstack[r]{0.2943 (0.0185)\\0.8975 (0.0089)} & \shortstack[r]{0.1008 (0.0065)\\0.8899 (0.0157)} \\
\addlinespace[2pt]
Density: kernels only & \shortstack[r]{0.0893 (0.0005)\\0.8973 (0.0021)} & \shortstack[r]{0.2310 (0.0040)\\0.9015 (0.0054)} & \shortstack[r]{0.2361 (0.0045)\\0.9037 (0.0066)} & \shortstack[r]{0.0821 (0.0036)\\0.9029 (0.0141)} \\
\addlinespace[2pt]
HPD score & \shortstack[r]{0.0891 (0.0005)\\0.8974 (0.0021)} & \shortstack[r]{0.2195 (0.0039)\\0.9011 (0.0051)} & \shortstack[r]{0.2282 (0.0050)\\0.8990 (0.0078)} & \shortstack[r]{0.0729 (0.0035)\\0.8981 (0.0168)} \\
\addlinespace[2pt]
PIT score & \shortstack[r]{0.0892 (0.0006)\\0.8984 (0.0023)} & \shortstack[r]{0.2183 (0.0037)\\0.8961 (0.0060)} & \shortstack[r]{0.2277 (0.0054)\\0.8988 (0.0084)} & \shortstack[r]{0.1032 (0.0073)\\0.8864 (0.0145)} \\
\addlinespace[2pt]
DCP-opt score & \shortstack[r]{0.0883 (0.0005)\\0.8980 (0.0022)} & \shortstack[r]{0.2132 (0.0038)\\0.8956 (0.0060)} & \shortstack[r]{0.2314 (0.0069)\\0.8961 (0.0078)} & \shortstack[r]{0.0910 (0.0047)\\0.8865 (0.0148)} \\
\addlinespace[2pt]
CQR score & \shortstack[r]{0.0902 (0.0006)\\0.8967 (0.0022)} & \shortstack[r]{0.2346 (0.0036)\\0.8993 (0.0066)} & \shortstack[r]{0.2384 (0.0047)\\0.8961 (0.0082)} & \shortstack[r]{0.1942 (0.0088)\\0.9017 (0.0112)} \\
\addlinespace[2pt]
{WQ: fixed} & \shortstack[r]{{0.1046 (0.0006)}\\{0.8987 (0.0022)}} & \shortstack[r]{{0.3072 (0.0069)}\\{0.8905 (0.0069)}} & \shortstack[r]{{0.2872 (0.0083)}\\{0.8959 (0.0103)}} & \shortstack[r]{{0.2164 (0.0106)}\\{0.8765 (0.0189)}} \\
\addlinespace[2pt]
{WQ: selected} & \shortstack[r]{{0.1046 (0.0006)}\\{0.8987 (0.0022)}} & \shortstack[r]{{0.3072 (0.0069)}\\{0.8905 (0.0069)}} & \shortstack[r]{{0.2963 (0.0079)}\\{0.9009 (0.0093)}} & \shortstack[r]{{0.2164 (0.0106)}\\{0.8765 (0.0189)}} \\
\addlinespace[2pt]
Gao-style, $k=2$ & \shortstack[r]{0.0914 (0.0006)\\0.9013 (0.0024)} & \shortstack[r]{0.2311 (0.0052)\\0.8987 (0.0047)} & \shortstack[r]{0.2893 (0.0172)\\0.8972 (0.0090)} & \shortstack[r]{0.0951 (0.0058)\\0.9111 (0.0124)} \\
\addlinespace[2pt]
Ad-EffOrt: fixed & \shortstack[r]{0.1033 (0.0005)\\0.8987 (0.0019)} & \shortstack[r]{0.2233 (0.0038)\\0.9046 (0.0048)} & \shortstack[r]{0.2754 (0.0068)\\0.9013 (0.0095)} & \shortstack[r]{0.1126 (0.0045)\\0.9063 (0.0127)} \\
\addlinespace[2pt]
Ad-EffOrt: tuned & \shortstack[r]{0.1033 (0.0005)\\0.8987 (0.0019)} & \shortstack[r]{0.2130 (0.0037)\\0.9016 (0.0063)} & \shortstack[r]{0.2704 (0.0073)\\0.9037 (0.0084)} & \shortstack[r]{0.1047 (0.0040)\\0.9039 (0.0142)} \\
\bottomrule
\end{tabular}
\caption{Main real-data results at shift ratio $3$ and nominal coverage $0.9$. Each cell gives expected length on the first line and coverage on the second line, both as mean (split standard error) over 30 shared splits. Coverage is a probability. Length is measured on the training-transformed $[0,1]$ scale. Target averages use self-normalized importance weights. All rows use the same randomized weighted final calibration; Gao-style and Ad-EffOrt are the stated weighted adaptations.}
\label{tab:real-length}
\label{tab:real-coverage}
\end{table}

\paragraph{Length comparisons.}
{
The primary selector is shorter than fixed CTI, choosing kernels in
23, 26, 26, and 21 shifted splits on Power Plant, Concrete, Real Estate,
and Yacht. This comparison changes both estimation and selection;
identical quantile bins give CTI and inverse-density scores the same ordering.

Against NLL selection from the same 12 fits, its paired length change
is $+0.00186$ (SE $0.00132$), $-0.00512$ ($0.00242$),
$-0.00692$ ($0.00436$), and $-0.00675$ ($0.00365$).
Thirty tuning bags do not improve on one in every dataset.
Both WQ variants give longer mean sets. They coincide on Power Plant,
Concrete, and Yacht because all splits select the fixed leaf size;
Real Estate selects it in 28 splits.
HPD is shorter than the primary selector on all four datasets,
DCP-opt on the first three, tuned Ad-EffOrt on Concrete, and
kernel-only selection on Power Plant and Yacht. True-density oracle
optimality therefore does not imply that the fitted selector is shortest.
}

\paragraph{Coverage and geometry.}
{
{
Tables~\ref{tab:no-shift-length} and~\ref{tab:real-groups} give the
no-shift control and group coverages. The primary selector averages
1.005--1.183 positive-length components; supplementary files give
complete counts.
} Its group coverages are $0.8403/0.9238$ on Real Estate and
$1.0000/0.8708$ on Yacht; HPD also shows substantial group differences.
The theorem constrains marginal coverage.

Across 120 main-benchmark dataset--shift--method cells, mean coverage
ranges from $0.8733$ to $0.9111$; the lowest is no-shift Yacht PIT
(SE $0.0138$). All splits are retained. Standard errors measure
within-dataset split variation, not variation across datasets.
}

{
\subsection{Separating fitted-score and selection effects}
\label{sec:score-selection-diagnostic}

{
Table~\ref{tab:score-selection-diagnostic} varies either the score at
one fitted density or selection at a fixed score. NLL15 and length
selection share the $15\%$ labeled pool; main-table NLL uses $25\%$.
Fits, final calibration, and testing are shared. This exploratory
follow-up reuses the original 30 splits; see
Appendix~\ref{app:diagnostic-protocol}.
}

\begin{table}[!htbp]
\centering\small

\setlength{\tabcolsep}{7pt}
\begin{tabular}{@{}lrr@{}}
\toprule
\multicolumn{3}{@{}l}{\emph{A. Common NLL15 fit: change the score}}\\
Dataset & HPD $-$ inverse density & DCP-opt $-$ inverse density \\
\midrule
Power Plant & $-0.00025$ (0.00025) & $-0.00137$ (0.00029) \\
Concrete & $-0.00173$ (0.00239) & $-0.01806$ (0.00318) \\
Real Estate & $-0.00178$ (0.00332) & $-0.00978$ (0.00471) \\
Yacht & $+0.00375$ (0.00290) & $+0.00477$ (0.00661) \\
\addlinespace
\multicolumn{3}{@{}l}{\emph{B. Fixed score: length selection $-$ NLL15 selection}}\\
Dataset & Inverse-density score & HPD score \\
\midrule
Power Plant & $+0.00186$ (0.00132) & $0.00000$ (0.00000) \\
Concrete & $-0.00512$ (0.00242) & $-0.01414$ (0.00251) \\
Real Estate & $-0.00852$ (0.00462) & $-0.01302$ (0.00478) \\
Yacht & $-0.00871$ (0.00406) & $-0.02656$ (0.00538) \\
\bottomrule
\end{tabular}
\caption{Controlled length differences at shift ratio $3$.
Entries are paired means (standard errors) over 30 splits, on the
transformed $[0,1]$ response scale. Negative values favor the first
method in each difference. Panel A holds the selected density fixed;
Panel B holds the score and the labeled tuning budget fixed.
Uncertainty describes split variation within each dataset.}
\label{tab:score-selection-diagnostic}
\end{table}

{
At the common NLL15 fit, all four descriptive $95\%$ intervals for
HPD minus inverse density contain zero, without establishing equivalence.
At the fixed kernel fit, HPD is shorter on Power Plant and Yacht;
directions vary across the library. For HPD, length selection reduces
mean length by about 6\%, 5\%, and 27\% on Concrete, Real Estate,
and Yacht relative to NLL15.
}

{
On Yacht, HPD's $0.01410$ mean advantage splits algebraically into
$0.00411$ from changing the score at the inverse-density-selected fit
and $0.00999$ from switching fits. This is not a causal attribution.
HPD can change pooled score ordering through its input-dependent
transformation; the comparison does not establish general robustness
to density-estimation error.
}

In six known-density scenarios, $50$ paired repetitions did not resolve
a difference between true-density and true-HPD lengths
(Table~\ref{tab:diagnostic-true-scores}); this does not establish equality.

}

{
A supplementary joint inverse-density/HPD library reduces computation
by constructing fewer histograms but does not consistently shorten
sets relative to HPD (Appendix~\ref{sec:coverage-joint}).
}

{
\subsection{Inverse density and DCP-opt under stronger shift}
\label{sec:dcp-shift10}

Table~\ref{tab:dcp-shift10} compares the inverse-density selector
with DCP-opt at relative shift weights $(1,10)$ and
$p\in\{0.6,0.7,0.8,0.9\}$. Both methods follow Section~\ref{sec:real-data} with the same
12 fitted densities. DCP-opt interval centers, length-based candidate
selection, and independent final calibration are recomputed at each $p$.

\begin{table}[!htbp]
\centering\small
\setlength{\tabcolsep}{3pt}
\begin{tabular}{@{}llrrrrrr@{}}
\toprule
& & \multicolumn{2}{c}{Length (SE)}
& \multicolumn{2}{c}{Coverage (\%)}
& \multicolumn{2}{c}{Seconds} \\
\cmidrule(lr){3-4}\cmidrule(lr){5-6}\cmidrule(l){7-8}
Dataset & $p$ & Density & DCP-opt & Density & DCP-opt & Density & DCP-opt \\
\midrule
Power Plant & 0.6 & 0.04290 (0.00030) & 0.04394 (0.00027) & 60.39 & 60.47 & 5.151 & 6.453 \\
 & 0.7 & 0.05395 (0.00041) & 0.05472 (0.00031) & 69.87 & 69.87 & 5.157 & 6.470 \\
 & 0.8 & 0.06894 (0.00046) & 0.06856 (0.00048) & 79.91 & 79.72 & 5.162 & 6.497 \\
 & 0.9 & 0.09242 (0.00212) & 0.08914 (0.00058) & 89.97 & 89.77 & 5.161 & 6.453 \\
\addlinespace
Concrete & 0.6 & 0.10501 (0.00220) & 0.10581 (0.00225) & 59.82 & 59.01 & 0.299 & 0.503 \\
 & 0.7 & 0.13502 (0.00247) & 0.13170 (0.00250) & 69.35 & 68.84 & 0.299 & 0.499 \\
 & 0.8 & 0.17736 (0.00339) & 0.16557 (0.00270) & 80.04 & 79.53 & 0.298 & 0.495 \\
 & 0.9 & 0.23208 (0.00371) & 0.21865 (0.00417) & 90.02 & 89.87 & 0.299 & 0.476 \\
\addlinespace
Real Estate & 0.6 & 0.09932 (0.00380) & 0.10257 (0.00386) & 59.81 & 60.28 & 0.166 & 0.291 \\
 & 0.7 & 0.12690 (0.00442) & 0.12843 (0.00479) & 69.39 & 69.72 & 0.166 & 0.284 \\
 & 0.8 & 0.16706 (0.00535) & 0.15387 (0.00387) & 81.10 & 80.03 & 0.166 & 0.280 \\
 & 0.9 & 0.22761 (0.00715) & 0.22804 (0.01009) & 90.52 & 89.56 & 0.166 & 0.268 \\
\addlinespace
Yacht & 0.6 & 0.03785 (0.00199) & 0.04201 (0.00223) & 57.55 & 56.96 & 0.159 & 0.263 \\
 & 0.7 & 0.05283 (0.00302) & 0.05677 (0.00279) & 68.54 & 67.40 & 0.159 & 0.254 \\
 & 0.8 & 0.07230 (0.00537) & 0.07538 (0.00292) & 81.23 & 79.80 & 0.158 & 0.250 \\
 & 0.9 & 0.09364 (0.00551) & 0.11366 (0.00715) & 90.39 & 90.20 & 0.158 & 0.235 \\
\bottomrule
\end{tabular}
\caption{Inverse-density and DCP-opt length selection at shift ratio
$10$: means over 30 paired splits, with length standard errors in
parentheses. Length is measured on the training-transformed $[0,1]$
scale. Timing is from a separate serial, one-thread run and includes
the full fitting, selection, and final calibration/evaluation costs
defined in Section~\ref{sec:runtime}. Compare times within this table.
Coverage and timing standard errors and paired intervals are supplied
with the reproduction files.}
\label{tab:dcp-shift10}
\end{table}

Inverse density gives shorter mean sets in 10 of 16 combinations,
including all datasets at $p=0.6$; DCP-opt is shorter in the other six.
Unadjusted $95\%$ paired intervals favor each method in three cases
and contain zero in ten. On Yacht at $p=0.9$, lengths are $0.09364$
and $0.11366$, with coverages $90.39\%$ and $90.20\%$.
Inverse density is faster in all 16 cases; DCP-opt takes
$1.25$--$1.76$ times as long. Length comparisons must also account
for differences in achieved coverage.
}

\subsection{Computational cost}\label{sec:runtime}

{
{
Table~\ref{tab:runtime} times the unchanged $p=0.9$, ratio-$3$
benchmark in Table~\ref{tab:real-length}. Panel A uses four split workers,
each with one numerical thread; Panel B uses serial one-thread runs
and alternating method order. Both reproduce selections and predictions.
Compare times within panels; the comparisons in
Table~\ref{tab:dcp-shift10} and Appendix~\ref{sec:coverage-joint}
use separate serial runs.
}
}

\paragraph{Timing protocol.}
{
Times include fitting/query setup, histograms, scores, selection,
final calibration, and full-test length, coverage, and component evaluation
integrated over randomization. They exclude loading, splitting,
transformation, weights, imports, output, and validation. Shared work is
charged in full to each method. Totals sum measured stages; they are
not cold-start times, single-input latency, or summable wall times.
The reproduction files record versions and stages, but not the processor.

Panel A charges density, NLL, HPD, and DCP-opt for two forests and
12 histograms; Gao for two forests and six quantile histograms;
and tuned Ad-EffOrt for three configurations. Selection includes NLL or
30 resamples; scores include HPD ordering, DCP centers, or Gao DP.
Fixed CTI uses one forest and histogram, querying only calibration
and test inputs.
}

\begin{table}[!htbp]
\centering\small
\setlength{\tabcolsep}{4pt}
\begin{tabular}{@{}lrrrr@{}}
\toprule
\multicolumn{5}{@{}l}{\emph{A. Four split workers; one numerical thread per worker}}\\
Method & Power Plant & Concrete & Real Estate & Yacht \\
\midrule
Density: length selection & 5.7861 (0.0680) & 0.3096 (0.0068) & 0.1803 (0.0055) & 0.1779 (0.0053) \\
Density: NLL selection & 5.4297 (0.0642) & 0.2889 (0.0065) & 0.1710 (0.0053) & 0.1695 (0.0052) \\
CTI: fixed QRF & 1.1620 (0.0259) & 0.1088 (0.0034) & 0.0707 (0.0025) & 0.0762 (0.0027) \\
HPD score & 5.9223 (0.0682) & 0.3237 (0.0065) & 0.1863 (0.0056) & 0.1830 (0.0055) \\
DCP-opt score & 7.4235 (0.0924) & 0.5075 (0.0097) & 0.2918 (0.0090) & 0.2710 (0.0079) \\
Gao-style, $k=2$ & 5.5337 (0.0653) & 0.3354 (0.0071) & 0.1923 (0.0056) & 0.1876 (0.0054) \\
Ad-EffOrt: tuned & 3.4779 (0.0838) & 0.9653 (0.0151) & 0.6491 (0.0206) & 0.7011 (0.0202) \\
\addlinespace
\multicolumn{5}{@{}l}{\emph{B. Serial splits; one numerical thread}}\\
{Density: length selection} & {4.8080 (0.0370)} & {0.2899 (0.0026)} & {0.1553 (0.0019)} & {0.1554 (0.0030)} \\
{WQ: fixed$^\dagger$} & {3.0437 (0.0205)} & {0.3351 (0.0055)} & {0.1410 (0.0017)} & {0.1086 (0.0015)} \\
{WQ: selected} & {5.7629 (0.0332)} & {0.5583 (0.0069)} & {0.2373 (0.0026)} & {0.1963 (0.0025)} \\
\bottomrule
\end{tabular}
\caption{{Seconds per split at shift ratio $3$ and
$p=0.9$: means (standard errors) over the 30 splits of
Table~\ref{tab:real-length}. Compare methods within each timing panel.
Costs follow the protocol above. The density selector is a common
reference; $^\dagger$fixed WQ includes unused tuning-input preparation.}}
\label{tab:runtime}
\end{table}

\paragraph{Cost comparisons.}
{
In Panel A, fixed CTI and NLL are faster than length selection on
all datasets. HPD costs 2.4\%--4.5\% more and gives shorter sets;
DCP-opt costs 1.28--1.64 times as much, including interval centers.
Excluding fitting/query setup, length selection costs 7.6--17.1 times
as much as NLL. Gao and tuned Ad-EffOrt have no consistent cost ordering.
In Panel B, selected WQ is slower on all four datasets; fixed WQ is
faster on Power Plant, Real Estate, and Yacht. These comparisons depend
on the implementations and unequal budgets in Section~\ref{sec:real-data}.
}

{
\section{Related work and benchmark scope}\label{sec:related}

\paragraph{Finite-sample optimality.}
\citet{faulkenberry1973prediction} constructs prediction regions by
conditioning on a sufficient statistic. \citet{hoff2023} combines
conditional testing with Bayes risk minimization under frequentist
coverage constraints; \citet[Theorem~8 and Corollary~2]{koning2025fuzzy}
derive conformal optimality within an e-value framework.
\citet*{tibshirani2026testing} give a direct testing derivation and explain
these precedents. Our proof applies conditional optimization to weighted
permutation constraints, with risk under $P^m\times Q$ and validity over
$\mathcal H_w$.

\paragraph{Density estimation and prediction sets.}
Weighted calibration and density-based regions are established
\citep{tibshirani2019shift,izbicki2020density,izbicki2022hpd}.
\citet{wang2025density} estimate conditional densities by
reference-distribution classification for weighted treatment-effect
inference, motivated by a population density-threshold optimum.
Our finite-$m$ benchmark includes calibration randomness.
CTI supplies our equal-mass quantile-bin ordering \citep{luo2025cti}.
Full CD-split partitions inputs and calibrates locally; our global
comparator uses one calibration cell. HPD-split and DCP transform
estimated densities or CDFs, with DCP also adjusting interval shape
\citep{izbicki2022hpd,chernozhukov2021dcp}. Their benchmarks and
\citet{gao2025volume}'s restriction to $k$ intervals differ from ours.
Other interval methods include CQR, CHR, and CIR/CIR+
\citep{romano2019cqr,sesia2021histograms,guo2026interquantile}.

\paragraph{Selection and data reuse.}
CPL optimizes length under coverage moments \citep{kiyani2024length};
EffOrt and Ad-EffOrt optimize error quantiles \citep{lebars2025volume}.
VFCP selects by width before independent final calibration
\citep{yang2025selection}. ModSel-CP and ModSel-CP-LOO incorporate the
hypothesized response into selection, retaining exchangeable finite-sample
coverage without a separate selection/calibration split and providing
width guarantees under candidate-class regularity
\citep{liang2026selection}. Our separate-pool selector uses fresh weighted
calibration and targets a fixed-$m$ oracle. Their published guarantees
assume exchangeability, so we do not include a weighted adaptation.

ORCA learns radii around conditional samples \citep{zheng2026orca};
WGCP combines generated quantiles with estimated shift weights
\citep{li2026generative}. \citet{isaev2026cqr} study nonasymptotic CQR and
fixed-score calibration limits under known shift. Theorem~\ref{wo:oracle}
compares all rules valid over $\mathcal H_w$ at the same calibration size.

}

\section{Discussion}\label{sec:discussion}

The consistency results require growing estimation resolution and
selection resources; the original fixed-library, 30-resample experiments
do not implement these sequences. Optimality requires validity over
$\mathcal H_w$ (Appendix~\ref{wo:limitations}) and concerns marginal
coverage and volume on the stated response scale. It does not imply
conditional coverage, optimality among $k$-interval sets, or shortest
original-unit length after a nonlinear transformation. All formal
coverage guarantees use the true density ratio; estimated-weight error
remains outside this analysis.

\section*{Acknowledgments and disclosures}
\textbf{Funding.} Supported in part by the Simons Foundation (\mbox{SFI-MPS-TSM-00025603}).

\textbf{Competing interests.} The author declares no competing interests.

\textbf{AI use disclosure.}
GPT-6 contributed to this work at the level of a research collaborator.
The author takes full responsibility for the content of this paper.

\FloatBarrier
\appendix
\section{The validity class and infinite-volume risks}
\label{wo:limitations}

\paragraph{Validity only for iid laws permits a smaller risk.}
The larger null class in Theorem~\ref{wo:oracle} is necessary for its
claim. A counterexample already exists when $w\equiv1$.
Take $m=1$, $p=3/4$, $\mathcal Y=[0,1]$ with Lebesgue measure, and
$A=[0,1/4]$. Report $A$ when the calibration response belongs to $A$,
and report $[0,1]$ otherwise. For any iid response law, writing
$\theta=P(Y\in A)$ gives coverage
\[
 \theta+(1-\theta)^2=1-\theta+\theta^2\ge3/4.
\]
For the reference density equal to $2$ on $A$ and $2/3$ on its
complement, $\theta=1/2$ and this rule has expected length $5/8$.
The inverse-density oracle has inclusion probabilities $7/8$ on $A$
and $5/8$ on its complement, hence expected length
\[
 (1/4)(7/8)+(3/4)(5/8)=11/16>5/8.
\]
The iid-valid rule fails on a bag with one response in each part:
its orbitwise coverage is $1/2$. Thus optimality among methods valid
over all weighted-exchangeable laws cannot be replaced by optimality
among methods required to be valid only over iid-source shift laws.

\begin{proposition}[An unavoidable infinite-volume event]
\label{wo:infinite-risk}
Suppose $\mu(\mathcal Y)=\infty$. If
\begin{equation}\label{wo:heavy-test}
 \mathbb P_{T\sim P^m,\,X\sim Q_X}
 \left\{p\,w(X)>\alpha\sum_{i=1}^m w(X_i)\right\}>0,
\end{equation}
every rule valid over $\mathcal H_w$ has $R_m(C)=\infty$.
In particular, this holds at every finite $m$ when $w(X)$ is
essentially unbounded under $Q_X$.
\end{proposition}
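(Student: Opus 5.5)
The plan is to use the orbitwise characterization of validity from Proposition~\ref{wo:equivalence} and show that, on the event in \eqref{wo:heavy-test}, validity forces the set to be all of $\mathcal Y$ with positive probability. First, by Proposition~\ref{wo:equivalence} and the symmetrization $C^\circ$, I may assume $C$ satisfies \eqref{wo:order-symmetry} without changing coverage or risk. Then \eqref{wo:bag-constraint} must hold for every augmented sample $D$. Fix a calibration sample $t=(z_1,\ldots,z_m)$ and a test input $x$ with $p\,w(x)>\alpha\sum_{i=1}^m w(x_i)$. The aim is to show that $\mathbb P_U\{y\in C(t,x,U)\}$ is bounded below by a positive constant for every $y\in\mathcal Y$. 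It then follows that $\mathbb E_U\mu(C(t,x,U))=\infty$.

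For the key step, take $D=(z_1,\ldots,z_m,(x,y))$ for an arbitrary $y$. Since each $e_i^C(D)\le1$, the constraint \eqref{wo:bag-constraint} gives
\[
 w(x)\,e_N^C(D)\ge pW(D)-\sum_{i=1}^m w_i
 =p\,w(x)-\alpha\sum_{i=1}^m w_i>0 .
\]
Here $e_N^C(D)=\mathbb P_U\{y\in C(t,x,U)\}$, and this uses the order-symmetry only through the representation of $e_N$. Hence
\[
 \mathbb P_U\{y\in C(t,x,U)\}\ge \delta(t,x):=p-\frac{\alpha}{w(x)}\sum_{i=1}^m w_i>0
\]
uniformly in $y$. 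Integrating over $y$ with Tonelli and the joint measurability of the membership indicator gives $\mathbb E_U\mu(C(t,x,U))\ge\delta(t,x)\mu(\mathcal Y)=\infty$.

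Integrating over $(T,X)\sim P^m\times Q_X$, the event in \eqref{wo:heavy-test} has positive probability, and on it the conditional expected volume is infinite. Therefore $R_m(C)=\infty$, and this value is also the risk of the original, unsymmetrized rule.

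For the final claim, suppose $w(X)$ is essentially unbounded under $Q_X$. Choose $M$ with $\mathbb P_{P^m}\{\sum_i w(X_i)\le M\}>0$; this is possible because the sum is finite almost surely. Independently, $Q_X\{w(X)>\alpha M/p\}>0$. The product event then lies inside the event in \eqref{wo:heavy-test}, so that event has positive probability.

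The only delicate point is justifying that validity must hold at the particular bag $D$, rather than only almost everywhere. The symmetrization step handles this: Proposition~\ref{wo:equivalence} states \eqref{wo:bag-constraint} for every $D\in\mathcal Z^N$, since point-mass mixtures $\nu=\delta_D$ lie in $\mathcal H_w$ whenever $W(D)>0$, which holds here because $w(x)>0$.
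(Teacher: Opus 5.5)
Your proposal is correct and follows the paper's proof essentially step for step. You symmetrize via Proposition~\ref{wo:equivalence}, apply \eqref{wo:bag-constraint} at the augmented bag $(t,(x,y))$ with the calibration contributions bounded by $\sum_i w_i$, and obtain the uniform lower bound $p-\alpha\sum_i w_i/w(x)$ on the inclusion probability, then conclude by Tonelli. For the final claim, your explicit choice of $M$ with $\mathbb P\{\sum_i w(X_i)\le M\}>0$ is a more detailed version of the paper's remark that the conditional probability of \eqref{wo:heavy-test} is positive for every finite value of $\sum_i w(X_i)$.
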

\begin{proof}
First symmetrize the rule as in Proposition~\ref{wo:equivalence}, which
preserves its target risk. At a fixed $T,x$ with $w(x)>0$, apply
\eqref{wo:bag-constraint} to each augmented candidate $(x,y)$.
The total contribution from the calibration positions is at most
$W_T=\sum_iw(X_i)$. Therefore, for every $y$,
\[
 \mathbb P_U\{y\in C(T,x,U)\}
 \ge\left[p-\frac{\alpha W_T}{w(x)}\right]_+.
\]
On the event in \eqref{wo:heavy-test}, this is a positive constant
over a set of infinite $\mu$-measure. Tonelli's theorem gives infinite
conditional expected volume and then infinite risk. If $w(X)$ is
essentially unbounded under $Q_X$, the conditional probability of
\eqref{wo:heavy-test} is positive for every finite value of $W_T$.
\end{proof}

The oracle theorem therefore extends to infinite reference measure,
but its value need not be finite. Quantitative claims about excess
length require a finite oracle risk or a finite-volume evaluation scale.
A fixed transformation of an unbounded response into a bounded interval
changes the volume objective: transforming prediction sets back requires
the inverse-transformation Jacobian when measuring original-scale length.

\section{Synthetic distributions and complete configuration grid}
\label{app:synthetic}

{
Section~\ref{sec:synthetic} specifies the density pairs, sampling
design, bin counts, and bandwidths.
Quantile histograms use empirical $j/B$ quantiles without grid
shrinkage and fixed outer endpoints 0 and 1.
Each Gaussian kernel is normalized on $[0,1]$ before bin integration;
the resulting kernel histogram is mixed with the uniform density
using weight $10^{-6}$.
}
Fixed CTI uses $B=20$; Gao-style DP selects among the four quantile
fits. Ad-EffOrt is not run in this two-group comparison.

{
The accompanying \path{Supplementary_results/} files report
sensitivity to calibration size and shift at $n=2000$, together with
all 24 original configurations: 50 repetitions and 11 fitted methods
give 13,200 rows, with oracle comparisons and paired standard errors.
} {
\paragraph{Oracle evaluation.}
For each score threshold $t>0$ arising from weighted calibration of
$s^*(x,y)=1/f(y\mid x)$, we compute the intervals forming
$\{y\in[0,1]:f(y\mid x)\ge1/t\}$.
Interior endpoints solve $f(y\mid x)=1/t$ on each monotone-density
branch, with absolute root tolerance $10^{-13}$; outer endpoints may
be 0 or 1. At final evaluation, an interval $[a,b]$ has conditional
probability $F_g(b)-F_g(a)$ given $X=g$. These CDFs use the uniform
and beta components of the specified mixtures.
For each reported mean, the standard error is the sample standard
deviation divided by $\sqrt{50}$. Oracle-gap errors are calculated
from the 50 within-repetition differences, rather than from separate
method and oracle standard errors.
}

\section{Real-data protocol and implementation details}
\subsection{Real-data protocol and implemented comparators}
\label{sec:real-protocol}

\paragraph{Data and splits.}
Each observation is $(X_i,Y_i)$, with feature vector
$X_i=(X_{i1},\ldots,X_{id})\in\mathbb R^d$ and scalar response $Y_i$.
There are $d$ features; $X_{ij}$ is feature $j$ of observation $i$.
For $x=(x_1,\ldots,x_d)$, $x_j$ is its $j$th coordinate.

We retain all numeric observations in four UCI datasets, remove Real Estate's
identifier column, and use Power Plant's first worksheet once.
The covariate shifts are designed.\footnote{Primary UCI
data references: Power Plant, \url{https://doi.org/10.24432/C5002N};
Concrete, \url{https://doi.org/10.24432/C5PK67}; Real Estate,
\url{https://doi.org/10.24432/C5J30W}; Yacht,
\url{https://doi.org/10.24432/C5XG7R}. Retrieved October 19, 2026.}
\begin{center}
\begin{tabular}{lrrl}
\hline
Dataset & Rows & Features & Shift variable\\
\hline
Power Plant & 9568 & 4 & Ambient temperature\\
Concrete & 1030 & 8 & Age\\
Real Estate & 414 & 6 & Distance to nearest MRT station\\
Yacht & 308 & 6 & Froude number\\
\hline
\end{tabular}
\end{center}
{
Methods share each split, using the allocation in Section~\ref{sec:real-data}
with the first four counts rounded down.
} Each dataset uses 30 repetitions (identifiers 10--39), seed 20261020,
and dataset-specific random streams; pilots are excluded.

\paragraph{Response scale and shift.}
Using the training responses $Y_1,\ldots,Y_n$, where
$n=N_{\rm train}$, define
\[
\begin{aligned}
 c
 &= \widehat q_{0.5},\\
 \mathrm{IQR}_{\rm train}
 &= \widehat q_{0.75}-\widehat q_{0.25},\\
 \mathrm{SD}_{\rm train}
 &= \left\{\frac1n\sum_{i=1}^n
       (Y_i-\overline Y_{\rm train})^2\right\}^{1/2},
 \qquad
 \overline Y_{\rm train}=\frac1n\sum_{i=1}^n Y_i,
\end{aligned}
\]
where $\widehat q_\tau$ is the empirical $\tau$-quantile of training
responses, linearly interpolated between order statistics. Set
\[
 a=\max\{\mathrm{IQR}_{\rm train},
          0.1\,\mathrm{SD}_{\rm train},10^{-12}\}.
\]
We apply the sigmoid transformation
{
\begin{equation}\label{eq:response-sigmoid}
 \widetilde Y_i
 =\sigma\!\left(\frac{Y_i-c}{a}\right),
 \qquad
 \sigma(t)=(1+e^{-t})^{-1}.
\end{equation}
}
Within each repetition, apply the same training-fitted $c$ and $a$
to the remaining data partitions. Set $\mathcal Y=[0,1]$; reported lengths are Lebesgue
measures on this transformed scale, not in original response units.
For the designated input feature $j$, define
\[
 g(x)=\mathbf 1\!\left\{
 x_j\ge \operatorname{median}(X_{1j},\ldots,X_{nj})
 \right\},
 \qquad
 v_r(x)=1+(r-1)g(x),\quad {r\in\{1,3,10,20\}}.
\]
For each $r$, we define the target distribution $Q$ by
\[
 dQ(x,y)
 =\frac{v_r(x)}{\mathbb E_P[v_r(X)]}\,dP(x,y).
\]
{Thus $r=1$ gives $Q=P$, while $r>1$ assigns relative weights
$1$ and $r$ to the lower and upper groups, respectively.}

\paragraph{Fitting the regression forests.}
Fit an ensemble of regression trees \citep{meinshausen2006} by the
squared-error bootstrap procedure below, producing the leaf partitions
in Section~\ref{sec:forest-density}.
{
Use $N_{\rm tree}$ trees and responses $\widetilde Y_i$;
}
training and fitting randomization are independent of final calibration
and test data. Numerical settings follow the construction.

\begin{enumerate}
\item \emph{Draw a training resample for each tree.}
For tree $t$, draw $n$ indices independently and uniformly with replacement
from $\{1,\ldots,n\}$. Let $b_{ti}$ count the draws of observation $i$,
so $\sum_i b_{ti}=n$; counts may be zero or exceed one.

\item \emph{Learn the tree's input-space partition.}
In tree $t$, node $\nu$ has input region $A_{t,\nu}\subseteq\mathcal X$.
The root is $A_{t,\mathrm{root}}=\mathcal X$, containing
$X_1,\ldots,X_n$. Observation $i$ has fitting weight $b_{ti}$;
those with $b_{ti}=0$ do not affect fitting.

At each node $\nu$, draw a fresh initial candidate feature set
$\mathcal J^{(0)}_{t,\nu}$ uniformly from all subsets of
$\{1,\ldots,d\}$ of size
\[
 k_{\rm feat}=\max\{1,\lfloor0.8d\rfloor\}.
\]
Features constant among bootstrap observations in $A_{t,\nu}$ cannot
split the node. The implementation may then examine additional
features; $\mathcal J_{t,\nu}$ is the set actually examined.

For $j\in\mathcal J_{t,\nu}$, thresholds $u$ are midpoints between
consecutive distinct feature-$j$ values among bootstrap observations
in $A_{t,\nu}$. Split $(j,u)$ gives
\[
\begin{aligned}
 A_{t,\nu}^{\mathrm{left}}(j,u)
 &=A_{t,\nu}\cap\{x\in\mathcal X:x_j\le u\},\\
 A_{t,\nu}^{\mathrm{right}}(j,u)
 &=A_{t,\nu}\cap\{x\in\mathcal X:x_j>u\}.
\end{aligned}
\]
The children partition $A_{t,\nu}$.

For any node region $B$ with positive bootstrap weight, define
\[
 \overline{\widetilde Y}_t(B)=
 \frac{\sum_{i:X_i\in B}b_{ti}\widetilde Y_i}
      {\sum_{i:X_i\in B}b_{ti}},
 \qquad
 S_t(B)=\sum_{i:X_i\in B}b_{ti}
 \bigl(\widetilde Y_i-\overline{\widetilde Y}_t(B)\bigr)^2.
\]
Let $\mathcal C_{t,\nu}$ contain the examined pairs $(j,u)$,
$j\in\mathcal J_{t,\nu}$, satisfying the minimum-leaf and other
splitting constraints. If nonempty, choose
\[
 (j^*,u^*)\in
 \operatorname*{arg\,min}_{(j,u)\in\mathcal C_{t,\nu}}
 \left\{
 S_t\bigl(A_{t,\nu}^{\mathrm{left}}(j,u)\bigr)
 +S_t\bigl(A_{t,\nu}^{\mathrm{right}}(j,u)\bigr)
 \right\}.
\]
This maximizes the reduction in squared error because $S_t(A_{t,\nu})$
is fixed.
{
Create two child nodes with regions
$A_{t,\nu}^{\mathrm{left}}(j^*,u^*)$ and
$A_{t,\nu}^{\mathrm{right}}(j^*,u^*)$.
}

\item \emph{Stop growing and retain the terminal leaves.}
Fix a minimum leaf size $\ell$ before fitting.
For node $\nu$ of tree $t$, define its sample count by
\[
 n_{t,\nu}
 =\#\{i:b_{ti}>0,\ X_i\in A_{t,\nu}\}.
\]
This counts distinct bootstrap-selected observations in the node.
Split recursively from the root; retain a terminal leaf when fitting
responses are equal or no {admissible} split gives two children with
at least $\ell$ observations each.

Each tree $t$ has terminal regions partitioning $\mathcal X$.
For query $x$, $\mathcal L_t(x)$ is the leaf containing $x$ in tree $t$.
Repeat for $t=1,\ldots,N_{\rm tree}$.

\item \emph{Construct the conditional-distribution weights.}
In each fitted tree $t$, route every original input $X_i$, including
those with $b_{ti}=0$, through the fitted feature--threshold rules.
Form $\mathcal I_t(x)$ as in Section~\ref{sec:forest-density}; assign
weight $1/|\mathcal I_t(x)|$ to each observation in this leaf and zero
outside, then average over trees as in \eqref{eq:forest-training-weights}.
Bootstrap multiplicities $b_{ti}$ affect fitting only, not this
original-sample averaging. These weights define the conditional CDF
and histograms.
\end{enumerate}
Forest-fitting resamples differ from the later selection resamples.
The local weights $a_i(x)$ also differ from the covariate-shift weights
$w(x)$ used for calibration and target evaluation.

\paragraph{Fitted library.}
Two 80-tree bootstrap regression forests \citep{meinshausen2006} use minimum leaf sizes 5 and 20,
and feature fraction 0.8. {Their original-sample observation
weights are defined in \eqref{eq:forest-training-weights}.} Six quantile
histograms use $B\in\{10,20,40\}$, with endpoints 0 and 1 and interior
quantiles shrunk 2\% toward the uniform grid. Six kernel histograms share
the forest weights, mix Gaussian kernels individually truncated to $[0,1]$,
use bandwidths $0.02,0.05,0.1$, and integrate onto 80 equal-width cells.
Their masses receive a $10^{-6}$ uniform mixture.

\paragraph{Selection and calibration.}
{
Length selection uses \eqref{eq:pool-selection} with $r=30$ and the
allocations in Section~\ref{sec:real-data}. Methods share resamples and
input pools. The $30$ resamples reuse one labeled pool, not $30$
independent samples from $P$. Final calibration uses \eqref{eq:calibrated}
at level $p$ ($p=0.9$ in the main benchmark).
}

\paragraph{Implemented methods.}
Write $\Delta_j,\pi_j$ for fitted cell width and mass, and
$\widehat f_j=\pi_j/\Delta_j$. Smaller scores are preferred;
selection minimizes bootstrap length unless specified otherwise.
Result-file identifiers are:
\begin{enumerate}
\item \texttt{density\_length}: select among all 12 histograms using
      score $\Delta_j/\pi_j$.
\item \texttt{density\_singlebag}: select the same 12 scores using the
      original labeled pool without resampling.
\item \texttt{density\_nll}: select the same 12 fits by target-weighted
      negative log density, then use their inverse-density score.
\item \texttt{cti\_fixed}: fixed leaf-size-20, $B=20$ quantile histogram;
      its score is order-equivalent to CTI bin width \citep{luo2025cti}.
\item \texttt{quantile\_length}: density selection over the six quantile fits.
\item \texttt{kernel\_length}: density selection over the six kernel fits.
\item \texttt{hpd\_length}: select among 12 fits using
      $\sum_{\ell:\widehat f_\ell>\widehat f_j}\pi_\ell$
      \citep{izbicki2022hpd}; the strict inequality preserves density ties.
\item \texttt{pit\_length}: 12 fits, score
      $|\widehat F_x(z)-1/2|$ \citep{chernozhukov2021dcp}.
\item \texttt{dcp\_opt\_length}: 12 fits, score
      $|\widehat F_x(z)-b_x-p/2|$, where $b_x$ minimizes
      $\widehat Q_x(b+p)-\widehat Q_x(b)$ over $[0,1-p]$
      \citep{chernozhukov2021dcp}.
\item \texttt{cqr\_length}: 12 fits, score
      $\max\{\widehat Q_x(0.05)-z,z-\widehat Q_x(0.95)\}$
      \citep{romano2019cqr}.
{
\item \texttt{wq\_rf\_fixed}: reference-classification density score
      with minimum leaf size 5 \citep{wang2025density}.
\item \texttt{wq\_rf\_selected}: select minimum leaf size from
      $\{5,20\}$ with the same length objective.
}
\item \texttt{gao\_union2\_length}: six quantile fits; exact whole-cell DP
      selects $\lceil pB\rceil$ cells in at most two intervals.
      Adjacent-cell expansion and boundary-cell contraction create a nested
      family, scored by entry rank divided by $B$ \citep{gao2025volume}.
\item \texttt{adeffort}: fixed smoothing 0.05, score
      $|z-\widehat c(x)|-\widehat a(x)$ \citep{lebars2025volume}.
\item \texttt{adeffort\_tuned}: select smoothing from $\{0.02,0.05,0.1\}$
      with the same length objective.
\end{enumerate}

Ad-EffOrt uses a 32-unit tanh center, 150 MSE warm-start steps, 500 kernel
quantile-gradient Adadelta steps, training-only checkpoint selection, and
a 120-iteration boosted residual-quantile model. Gao uses nominal anchor
mass $p$, without the theoretical $n^{-1}+3\delta$ inflation. Both are
independent weighted adaptations, not reproductions of published models
or results. Candidate counts and computational budgets differ by family.

{
\paragraph{Reference-classification scores.}\label{app:wq-protocol}
We use the density-estimation component of Algorithm~1 in
\citet{wang2025density}, omitting its treatment-effect stage.
For each training input, draw a reference response from an untruncated
Gaussian with the training-response mean and $1.5$ times its empirical
standard deviation on the transformed scale. A binary forest distinguishes
observed from reference pairs. With fitted observed-pair probability
$\widehat\pi(x,y)$ and reference density $f_0$, use score
$-f_0(y)\widehat\pi(x,y)/(1-\widehat\pi(x,y))$.
Clip probabilities to $[10^{-6},1-10^{-6}]$, without input-specific
renormalization. Each classifier has 80 bootstrap trees and feature
fraction $0.8$; both leaf-size candidates share reference draws.

{
Integrate lengths
} on classifier response partitions using Gaussian level equations,
without a response grid. The estimate need not integrate to one, so
Proposition~\ref{prop:fixed-m-density} does not apply; weighted
calibration still gives coverage.
}

\paragraph{Evaluation.}
Use weighted source-test averages for target coverage and mean length,
integrating randomization analytically for coverage, length, and
positive-length component counts. Report shift-group coverage and timing.
Standard errors describe split variation within each fixed dataset;
method differences are paired by split.

\paragraph{Observed optimization and split diagnostics.}
All 90 Power Plant Ad-EffOrt fits (30 splits, three smoothing values)
retain an MSE warm-start checkpoint: later steps do not improve the
training quantile criterion. This concerns our implementation, not an
optimally tuned original pipeline. All final splits are retained,
including Yacht coverage 0.6681 for the density selector in repetition
29 and 0.5969 for HPD in repetition 32. Marginal coverage averages
over calibration samples, not each realized split.

{
\subsection{Protocol for the controlled score and selection diagnostic}
\label{app:diagnostic-protocol}

\paragraph{Real data and comparison units.}
{
Use Appendix~\ref{sec:real-protocol}'s protocol at ratio $3$ and
$p=0.9$. Evaluate inverse-density, HPD, and DCP-opt on all 12 fits
with shared final calibration and test data excluded from selection.
NLL15 and length selection share the $15\%$ labeled pool; length
selection also uses the $10\%$ input pool without responses. NLL25
adds those responses for a budget check.
{
Length selection uses $r=30$ bootstrap samples.
}

Anchors are the leaf-size-20 kernel with bandwidth $0.05$, NLL15,
and each score's length-selected fit. Retain all 12-fit comparisons;
average within splits before computing errors. Descriptive paired
$95\%$ intervals use $t_{29}$ without multiplicity adjustment and
measure split variation in reused datasets. This follow-up is exploratory.
For Yacht, change inverse density to HPD at the inverse-density-selected
fit, then switch to the HPD-selected fit. These score and fit components
sum to the total difference in each split. 
}

\paragraph{Computational comparisons.}
{
Timing is serial and single-threaded; each method pays the common
12-fit training/query cost under Section~\ref{sec:runtime}'s convention.
}
\paragraph{Known-density check.}
The six scenarios use the four density pairs in
Section~\ref{sec:synthetic}, plus the asymmetric and separated-mode
pairs after removing their uniform components and renormalizing.
Source group probabilities are $(1/2,1/2)$ and target probabilities
are $(1/4,3/4)$. We fix $m=200$ and $p=0.9$, use $50$ repetitions
(indices 10--59, seed 20261026), and take nested training samples
of $500$, $2000$, and $8000$ observations per group.
All scores and training sizes share 200 labeled source observations,
200 independent source inputs, and fresh final calibration in each
repetition. Length selection uses 30 bootstrap samples; NLL uses the
same 200 labels.
{
The seven fitted candidates are those specified in
Appendix~\ref{app:synthetic}.
}

True scores use analytic beta-mixture densities and CDFs.
Density contours and HPD inversions use scalar roots on monotone-density
branches, with root tolerance $10^{-12}$. The true DCP-opt center is
computed by comparing quantile widths at boundary points and stationary
candidates found from derivative sign changes on a 513-point bracket grid;
resolution sensitivity was checked separately.
Integrate conditional interval masses and common randomization for
length and coverage. The true reference is training-size independent
and counted once per repetition, not three times.
}

{
\begin{table}[!htbp]
\centering\small

\setlength{\tabcolsep}{7pt}
\begin{tabular}{@{}lrr@{}}
\toprule
Scenario & True HPD $-$ true density & True DCP-opt $-$ true density \\
\midrule
Asymmetric & $+0.00055$ (0.00068) & $+0.00338$ (0.00107) \\
Heteroscedastic & $+0.00033$ (0.00125) & $+0.00033$ (0.00125) \\
Separated modes & $+0.00022$ (0.00082) & $+0.20079$ (0.00795) \\
Unequal modes & $-0.00063$ (0.00118) & $+0.20772$ (0.00682) \\
Asymmetric, no floor & $-0.00024$ (0.00102) & $+0.00123$ (0.00168) \\
Separated modes, no floor & $+0.00149$ (0.00102) & $+0.25701$ (0.00404) \\
\bottomrule
\end{tabular}
\caption{True-score length differences in the diagnostic simulations,
with $m=200$. Entries are paired means (standard errors) over 50
repetitions. True references are counted once per repetition;
uncorrected descriptive $95\%$ intervals use the $t_{49}$ quantile.}
\label{tab:diagnostic-true-scores}
\end{table}
}

\section{Additional real-data results}

{
\subsection{Inverse-density and HPD scores at lower coverage levels}
\label{app:hpd-low-coverage}

This exploratory comparison follows Appendix~\ref{sec:real-protocol},
using $p\in\{0.3,0.4,0.5\}$, relative weights $(1,10)$ and $(1,20)$,
and the same 12 fitted densities for both methods.
Table~\ref{tab:hpd-low-coverage-r10} reports transformed-scale length,
coverage, and runtime; timing follows Section~\ref{sec:runtime}.

Inverse density has shorter mean sets in nine of 12 combinations
at ratio $10$ and ten at ratio $20$. Unadjusted $95\%$ paired
length intervals favor inverse density in seven combinations at
each ratio; the remaining intervals contain zero. Inverse density
is faster in all 24 combinations; HPD takes $1.02$--$1.05$ times
as long. Split-level results and paired intervals accompany this table.

Achieved coverage also matters. On Real Estate at ratio $20$
and $p=0.3$, inverse density and HPD attain $27.35\%$ and
$30.27\%$. On Yacht at ratio $20$ and $p=0.5$, HPD is shorter
but attains $46.64\%$, compared with $49.05\%$ for inverse density.
These length differences therefore cannot be read as gains at
equal achieved coverage.

\begin{table}[!htbp]
\centering\small
\setlength{\tabcolsep}{3pt}
\begin{tabular}{@{}lcrrrrrr@{}}
\toprule
& & \multicolumn{2}{c}{Length (SE)} & \multicolumn{2}{c}{Coverage (\%)} & \multicolumn{2}{c}{Time (s)} \\
\cmidrule(lr){3-4}\cmidrule(lr){5-6}\cmidrule(l){7-8}
Dataset & $p$ & Density & HPD & Density & HPD & Density & HPD \\
\midrule
\multicolumn{8}{@{}l}{\emph{Relative shift weights $(1,10)$}}\\
Power Plant & 0.3 & 0.01746 (0.00021) & 0.02004 (0.00019) & 30.27 & 30.83 & 5.432 & 5.567 \\
 & 0.4 & 0.02486 (0.00023) & 0.02635 (0.00026) & 40.60 & 40.87 & 5.438 & 5.558 \\
 & 0.5 & 0.03347 (0.00033) & 0.03445 (0.00024) & 50.53 & 50.21 & 5.436 & 5.571 \\
\addlinespace[2pt]
Concrete & 0.3 & 0.04082 (0.00222) & 0.04778 (0.00184) & 30.87 & 31.48 & 0.331 & 0.348 \\
 & 0.4 & 0.05858 (0.00239) & 0.06597 (0.00252) & 40.21 & 40.35 & 0.331 & 0.346 \\
 & 0.5 & 0.08078 (0.00255) & 0.08629 (0.00277) & 50.23 & 49.80 & 0.332 & 0.347 \\
\addlinespace[2pt]
Real Estate & 0.3 & 0.04149 (0.00285) & 0.05187 (0.00292) & 27.67 & 29.16 & 0.175 & 0.181 \\
 & 0.4 & 0.05925 (0.00309) & 0.06682 (0.00333) & 38.51 & 41.59 & 0.175 & 0.182 \\
 & 0.5 & 0.07590 (0.00334) & 0.08545 (0.00358) & 49.05 & 50.49 & 0.175 & 0.181 \\
\addlinespace[2pt]
Yacht & 0.3 & 0.01512 (0.00155) & 0.01487 (0.00159) & 29.10 & 28.12 & 0.170 & 0.175 \\
 & 0.4 & 0.02132 (0.00143) & 0.02065 (0.00145) & 39.41 & 37.60 & 0.170 & 0.176 \\
 & 0.5 & 0.02943 (0.00172) & 0.02644 (0.00153) & 47.85 & 46.85 & 0.171 & 0.176 \\
\addlinespace
\multicolumn{8}{@{}l}{\emph{Relative shift weights $(1,20)$}}\\
Power Plant & 0.3 & 0.01754 (0.00022) & 0.02023 (0.00019) & 30.21 & 30.81 & 5.428 & 5.562 \\
 & 0.4 & 0.02509 (0.00027) & 0.02651 (0.00027) & 40.66 & 40.83 & 5.432 & 5.560 \\
 & 0.5 & 0.03361 (0.00034) & 0.03467 (0.00026) & 50.53 & 50.07 & 5.434 & 5.562 \\
\addlinespace[2pt]
Concrete & 0.3 & 0.04076 (0.00228) & 0.04831 (0.00194) & 30.88 & 31.61 & 0.331 & 0.346 \\
 & 0.4 & 0.05896 (0.00245) & 0.06663 (0.00250) & 40.32 & 40.61 & 0.332 & 0.348 \\
 & 0.5 & 0.08090 (0.00257) & 0.08639 (0.00282) & 50.23 & 49.83 & 0.331 & 0.346 \\
\addlinespace[2pt]
Real Estate & 0.3 & 0.04062 (0.00275) & 0.05223 (0.00284) & 27.35 & 30.27 & 0.175 & 0.182 \\
 & 0.4 & 0.05864 (0.00303) & 0.06642 (0.00342) & 38.73 & 41.38 & 0.176 & 0.182 \\
 & 0.5 & 0.07681 (0.00312) & 0.08542 (0.00355) & 50.02 & 50.53 & 0.175 & 0.181 \\
\addlinespace[2pt]
Yacht & 0.3 & 0.01600 (0.00167) & 0.01657 (0.00173) & 28.88 & 29.14 & 0.170 & 0.176 \\
 & 0.4 & 0.02302 (0.00154) & 0.02197 (0.00150) & 39.98 & 37.53 & 0.171 & 0.176 \\
 & 0.5 & 0.03057 (0.00168) & 0.02752 (0.00161) & 49.05 & 46.64 & 0.170 & 0.176 \\
\bottomrule
\end{tabular}
\caption{Inverse-density and HPD selection at lower coverage levels:
means over 30 shared splits, with length standard errors in parentheses.
Both methods use the same 12 densities and training-transformed $[0,1]$
responses. Timing follows Section~\ref{sec:runtime} and charges the full
shared fitting cost to each method. Coverage and timing standard errors,
and paired intervals, are in the reproduction files.}
\label{tab:hpd-low-coverage-r10}\label{tab:hpd-low-coverage-r20}
\end{table}

}

\subsection{No-shift and group-coverage controls}
\label{app:real-controls}

Tables~\ref{tab:no-shift-length} and~\ref{tab:real-groups} report
the no-shift control and shift-group coverage for the main benchmark.

\begin{table}[!htbp]
\centering\small
\setlength{\tabcolsep}{4pt}
\renewcommand{\arraystretch}{1.08}
\begin{tabular}{@{}lrrrr@{}}
\toprule
Method & Power Plant & Concrete & Real Estate & Yacht \\
\midrule
Density: length selection & \shortstack[r]{0.0895 (0.0008)\\0.8997 (0.0022)} & \shortstack[r]{0.2273 (0.0044)\\0.8982 (0.0054)} & \shortstack[r]{0.2600 (0.0064)\\0.8978 (0.0069)} & \shortstack[r]{0.0741 (0.0045)\\0.8986 (0.0132)} \\
\addlinespace[2pt]
Density: NLL selection & \shortstack[r]{0.0882 (0.0005)\\0.8975 (0.0021)} & \shortstack[r]{0.2320 (0.0035)\\0.9009 (0.0051)} & \shortstack[r]{0.2559 (0.0058)\\0.8967 (0.0066)} & \shortstack[r]{0.0713 (0.0032)\\0.8837 (0.0132)} \\
\addlinespace[2pt]
Density: one tuning bag & \shortstack[r]{0.0897 (0.0008)\\0.9003 (0.0022)} & \shortstack[r]{0.2277 (0.0038)\\0.9006 (0.0049)} & \shortstack[r]{0.2729 (0.0110)\\0.9051 (0.0065)} & \shortstack[r]{0.0747 (0.0044)\\0.8995 (0.0132)} \\
\addlinespace[2pt]
CTI: fixed QRF & \shortstack[r]{0.1014 (0.0005)\\0.8984 (0.0022)} & \shortstack[r]{0.3038 (0.0043)\\0.8933 (0.0060)} & \shortstack[r]{0.3189 (0.0069)\\0.8973 (0.0082)} & \shortstack[r]{0.1545 (0.0085)\\0.8980 (0.0125)} \\
\addlinespace[2pt]
Density: quantiles only & \shortstack[r]{0.0953 (0.0011)\\0.9015 (0.0024)} & \shortstack[r]{0.2381 (0.0036)\\0.8984 (0.0050)} & \shortstack[r]{0.2947 (0.0124)\\0.8921 (0.0089)} & \shortstack[r]{0.0740 (0.0048)\\0.8914 (0.0143)} \\
\addlinespace[2pt]
Density: kernels only & \shortstack[r]{0.0882 (0.0005)\\0.8975 (0.0021)} & \shortstack[r]{0.2253 (0.0037)\\0.8999 (0.0054)} & \shortstack[r]{0.2588 (0.0060)\\0.8982 (0.0067)} & \shortstack[r]{0.0708 (0.0030)\\0.8993 (0.0135)} \\
\addlinespace[2pt]
HPD score & \shortstack[r]{0.0878 (0.0005)\\0.8982 (0.0021)} & \shortstack[r]{0.2113 (0.0031)\\0.8970 (0.0052)} & \shortstack[r]{0.2526 (0.0053)\\0.8973 (0.0072)} & \shortstack[r]{0.0517 (0.0022)\\0.8930 (0.0118)} \\
\addlinespace[2pt]
PIT score & \shortstack[r]{0.0877 (0.0006)\\0.8985 (0.0024)} & \shortstack[r]{0.2110 (0.0031)\\0.8971 (0.0058)} & \shortstack[r]{0.2542 (0.0064)\\0.8973 (0.0076)} & \shortstack[r]{0.0658 (0.0047)\\0.8733 (0.0138)} \\
\addlinespace[2pt]
DCP-opt score & \shortstack[r]{0.0868 (0.0005)\\0.8974 (0.0020)} & \shortstack[r]{0.2069 (0.0032)\\0.8943 (0.0060)} & \shortstack[r]{0.2509 (0.0058)\\0.8929 (0.0076)} & \shortstack[r]{0.0620 (0.0035)\\0.8853 (0.0124)} \\
\addlinespace[2pt]
CQR score & \shortstack[r]{0.0895 (0.0005)\\0.8986 (0.0023)} & \shortstack[r]{0.2291 (0.0032)\\0.9027 (0.0064)} & \shortstack[r]{0.2579 (0.0048)\\0.8961 (0.0077)} & \shortstack[r]{0.1455 (0.0081)\\0.8849 (0.0099)} \\
\addlinespace[2pt]
{WQ: fixed} & \shortstack[r]{{0.1037 (0.0006)}\\{0.8976 (0.0021)}} & \shortstack[r]{{0.3060 (0.0065)}\\{0.8893 (0.0068)}} & \shortstack[r]{{0.3048 (0.0084)}\\{0.8900 (0.0099)}} & \shortstack[r]{{0.1602 (0.0080)}\\{0.8834 (0.0149)}} \\
\addlinespace[2pt]
{WQ: selected} & \shortstack[r]{{0.1037 (0.0006)}\\{0.8976 (0.0021)}} & \shortstack[r]{{0.3060 (0.0065)}\\{0.8893 (0.0068)}} & \shortstack[r]{{0.3048 (0.0084)}\\{0.8900 (0.0099)}} & \shortstack[r]{{0.1602 (0.0080)}\\{0.8834 (0.0149)}} \\
\addlinespace[2pt]
Gao-style, $k=2$ & \shortstack[r]{0.0902 (0.0006)\\0.9015 (0.0023)} & \shortstack[r]{0.2241 (0.0037)\\0.8969 (0.0051)} & \shortstack[r]{0.2994 (0.0141)\\0.8963 (0.0085)} & \shortstack[r]{0.0601 (0.0032)\\0.8942 (0.0104)} \\
\addlinespace[2pt]
Ad-EffOrt: fixed & \shortstack[r]{0.1011 (0.0004)\\0.8998 (0.0019)} & \shortstack[r]{0.2214 (0.0035)\\0.8990 (0.0047)} & \shortstack[r]{0.2898 (0.0058)\\0.8985 (0.0082)} & \shortstack[r]{0.1033 (0.0039)\\0.8945 (0.0106)} \\
\addlinespace[2pt]
Ad-EffOrt: tuned & \shortstack[r]{0.1011 (0.0004)\\0.8998 (0.0019)} & \shortstack[r]{0.2127 (0.0035)\\0.8964 (0.0056)} & \shortstack[r]{0.2836 (0.0061)\\0.9000 (0.0088)} & \shortstack[r]{0.0981 (0.0033)\\0.8971 (0.0109)} \\
\bottomrule
\end{tabular}
\caption{No-shift control at nominal coverage $0.9$. Each cell gives expected length on the first line and coverage on the second line, both as mean (split standard error) over 30 shared splits. Coverage is a probability; length uses the training-transformed $[0,1]$ scale. The splits and fitted estimators are shared with the shifted evaluation.}
\label{tab:no-shift-length}
\label{tab:no-shift-coverage}
\end{table}

\begin{table}[!htbp]
\centering\small
\setlength{\tabcolsep}{4pt}
\begin{tabular}{@{}lrrrr@{}}
\toprule
Method & Power Plant & Concrete & Real Estate & Yacht \\
\midrule
Density: length selection & \shortstack[r]{0.9053 (0.0024)\\0.8980 (0.0024)} & \shortstack[r]{0.8988 (0.0100)\\0.9006 (0.0056)} & \shortstack[r]{0.8403 (0.0104)\\0.9238 (0.0086)} & \shortstack[r]{1.0000 (0.0000)\\0.8708 (0.0187)} \\
\addlinespace[2pt]
HPD score & \shortstack[r]{0.9063 (0.0021)\\0.8945 (0.0024)} & \shortstack[r]{0.9163 (0.0090)\\0.8989 (0.0051)} & \shortstack[r]{0.8499 (0.0104)\\0.9163 (0.0097)} & \shortstack[r]{0.9918 (0.0036)\\0.8717 (0.0217)} \\
\addlinespace[2pt]
DCP-opt score & \shortstack[r]{0.9044 (0.0023)\\0.8958 (0.0024)} & \shortstack[r]{0.9152 (0.0087)\\0.8928 (0.0062)} & \shortstack[r]{0.8482 (0.0125)\\0.9128 (0.0093)} & \shortstack[r]{0.9942 (0.0024)\\0.8569 (0.0184)} \\
\addlinespace[2pt]
WQ: selected & \shortstack[r]{0.8919 (0.0029)\\0.9008 (0.0027)} & \shortstack[r]{0.8715 (0.0135)\\0.8936 (0.0067)} & \shortstack[r]{0.8481 (0.0158)\\0.9187 (0.0106)} & \shortstack[r]{0.9930 (0.0030)\\0.8420 (0.0254)} \\
\addlinespace[2pt]
Gao-style, $k=2$ & \shortstack[r]{0.9037 (0.0028)\\0.9005 (0.0026)} & \shortstack[r]{0.9052 (0.0115)\\0.8977 (0.0047)} & \shortstack[r]{0.8648 (0.0138)\\0.9093 (0.0100)} & \shortstack[r]{0.9844 (0.0047)\\0.8895 (0.0160)} \\
\addlinespace[2pt]
Ad-EffOrt: tuned & \shortstack[r]{0.9012 (0.0028)\\0.8979 (0.0021)} & \shortstack[r]{0.8872 (0.0092)\\0.9042 (0.0065)} & \shortstack[r]{0.8799 (0.0113)\\0.9135 (0.0100)} & \shortstack[r]{0.9715 (0.0077)\\0.8852 (0.0173)} \\
\bottomrule
\end{tabular}
\caption{Coverage in the two input groups defining the ratio-$3$ shift.
Each cell gives Group 0 on the first line and Group 1 on the second,
with split standard errors in parentheses. These are diagnostics,
not the coverage constraint optimized by the theorem.}
\label{tab:real-groups}
\end{table}

{
\subsection{Coverage levels and a joint score library}
\label{sec:coverage-joint}

We compare 12-score selectors at $p\in\{0.6,0.7,0.8,0.9\}$
and shift ratio 3 under
Appendix~\ref{sec:real-protocol}. Inverse density and HPD each use
12 densities; joint selection applies both scores to six, retaining
$B=20$ and $h\in\{0.02,0.1\}$ for each forest. This subset was fixed
before the follow-up, after examining the datasets and splits.
Selection and final calibration are repeated at every $p$.
Theorem~\ref{thm:pool-selection} applies under its assumptions.
Complete means, standard errors, and paired intervals are in
\mbox{\path{joint_score_comparison/Joint12_vs_HPD_summary.csv}}.
This separate serial, one-thread timing run follows
Section~\ref{sec:runtime}; compare times within this study.

Joint selection is faster in all 16 combinations: savings relative
to HPD are $35.1\%$--$35.3\%$ on Power Plant and $3.5\%$--$7.5\%$
elsewhere, from fewer histograms with both forests fully charged.
Its mean length is smaller than HPD in six combinations and inverse density in 13.
All six gains over HPD accompany lower coverage; on Real Estate at
$p=0.7$, coverages are $67.95\%$ and $70.06\%$, with paired $95\%$
difference interval $[-3.99,-0.21]$ percentage points.
Unadjusted $95\%$ paired length intervals favor joint selection only
on Power Plant at $p=0.7$ and HPD only on Concrete at $p=0.7$;
the other 14 contain zero. This does not establish consistent length
improvement or noninferiority to HPD.
}

\section{Implementation verification and reproducibility}
\label{app:verification}

{
Audits check candidate self-weight, weighted ties, common-$U$
randomization, histogram integration, CTI ordering, Gao DP, Ad-EffOrt
gradients, and agreement of direct and vectorized resample calculations.
Saved counts, tolerances, and independent summary/paired-difference checks
cover 3600 main real-data, 13200 original synthetic, 1800 vanishing-density,
and 1440 four-level records; pilots remain separate.

Main and WQ records are documented in \path{repro/README.txt} and
\path{wang_qiao_comparison/README.txt}. The vanishing-density
\path{experiment/README.txt} includes a separate 5000-calibration audit:
oracle coverages $0.90060$ (cluster SE $0.00052$) and $0.90011$ ($0.00054$),
beyond the 50 formal repetitions. Four-level code, results, and audits
are in \path{joint_score_comparison/}; its baselines exactly reproduce
preceding predictions.

{
For ratio-$10$, \path{dcp_shift10_comparison/README.txt} documents
960 records, candidate losses, stage timings, and independent checks
of weighted coverage, length integration, and paired summaries.
}

{
\path{Supplementary_results/} retains complete vanishing-density,
synthetic-configuration, and connected-component tables and archived
exploratory comparisons omitted here.
}

}
{
\section{Kernel histogram error bounds}\label{app:kde-proof}

\begin{proof}[Proof of Proposition~\ref{prop:kde-error}] 
All norms and integrals in this proof are on $[a,b]$ unless stated
otherwise. We first control density error within one group and
suppress its index. Write $\phi_h(v)=h^{-1}\phi(v/h)$ and
\[
 g_h(y)=\mathbb E_{\rm train}\widehat g_h(y)
       =\int_a^b\frac{\phi_h(y-t)}{Z_h(t)}f(t)\,dt,
 \qquad
 g_h^0(y)=\int_a^b\phi_h(y-t)f(t)\,dt.
\]
The Gaussian tail integral gives
\[
 \int_a^b(1-Z_h(t))\,dt
 \le 2h\int_0^\infty(1-\Phi(u))\,du
 =h\sqrt{2/\pi}.
\]
Using Lipschitz continuity and
$\int_{\mathbb R}|v|\phi_h(v)\,dv=h\sqrt{2/\pi}$,
\[
 \|g_h^0-f\|_1
 \le LKh\sqrt{2/\pi}
      +M_f\int_a^b(1-Z_h(y))\,dy
 \le(LK+M_f)h\sqrt{2/\pi}.
\]
Since $Z_h(t)\le1$, we have $g_h\ge g_h^0$ and
\[
 \|g_h-g_h^0\|_1
 =\int_a^b f(t)(1-Z_h(t))\,dt
 \le M_fh\sqrt{2/\pi}.
\]
Consequently,
$\|g_h-f\|_1\le(LK+2M_f)h\sqrt{2/\pi}$.
This bound includes the boundary normalization; it does not assume
a uniform pointwise bias bound at the endpoints.

Let $H_B$ replace a function by its average on each bin,
using the factor $B/L$.
It is an $L^1$ contraction. The binning error satisfies
\[
 \|H_Bf-f\|_1
 \le\frac{KB}{L}\sum_{j=1}^B\int_{J_j}\int_{J_j}|u-y|\,du\,dy
 =\frac{KL^2}{3B}.
\]
For $A_j(t)=\int_{J_j}k_h(y,t)\,dy$ and
$p_j=\mathbb E A_j(Y)$, we have $0\le A_j\le1$ and $\sum_jp_j=1$.
Independence within the group and Cauchy--Schwarz give
\[
\begin{aligned}
 \mathbb E_{\rm train}\|H_B\widehat g_h-H_Bg_h\|_1
 &=\sum_{j=1}^B
   \mathbb E\left|\frac1n\sum_{i=1}^n A_j(Y_i)-p_j\right|\\
 &\le\sum_{j=1}^B\sqrt{\frac{\operatorname{Var}(A_j(Y))}{n}}
 \le\sum_{j=1}^B\sqrt{\frac{p_j}{n}}
 \le\sqrt{\frac Bn}.
\end{aligned}
\]
Combining these inequalities proves the first bound in
\eqref{eq:kde-density-error} for each group and hence for their
$Q_X$-weighted average. The second follows from
$\|\widehat f-f\|_1
\le(1-\lambda)\|\widetilde f-f\|_1+\lambda\|1/L-f\|_1$
and $\|1/L-f\|_1\le2$.

{
{
For the length bound, \eqref{eq:uniform-mixture-invariance} and
Corollary~\ref{wo:density-ordering}, with $f\ge c$, give
}
\begin{equation}\label{eq:kde-l1-transfer}
 R_m(C_{s_{h,B,\lambda}})-R_m(C^*)
 \le\frac1c\,\mathbb E_{Q_X}
      \|\widetilde f_{h,B}(\cdot\mid X)-f(\cdot\mid X)\|_1.
\end{equation}
Taking expectations over the training data and using
\eqref{eq:kde-density-error} proves \eqref{eq:kde-length-error}.
Finally, positivity, finiteness, and independence of the fitted score
allow Proposition~\ref{wo:exact} to give exact coverage.
}
\end{proof}
}
{
\section{Forest CDF error and density error}
\label{app:forest-density-proof}

\begin{proof}[Proof of Proposition~\ref{prop:forest-density}] 
{
For any fixed $x$ and $y,t\in[a,b]$,
the Lipschitz condition gives
}
\[
 f(y\mid x)\le f(t\mid x)+K|y-t|.
\]
Since $\int_a^b f(t\mid x)\,dt=1$ and $L=b-a$,
averaging this inequality over $t\in[a,b]$ yields
\[
\begin{aligned}
 f(y\mid x)
 &\le \frac1L+\frac KL\int_a^b|y-t|\,dt\\
 &=\frac1L+\frac{K}{2L}
       \bigl\{(y-a)^2+(b-y)^2\bigr\}\\
 &\le\frac1L+\frac{KL}{2},
\end{aligned}
\]
where the last step uses
$(y-a)^2+(b-y)^2\le(b-a)^2=L^2$.
Taking the supremum over $x$ and $y$ gives
$M_f\le L^{-1}+KL/2$.
Thus $M_f$ is finite even when $\mathcal X$ is infinite.
Fix the training data and $x$ throughout the next two steps.

\paragraph{Kernel histograms.}
Define the kernel distribution function
\[
 H_h(v,t)=\int_a^v k_h(y,t)\,dy,
 \qquad a\le v,t\le b.
\]
For fixed $v$, this function is nonincreasing in $t$. Indeed,
if $W$ has density $k_h(\cdot,t)$, differentiation gives
\[
 \frac{\partial H_h(v,t)}{\partial t}
 =\frac1{h^2}
   \operatorname{Cov}\bigl(\mathbf 1\{W\le v\},W\bigr)\le0.
\]
Its total variation on $[a,b]$ is at most one. Integration by parts
against the signed measure with distribution function
$\widehat F_n(\cdot\mid x)-F_x$ therefore gives
\[
 \sup_{v\in[a,b]}
 \left|\int_a^b H_h(v,t)
       \,d\{\widehat F_n(t\mid x)-F_x(t)\}\right|
 \le\Delta_n(x).
\]
The endpoint terms vanish because the signed cumulative mass is zero
below $a$ and at $b$.
Let $g_h(y\mid x)=\int_a^b k_h(y,t)f(t\mid x)\,dt$.
The last display bounds the difference between the distribution
functions of $\widehat g_h(\cdot\mid x)$ and $g_h(\cdot\mid x)$.
If $D_j$ is that difference at the $j$th bin boundary, then
$D_0=D_B=0$ and $|D_j|\le\Delta_n(x)$. Consequently,
\[
 \|H_B\widehat g_h-H_Bg_h\|_1
 =\sum_{j=1}^B|D_j-D_{j-1}|
 \le2(B-1)\Delta_n(x),
\]
where $H_B$ is the bin-averaging operator used in
Appendix~\ref{app:kde-proof}. Its contraction property, together
with the smoothing and binning bounds established there, yields
\[
 \|\widetilde f_{h,B}(\cdot\mid x)-f(\cdot\mid x)\|_1
 \le2(B-1)\Delta_n(x)
    +\sqrt{\frac2\pi}(LK+2M_f)h+\frac{KL^2}{3B}.
\]

{
\paragraph{Quantile histograms.}
For this part, assume also $f\ge c>0$.
Continuity of $F_x$ implies that every jump of
}
$\widehat F_n(\cdot\mid x)$ is at most $2\Delta_n(x)$.
For linear interpolation of weighted CDF knots, the discrepancy
between the empirical CDF at the interpolated quantile and its
probability level is at most the largest jump. This remains true
with repeated response values or zero weights: a nonconstant
interpolation segment spans at most one positive CDF increment,
and equal response values contribute at most their combined atom.
Thus, for $0<\tau<1$,
\[
 |F_x(\widehat q_n^{\rm lin}(\tau\mid x))-\tau|
 \le3\Delta_n(x).
\]
Moving this quantile by at most $L\eta$ changes $F_x$ by at most
$M_fL\eta$. Hence, for the endpoints in
\eqref{eq:forest-quantile-endpoints},
\[
 \epsilon(x):=\max_{0\le j\le B}|F_x(z_j(x))-j/B|
 \le3\Delta_n(x)+M_fL\eta.
\]
Write $\ell_j=z_j(x)-z_{j-1}(x)$ and
$p_j=F_x(z_j(x))-F_x(z_{j-1}(x))$.
Comparison with the true average density in each bin gives
\[
 \|\widehat f^{\rm q}_{n,B,\eta}(\cdot\mid x)
             -f(\cdot\mid x)\|_1
 \le\sum_{j=1}^B|1/B-p_j|+\frac K3\sum_{j=1}^B\ell_j^2.
\]
The endpoint errors at $j=0,B$ are zero, so
\[
 \sum_{j=1}^B|1/B-p_j|\le2(B-1)\epsilon(x),
 \qquad
 c\ell_j\le p_j\le1/B+2\epsilon(x).
\]
Since $\sum_j\ell_j=L$,
\[
 \|\widehat f^{\rm q}_{n,B,\eta}(\cdot\mid x)
             -f(\cdot\mid x)\|_1
 \le2(B-1)\epsilon(x)
     +\frac{KL}{3c}\left(\frac1B+2\epsilon(x)\right).
\]
{
Taking expectations over training data and $Q_X$ proves
\eqref{eq:forest-density-bounds} and
\eqref{eq:forest-quantile-density-bound}.
}

{
\paragraph{Prediction-set length.}
{
Use \eqref{eq:uniform-mixture-invariance} and apply
Proposition~\ref{prop:fixed-m-density}, with $V=L$, to the normalized
unmixed density, using \eqref{eq:forest-density-bounds}.
}
This gives \eqref{eq:forest-kernel-length-bound} without assuming
$f\ge c$. Under the additional lower bound $f\ge c>0$,
\eqref{wo:density-positive-lower} applies to both the kernel and
the quantile density estimates. Together with their density bounds,
it gives \eqref{eq:forest-length-bounds}, uniformly over $m$.
}
\end{proof}
}

\paragraph{Convergence of $d_n$.}
Fix $P,Q$ with $Q_X\ll P_X$.
For the original-sample forest weights in
\eqref{eq:forest-training-weights},
Theorem~4.3 of \citet{elieditcosaque2022forest} gives
$\Delta_n(x)\to0$ almost surely for each fixed $x$
under its stated sampling conditions and
Assumptions~4.2--4.4.

Since $0\le\Delta_n(x)\le1$, bounded convergence gives
\[
 \mathbb E_{\rm train}\Delta_n(x)\longrightarrow0
 \qquad\text{for }P_X\text{-almost every }x.
\]
Because $Q_X\ll P_X$, this convergence also holds
for $Q_X$-almost every $x$.
Furthermore,
$0\le\mathbb E_{\rm train}\Delta_n(x)\le1$.
Interchanging the two expectations and applying
dominated convergence therefore yields
\[
 d_n
 =\int_{\mathcal X}
   \mathbb E_{\rm train}\Delta_n(x)\,Q_X(dx)
 \longrightarrow0.
\]

{
\paragraph{Empirical CDF error within groups.}
In Section~\ref{sec:vanishing-density}, each group $g\in\{0,1\}$
contains $n$ iid training responses $Y_{g,1},\ldots,Y_{g,n}$
with conditional CDF $F_g$. Define
\[
\widehat F_n(y\mid g)=\frac1n\sum_{i=1}^n
  \mathbf1\{Y_{g,i}\le y\},
\qquad
\Delta_n(g)=\sup_y|\widehat F_n(y\mid g)-F_g(y)|.
\]
The Dvoretzky--Kiefer--Wolfowitz--Massart inequality
\citep{massart1990dkw} gives, for every $t>0$,
\[
\mathbb P_{\rm train}\{\Delta_n(g)>t\}\le 2e^{-2nt^2}.
\]
Integrating this tail bound yields
\[
\mathbb E_{\rm train}\Delta_n(g)
=\int_0^\infty
  \mathbb P_{\rm train}\{\Delta_n(g)>t\}\,dt
\le\int_0^\infty 2e^{-2nt^2}\,dt
=\sqrt{\frac{\pi}{2n}}.
\]
Since $Q(X=0)=1/4$ and $Q(X=1)=3/4$, the error in
\eqref{eq:forest-cdf-error} satisfies
\begin{equation}\label{eq:group-cdf-rate}
d_n=\frac14\mathbb E_{\rm train}\Delta_n(0)
    +\frac34\mathbb E_{\rm train}\Delta_n(1)
\le\sqrt{\frac{\pi}{2n}}
=O(n^{-1/2}).
\end{equation}
No positive lower bound on the true densities is required.
This rate concerns the within-group empirical CDFs.
The kernel-histogram proof above also applies to these weights:
it uses only nonnegative, normalized training weights through
$\Delta_n(g)$.
}

{
\section{Selection from independent calibration samples}
\label{sec:independent-selection}
}

{
Use the scores and response domain from Section~\ref{sec:selection}.
Draw independent pairs $(T_\ell,X_\ell)\sim P^m\times Q_X$,
$\ell=1,\ldots,r$, independently of training; each $T_\ell$ contains
$m$ source observations. With $r_s$ from \eqref{eq:inclusion}, select
\begin{equation}\label{eq:selection}
\widehat R_k
=
\frac1r\sum_{\ell=1}^r
\int_a^b r_{s_k}(T_\ell,X_\ell;y)\,dy,
\qquad
\widehat k\in\operatorname*{arg\,min}_{1\le k\le M}
\widehat R_k,
\end{equation}
using a fixed tie rule. Return $C_{s_{\widehat k}}(T,X,U)$ with
fresh $T\sim P^m$ and $(X,Y)\sim Q$, independent of one another
and of training and selection.
}
\begin{theorem}[Selection and an oracle bound]
\label{thm:selection}
Conditional on the training data, for any $\delta\in(0,1)$,
with probability at least $1-\delta$ over the selection data,
\begin{equation}\label{eq:selection-bound}
R_m(C_{s_{\widehat k}})
\le
\min_{1\le k\le M}R_m(C_{s_k})
+
2L\sqrt{\frac{\log(2M/\delta)}{2r}}.
\end{equation}
Conditional on the training and selection data, the final
rule has coverage exactly $p$ under $P^m\times Q$.
\end{theorem}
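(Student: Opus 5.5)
The plan is to show that each $\widehat R_k$ is an unbiased estimate of $R_m(C_{s_k})$, bounded in $[0,L]$, and then combine Hoeffding's inequality with a union bound over the $M$ candidates. Condition throughout on the training data, so that $s_1,\ldots,s_M$ are fixed measurable scores.

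For each $k$, define the per-draw summand
\[
\xi_{k\ell}=\int_a^b r_{s_k}(T_\ell,X_\ell;y)\,dy .
\]
By \eqref{eq:integrated-length}, $\xi_{k\ell}=\mathbb E_U\,\mu(C_{s_k}(T_\ell,X_\ell,U))$. Because $0\le r_{s_k}\le1$ and $\mathcal Y=[a,b]$, each $\xi_{k\ell}$ lies in $[0,L]$. The pairs $(T_\ell,X_\ell)$ are iid from $P^m\times Q_X$ and independent of training. Tonelli's theorem, together with the definition \eqref{eq:risk}, then gives $\mathbb E\,\xi_{k\ell}=R_m(C_{s_k})$. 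Zero-weight inputs occur with $Q_X$-probability zero, so the convention $C=\mathcal Y$ at such inputs does not affect this identity. Thus, for each fixed $k$, the variables $\xi_{k1},\ldots,\xi_{kr}$ are iid with mean $R_m(C_{s_k})$, and $\widehat R_k$ is their average.

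Set $t=L\sqrt{\log(2M/\delta)/(2r)}$. Hoeffding's inequality for variables in $[0,L]$ gives
\[
\Prob\{|\widehat R_k-R_m(C_{s_k})|>t\}\le2e^{-2rt^2/L^2}=\delta/M .
\]
A union bound over the $M$ candidates shows that, with probability at least $1-\delta$, every $\widehat R_k$ is within $t$ of $R_m(C_{s_k})$. On that event, let $k^\dagger$ be a minimizer of $R_m(C_{s_k})$. Then
\[
R_m(C_{s_{\widehat k}})\le\widehat R_{\widehat k}+t\le\widehat R_{k^\dagger}+t\le R_m(C_{s_{k^\dagger}})+2t,
\]
which is exactly \eqref{eq:selection-bound}. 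The argument does not depend on how ties are broken, because any fixed tie rule still selects a minimizer of $\widehat R_k$.

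For coverage, condition on training and selection. The selected score $s_{\widehat k}$ is then a fixed measurable function, independent of the fresh $(T,X,Y)\sim P^m\times Q$. Proposition~\ref{wo:exact} gives coverage exactly $p$ under every $H\in\mathcal H_w$. By Lemma~\ref{lem:covariate-shift-inclusion}, this includes $P^m\times Q$.

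The main point needing care is the measurability and the unbiasedness identity. This requires that $(t,x)\mapsto\int r_{s_k}(t,x;y)\,dy$ be jointly measurable, which follows from the joint measurability of \eqref{eq:inclusion} and Tonelli's theorem. It also requires that integrating over $U$ first and then over $(T,X)$ recovers $R_m$. The concentration step itself is routine.
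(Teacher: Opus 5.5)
Your proposal is correct and follows essentially the same route as the paper: bounded summands in $[0,L]$ with mean $R_m(C_{s_k})$, Hoeffding's inequality plus a union bound giving $\max_k|\widehat R_k-R_m(C_{s_k})|\le L\sqrt{\log(2M/\delta)/(2r)}$ with probability at least $1-\delta$, the minimizing property of $\widehat k$ to double this, and Proposition~\ref{wo:exact} for exact coverage. Your added checks (unbiasedness via Tonelli, measurability, and the $Q_X$-null set of zero-weight inputs) supply details the paper asserts without comment.
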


{
\begin{proof}
Conditional on training, the summands in \eqref{eq:selection} are
independent, lie in $[0,L]$, and have mean $R_m(C_{s_k})$.
Hoeffding's inequality and a union bound give, with probability
at least $1-\delta$,
\[
 \max_k|\widehat R_k-R_m(C_{s_k})|
 \le\gamma,
 \qquad
 \gamma=L\sqrt{\frac{\log(2M/\delta)}{2r}}.
\]
The minimizing property of $\widehat k$ then gives
$R_m(C_{s_{\widehat k}})-\min_kR_m(C_{s_k})\le2\gamma$,
proving \eqref{eq:selection-bound}. Independent final calibration
gives exact coverage by Proposition~\ref{wo:exact}.
\end{proof}
}

{
\section{Proof of the pooled-selection bound}\label{app:pool-proof}

\begin{proof}[Proof of Theorem~\ref{thm:pool-selection}] 
{
Condition on the training data throughout. All scores and weights
are then fixed. For candidate $k$, write
\[
 \ell_k(T,x)=\int_a^b r_{s_k}(T,x;y)\,dy,
 \qquad h_k(T)=\mathbb E_{Q_X}\ell_k(T,X),
 \qquad R_k=R_m(C_{s_k}).
\]
Both lengths lie in $[0,L]$. The function $h_k$ is symmetric in the
$m$ calibration observations, and $\mathbb E_{P^m}h_k(T)=R_k$.
The minimizing property of $\widehat k_{\rm pool}$ implies
\begin{equation}\label{eq:pool-selection-reduction}
 R_m(C_{s_{\widehat k_{\rm pool}}})-\min_kR_k
 \le2\max_k|\widehat R_k^{\rm pool}-R_k|.
\end{equation}

\paragraph{A direct finite-$m$ bound.}
Assume first that $n_s\ge m$. We use the following elementary
consequence of an exponential-moment bound: if centered random
variables $\xi_1,\ldots,\xi_M$ satisfy
$\mathbb E\exp(t\xi_k)\le\exp(t^2\sigma^2/2)$ for every real $t$,
then
\[
 \mathbb E\max_{k\le M}|\xi_k|
 \le\sigma\sqrt{2\log(2M)}.
\]
Indeed, for $t>0$, the exponential of the maximum is bounded by
$\sum_k\{\exp(t\xi_k)+\exp(-t\xi_k)\}$; Jensen's inequality and
minimization over $t$ give the stated bound. Independence between
candidates is not needed.

Let $\widehat P_{n_s}=n_s^{-1}\sum_{i=1}^{n_s}\delta_{Z_i^{\rm sel}}$
be the empirical law of the labeled pool, and define
\[
 V_k=\mathbb E_{T\sim\widehat P_{n_s}^{\,m}}h_k(T),
 \qquad
 A_k=\frac1r\sum_{\ell=1}^rh_k(T_\ell^*).
\]
Conditional on the labeled pool, the summands defining $A_k$ are
independent, bounded by $L$, and have mean $V_k$. Hoeffding's
exponential-moment bound and the preceding maximum inequality give
\begin{equation}\label{eq:pool-bag-average-bound}
 \mathbb E\max_k|A_k-V_k|
 \le L\sqrt{\frac{\log(2M)}{2r}}.
\end{equation}

Average $h_k$ over all subsets of $m$ distinct pool indices:
\[
 U_k=\binom{n_s}{m}^{-1}
 \sum_{1\le i_1<\cdots<i_m\le n_s}
 h_k(Z_{i_1}^{\rm sel},\ldots,Z_{i_m}^{\rm sel}).
\]
The expectation of $U_k$ is $R_k$. Put $b_s=\lfloor n_s/m\rfloor$.
For each permutation $\pi$ of the $n_s$ indices, average $h_k$ over
the first $b_s$ disjoint blocks of $m$ observations in that order.
Each such block average has exponential moment, after centering,
at most $\exp\{t^2L^2/(8b_s)\}$. Its average over all permutations
is $U_k$, so convexity of the exponential gives the same bound
for $U_k-R_k$. Consequently,
\begin{equation}\label{eq:pool-u-statistic-bound}
 \mathbb E\max_k|U_k-R_k|
 \le L\sqrt{\frac{\log(2M)}{2\lfloor n_s/m\rfloor}}.
\end{equation}
In the product-law average defining $V_k$, all indices are distinct
with probability $1-\varepsilon_{\rm rep}(n_s,m)$. Conditional on
this event their average is $U_k$, by symmetry of $h_k$.
All other terms remain in $[0,L]$. Thus, for every realized pool,
\begin{equation}\label{eq:pool-collision-bound}
 \max_k|V_k-U_k|\le L\varepsilon_{\rm rep}(n_s,m).
\end{equation}
The upper bound in \eqref{eq:pool-repeat-probability} follows by a
union bound over the $\binom m2$ pairs of sampled indices.

For the input pool, set
\[
 \overline\ell_k(x)=\frac1r\sum_{\ell=1}^r\ell_k(T_\ell^*,x),
 \qquad A_k=\mathbb E_{Q_X}\overline\ell_k(X).
\]
This expression for $A_k$ agrees with its previous definition.
Conditional on the labeled pool and all bootstrap samples,
$\overline\ell_k$ is fixed and the input pool remains iid from $P_X$.
Since $dQ_X=w\,dP_X$ and $\mathbb E_Pw=1$,
\[
 \widehat R_k^{\rm pool}-A_k
 =\frac{q^{-1}\sum_{j=1}^q
       w(X'_j)\{\overline\ell_k(X'_j)-A_k\}}
       {q^{-1}\sum_{j=1}^qw(X'_j)}.
\]
Each numerator summand has mean zero and belongs to the interval
$[-w_+A_k,\,w_+(L-A_k)]$, of length $w_+L$.
The denominator is at least $w_-$. The same exponential-moment
argument therefore yields
\begin{equation}\label{eq:pool-input-max-bound}
 \mathbb E\max_k|\widehat R_k^{\rm pool}-A_k|
 \le\kappa L\sqrt{\frac{\log(2M)}{2q}}.
\end{equation}
The triangle inequality, \eqref{eq:pool-bag-average-bound}--
\eqref{eq:pool-input-max-bound}, and
\eqref{eq:pool-selection-reduction} prove
\eqref{eq:pool-selection-bound}.

\paragraph{A complementary bound when $f\ge c>0$.}
The following argument allows every $n_s,m,q,r\ge1$.

}\paragraph{Weighted empirical distribution functions.}
First consider $n$ iid observations from any distribution $R$ on
$\mathcal X\times\mathcal Y$, with the weights in
\eqref{eq:pool-weight-bound}. For a fixed score $s$, put
\[
H_R(t)=\frac{\mathbb E_R[w(X)\mathbf1\{s(X,Y)\le t\}]}
                 {\mathbb E_Rw(X)},
\qquad
\widehat H_n(t)=
\frac{\sum_{i=1}^n w(X_i)\mathbf1\{s(X_i,Y_i)\le t\}}
     {\sum_{i=1}^n w(X_i)}.
\]
Let $D_n$ be the supremum error of the unnormalized numerator,
including its limit at $t=+\infty$. Symmetrization gives
\[
\mathbb E D_n
\le\frac2n\mathbb E\sup_t
 \left|\sum_{i=1}^n\epsilon_i w(X_i)
                    \mathbf1\{s(X_i,Y_i)\le t\}\right|,
\]
where the $\epsilon_i$ are independent random signs. Conditional on
the observations, sort them by score. The displayed sums form a subset
of the partial sums of independent, mean-zero random variables.
The $L^2$ maximal inequality for these partial sums bounds the
conditional expected maximum by
$2\{\sum_i w(X_i)^2\}^{1/2}\le2w_+\sqrt n$.
Thus $\mathbb E D_n\le4w_+/\sqrt n$.
Since the empirical denominator is at least $w_-$ after division by $n$,
and the error in the mean weight is at most $D_n$,
\begin{equation}\label{eq:pool-cdf-error}
\mathbb E\|\widehat H_n-H_R\|_\infty
\le\frac{2\mathbb E D_n}{w_-}
\le\frac{8\kappa}{\sqrt n}.
\end{equation}
This argument permits arbitrary ties and also applies conditionally
when $R$ is the empirical distribution of the labeled pool.

\paragraph{Length stability under calibration.}
Fix a score $s$ and write
$H(t)=Q\{s(X,Y)\le t\}=H_P(t)$.
For any calibration bag $T$ of size $m$, let $H_T$ be its normalized
weighted score distribution function and set
$\delta_T=\|H_T-H\|_\infty$.
Define
\[
\ell_s(T)=\mathbb E_{Q_X,U}|C_s(T,X,U)|.
\]
For comparison, use the population rule with inclusion event
\[
H(s(x,y)-)+V\{H(s(x,y))-H(s(x,y)-)\}<p,
\qquad V\sim\operatorname{Unif}[0,1],
\]
and denote its target expected length by $\ell_s^\circ$.
Here $H(t-)$ denotes the left limit at $t$.

Use $1-V$ as the auxiliary randomization in
\eqref{eq:calibrated}. Its inclusion event can then be written as
\[
H_T(S-)+V\{H_T(S)-H_T(S-)\}
 +(V-p)\frac{w(X)}{\sum_{i=1}^m w(X_i)}<p,
\qquad S=s(X,Y).
\]
Its left side differs from
$H(S-)+V\{H(S)-H(S-)\}$ by at most
$\delta_T+\kappa/m$. Under $Q$ and independent $V$, the latter
random variable is uniform on $[0,1]$, including when $S$ has atoms.
The two inclusion events therefore disagree with probability at most
$2(\delta_T+\kappa/m)$. Since $f\ge c$, integration with respect to
$Q_X(dx)\,dy\,dV$ is at most $1/c$ times integration with respect to
$Q(dx,dy)\,dV$. Consequently,
\begin{equation}\label{eq:pool-length-stability}
|\ell_s(T)-\ell_s^\circ|
\le\frac2c\left(\delta_T+\frac\kappa m\right).
\end{equation}
For a fresh iid calibration sample, \eqref{eq:pool-cdf-error} implies
\begin{equation}\label{eq:pool-fresh-risk}
|R_m(C_s)-\ell_s^\circ|
\le\frac2c\left(\frac{8\kappa}{\sqrt m}
                         +\frac\kappa m\right).
\end{equation}

\paragraph{The resampled calibration bags.}
Let $\widehat H_{n_s}$ be the weighted score distribution function of
the source labeled pool. By \eqref{eq:pool-cdf-error}, its expected
distance from $H$ is at most $8\kappa/\sqrt{n_s}$.
Conditional on this pool, each bootstrap bag is iid from its empirical
distribution, so
\[
\mathbb E\|H_{T_\ell^*}-H\|_\infty
\le\frac{8\kappa}{\sqrt{n_s}}+\frac{8\kappa}{\sqrt m}.
\]
{
Use $\overline\ell_k$ and $A_k$ as defined above.
Jensen's inequality and \eqref{eq:pool-length-stability} give
}
\begin{equation}\label{eq:pool-bootstrap-risk}
\mathbb E|A_k-\ell_{s_k}^\circ|
\le\frac2c\left(
 \frac{8\kappa}{\sqrt{n_s}}+\frac{8\kappa}{\sqrt m}
 +\frac\kappa m\right).
\end{equation}
This bound holds for every $r\ge1$; increasing $r$ is not required.

\paragraph{The input pool and selection.}
Conditional on the labeled pool and bootstrap bags, the centered-ratio
identity used for \eqref{eq:pool-input-max-bound} has denominator at
least $w_-$. Its numerator is the average of $q$ independent,
mean-zero variables in an interval of length $w_+L$. Cauchy--Schwarz
and the variance bound for a bounded variable therefore give
\[
 \mathbb E|\widehat R_k^{\rm pool}-A_k|
 \le\frac{L\kappa}{2\sqrt q}
 \le\frac{L\kappa}{\sqrt q}.
\]
Together with \eqref{eq:pool-fresh-risk} and
\eqref{eq:pool-bootstrap-risk}, this proves
\[
\mathbb E_{\rm sel}
\left|\widehat R_k^{\rm pool}-R_m(C_{s_k})\right|
\le\beta_{n_s,q,m}.
\]
{
Bound the maximum in \eqref{eq:pool-selection-reduction} by the
sum and take expectations to obtain \eqref{eq:pool-growing-m-bound}.
For either selection bound, add
$\min_kR_m(C_{s_k})-R_m(C^*)$ and average over training data.
This gives \eqref{eq:pool-end-to-end} and its stated alternative.
Finally, conditional on training and selection, the chosen score is
fixed and the final calibration and test observations are independent
of those data. Proposition~\ref{wo:exact} gives coverage exactly $p$.
}
\end{proof}
}

{
{
\section{Proofs of oracle-length consistency at fixed calibration size}
\label{app:forest-fixed-m-proof}
}

{
\begin{proof}[Proof of Corollary~\ref{cor:forest-fixed-m-oracle}] 
{
For fixed $m$, Proposition~\ref{prop:forest-density} gives the bound
in \eqref{eq:forest-fixed-m-oracle}. All three terms of
$\mathcal A^{\rm ker}_{n,h_n,B_n}$ vanish by
\eqref{eq:forest-fixed-m-growth}, since
$M_f\le L^{-1}+KL/2$ is independent of $n$.
The score is finite and measurable conditional on training, so
independent final calibration and test data give exact coverage
by Proposition~\ref{wo:exact}.
}
\end{proof}
}

{
\subsection{Consistency after selection at fixed calibration size}
\label{app:forest-selected-fixed-m-proof}

\begin{proof}[Proof of Corollary~\ref{cor:forest-selected-fixed-m}] 
For each included kernel candidate $j\le J_n$,
Proposition~\ref{prop:forest-density} bounds its expected density
error by
\[
 2(2^j-1)d_n+D2^{-j},
 \qquad
 D=\sqrt{\frac2\pi}(LK+2M_f)L+\frac{KL^2}{3},
\]
where $M_f\le L^{-1}+KL/2$. Proposition~\ref{prop:fixed-m-density}
and \eqref{eq:pool-end-to-end} therefore give, whenever $n_s\ge m$,
\begin{equation}\label{eq:forest-selected-fixed-m-bound}
\begin{aligned}
 &\mathbb E_{\rm train,sel}
 [R_m(C_{s_{\widehat k_{\rm pool}}})-R_m(C^*)]\\
 &\qquad\le mL\min_{1\le j\le J_n}
       \{2(2^j-1)d_n+D2^{-j}\}
       +\gamma_{n_s,q,m,r_n,M_n}.
\end{aligned}
\end{equation}
{
By \eqref{eq:uniform-mixture-invariance}, density error is evaluated
before mixing. Additional scores can only improve the best-candidate
comparator.
}

Fix $j$. Since $J_n\to\infty$, this candidate is eventually
included, and $d_n\to0$ makes its limiting approximation term
at most $mLD2^{-j}$. Let $j\to\infty$.
Condition \eqref{eq:forest-selected-fixed-m-growth} also implies
$n_s,q,r_n\to\infty$. Since $m$ is fixed, eventually $n_s\ge m$,
and every term of \eqref{eq:pool-finite-m-budget} tends to zero.
Equation~\eqref{eq:forest-selected-fixed-m-bound} proves
\eqref{eq:forest-selected-fixed-m}.
Exact coverage follows from Theorem~\ref{thm:pool-selection} and
independence of the final calibration sample.
\end{proof}
}
}

{
\section{Oracle length after selection with growing calibration size}
\label{sec:forest-selection-consistency}
\label{app:forest-selection-proof}

{
\begin{remark}[Selection with growing calibration size]
\label{cor:forest-selected-oracle}
Assume \eqref{eq:regularity} and \eqref{eq:pool-weight-bound} with
constants independent of sample size, and suppose the forest weights
satisfy $d_n\to0$. Include at least one of the following families
constructed from these weights, for $j=1,\ldots,J_n$:
\begin{equation}\label{eq:forest-expanding-library}
\begin{aligned}
 \text{kernel histograms:}\quad
 &B_j=2^j,\quad h_j=L2^{-j},\quad \lambda\in[0,1),\\
 \text{quantile histograms:}\quad
 &B_j=2^j,\quad \eta_j=2^{-2j}.
\end{aligned}
\end{equation}
Within each kernel candidate, $\lambda$ is common to all $(x,y)$.
Additional training-fitted scores are allowed; let $M_n$ be the total
library size. Fit all scores independently of selection, final
calibration, and test data. Use \eqref{eq:pool-selection} with
sizes $n_s,q,m_n$.
If
\begin{equation}\label{eq:forest-selection-growth}
 J_n\to\infty,
 \qquad
 M_n(n_s^{-1/2}+q^{-1/2}+m_n^{-1/2})\to0,
\end{equation}
then, for any fixed $r\ge1$, including $r=30$,
\begin{equation}\label{eq:forest-selected-oracle}
 \mathbb E_{\rm train,sel}
 [R_{m_n}(C_{s_{\widehat k_{\rm pool}}})-R_{m_n}(C^*)]\to0.
\end{equation}
The oracle uses the current size $m_n$. Independent final calibration
gives exact target marginal coverage $p$ at each sample size.
\end{remark}
\begin{proof}
For a level-$j$ candidate, Proposition~\ref{prop:forest-density} gives
$\mathbb E_{\rm train}[R_{m_n}(C_{s_{n,j}})-R_{m_n}(C^*)]
\le C(2^jd_n+2^{-j})$, with $C$ independent of $n,j,m_n$.
For quantile histograms this uses $2^j\eta_j=2^{-j}$.
Fix $j$, let $n\to\infty$, and then let $j\to\infty$; the
best-candidate gap tends to zero. Condition
\eqref{eq:forest-selection-growth} also gives
$M_n\beta_{n_s,q,m_n}\to0$.
The complementary bound in Theorem~\ref{thm:pool-selection} proves
\eqref{eq:forest-selected-oracle} and the coverage statement.
\end{proof}
}

{
\section{A linear density-error bound at fixed calibration size}
\label{app:fixed-m-density-proof}

\subsection{Proof of Proposition~\ref{prop:fixed-m-density}}

Write $V=\mu(\mathcal Y)$ and assume $0<V<\infty$.
All densities below are jointly measurable, finite, nonnegative, and
integrate to one for $P_X$-almost every input. On exceptional input
fibers, choose the uniform density $1/V$.
For a conditional density $h$, let
\[
 P_h(dx,dy)=P_X(dx)h(y\mid x)\,\mu(dy),
 \qquad
 R_{m,h}(C_s)
 =\mathbb E_{T\sim P_h^m,\,X\sim Q_X}
       \int_{\mathcal Y}r_s(T,X;y)\,\mu(dy).
\]
The input laws and weight function remain fixed as $h$ varies, and
$R_{m,f}=R_m$. The loss in this display integrates out the conformal
randomization and lies in $[0,V]$. It does not involve a test response.

We first prove a sharper weighted bound. Define
\begin{equation}\label{eq:fixed-m-weighted-error}
\begin{aligned}
 g(x)&=\|\widetilde f(\cdot\mid x)-f(\cdot\mid x)\|_{L^1(\mu)},\\
 e_P&=\mathbb E_{P_X}g(X),\qquad e_Q=\mathbb E_{Q_X}g(X)=e,\\
 \phi_w(u)&=\mathbb E_{X'\sim P_X}\min\{w(X'),u\},\qquad u\ge0,\\
 J_w(\widetilde f,f)
 &=\mathbb E_{P_X}[\phi_w(w(X))g(X)].
\end{aligned}
\end{equation}
Since $\mathbb E_{P_X}w(X)=1$,
\begin{equation}\label{eq:fixed-m-weighted-comparison}
 0\le\phi_w(u)\le\min\{1,u\},
 \qquad J_w(\widetilde f,f)\le\min\{e_P,e_Q\}.
\end{equation}
We will show
\begin{equation}\label{eq:fixed-m-weighted-bound}
 0\le R_m(C_{-\widetilde f})-R_m(C^*)
 \le\min\{V,mVJ_w(\widetilde f,f)\}.
\end{equation}
No positive lower or finite uniform upper bound on $w$ is needed.

\paragraph{Changing one calibration response.}
Fix a score $s$, all calibration inputs, and a query $(x,y)$ with
$v=w(x)>0$. Replace only calibration response $Y_i$, whose input has
weight $u=w(X_i)$. Put
\[
\begin{aligned}
 D&=\sum_{j\ne i}w(X_j)\mathbf1\{s(X_j,Y_j)<s(x,y)\},\\
 B&=v+\sum_{j\ne i}w(X_j)\mathbf1\{s(X_j,Y_j)=s(x,y)\},
 \qquad z=pW_x-D.
\end{aligned}
\]
Here $B\ge v$ and $z$ is unchanged by the replacement. According as
the $i$th score is below, equal to, or above the query score,
\eqref{eq:inclusion} equals
\[
 r_< =\clip{\frac{z-u}{B}},\qquad
 r_= =\clip{\frac{z}{B+u}},\qquad
 r_> =\clip{\frac{z}{B}}.
\]
These values satisfy $r_<\le r_=\le r_>$.
For $z\le0$ they are all zero, and for $z\ge B+u$ they are all one.
In the remaining range, the left inequality is immediate if $z\le u$;
otherwise it follows from $(z-u)/B\le z/(B+u)$, which is equivalent
to $z\le B+u$. The right inequality follows by comparing denominators
and clipping. As clipping is $1$-Lipschitz,
$r_>-r_<\le\min\{1,u/B\}\le\min\{1,u/v\}$.
Writing $\ell_s(T,x)=\int r_s(T,x;y)\,\mu(dy)$, we obtain
\begin{equation}\label{eq:fixed-m-response-sensitivity}
 |\ell_s(T,x)-\ell_s(T',x)|
 \le V\min\{1,w(X_i)/w(x)\}\mathbf1\{Y_i\ne Y_i'\}.
\end{equation}
This calculation includes score ties and zero-weight calibration points.

\paragraph{Changing the calibration law.}
Let $h$ be another normalized conditional density. For each input $x$,
put $a_x(y)=\min\{f(y\mid x),h(y\mid x)\}$ and
$d(x)=\tfrac12\|h(\cdot\mid x)-f(\cdot\mid x)\|_{L^1(\mu)}$.
A conditional coupling of the two response laws is
\[
 K_x(dy,dy')=a_x(y)\mu(dy)\delta_y(dy')
 +\frac{[f(y\mid x)-a_x(y)][h(y'\mid x)-a_x(y')]}{d(x)}
       \mu(dy)\mu(dy'),
\]
where the second term is zero when $d(x)=0$.
The first term has mass $1-d(x)$ and the second has mass $d(x)$;
their marginals are the required laws. The residual densities have
disjoint supports, so
$\mathbb P(Y\ne Y'\mid X=x)=d(x)$.
Joint measurability and the standard Borel assumption make $K_x$ a
measurable probability kernel.

Draw shared inputs $X_1,\ldots,X_m$ iid from $P_X$ and, conditionally
independently, draw the response pairs from $K_{X_i}$.
Replace their responses one at a time and apply
\eqref{eq:fixed-m-response-sensitivity}. For $u\ge0$,
\[
 \mathbb E_{X\sim Q_X}\min\{1,u/w(X)\}
 =\mathbb E_{X\sim P_X}\min\{w(X),u\}=\phi_w(u);
\]
the set $w(X)=0$ contributes zero. The triangle inequality and Tonelli's
theorem therefore give, for every fixed score $s$,
\begin{equation}\label{eq:fixed-m-law-perturbation}
 |R_{m,f}(C_s)-R_{m,h}(C_s)|
 \le\frac{mV}{2}\,
 \mathbb E_{P_X}\!\left[
  \phi_w(w(X))\|h(\cdot\mid X)-f(\cdot\mid X)\|_{L^1(\mu)}
 \right].
\end{equation}
This bound is uniform over the fixed score. Intermediate samples in
the replacement argument need not be iid, since
\eqref{eq:fixed-m-response-sensitivity} is deterministic.

\paragraph{Comparing the two oracle rules.}
Suppose first that $\widetilde f>0$ almost everywhere.
Under $P_{\widetilde f}$ and its shifted target law, the validity class
is still $\mathcal H_w$: its definition depends on $w$ and $m$, not
on the conditional density. Theorem~\ref{wo:oracle} thus gives
\[
 R_{m,\widetilde f}(C_{-\widetilde f})
 \le R_{m,\widetilde f}(C^*).
\]
Adding and subtracting these two risks and using
\eqref{eq:fixed-m-law-perturbation} for each score yields
\[
\begin{aligned}
 R_m(C_{-\widetilde f})-R_m(C^*)
 &\le |R_{m,f}(C_{-\widetilde f})
           -R_{m,\widetilde f}(C_{-\widetilde f})|\\
 &\quad+|R_{m,\widetilde f}(C^*)-R_{m,f}(C^*)|\\
 &\le mVJ_w(\widetilde f,f).
\end{aligned}
\]
True-law oracle optimality gives nonnegativity, and the bounded loss
gives the upper bound $V$.

If $\widetilde f$ can vanish, replace it only for this argument by
$\widetilde f_\epsilon=(1-\epsilon)\widetilde f+\epsilon/V$,
where $0<\epsilon<1$. This common affine transformation preserves
every density comparison and tie, including comparisons across inputs.
Consequently $C_{-\widetilde f_\epsilon}=C_{-\widetilde f}$ for
every sample and every conformal randomization.
Moreover,
$\|\widetilde f_\epsilon(\cdot\mid x)
      -\widetilde f(\cdot\mid x)\|_{L^1(\mu)}\le2\epsilon$,
so $J_w(\widetilde f_\epsilon,f)\to J_w(\widetilde f,f)$.
Apply the positive-density result and let $\epsilon\downarrow0$.
This proves \eqref{eq:fixed-m-weighted-bound} without changing the
delivered prediction rule.

By \eqref{eq:fixed-m-weighted-comparison}, the new term is at most
$mVe$. Combining it with Corollary~\ref{wo:density-ordering} proves
\eqref{eq:fixed-m-density-bound}. Conditional on independent training,
the same argument applies to each fitted density. Averaging each
upper bound separately, and then taking the infimum over $\tau$,
proves \eqref{eq:fixed-m-density-mean}. In particular, fixed $m$ and
$\bar e_n=O(a_n)$ imply expected excess length $O(a_n)$.
Exact conditional coverage follows from Proposition~\ref{wo:exact}.

\subsection{Sharpness of the linear density-error order}
\label{app:density-linear-sharpness}

Take a single input value, $w=1$, $\mathcal Y=[0,1]$, Lebesgue measure,
$m=1$, and $p=1/2$. For $0<\delta\le1/2$, set
\begin{equation}\label{eq:fixed-m-linear-witness}
 f_\delta(y)=1+\delta(2y-1),\qquad
 \widetilde f_\delta(y)=1-\delta(2y-1),\qquad 0\le y\le1.
\end{equation}
Both densities are smooth, take values in $[1/2,3/2]$, and have
derivatives bounded in magnitude by one. Their error is
\[
 e=\int_0^1|\widetilde f_\delta(y)-f_\delta(y)|\,dy
   =2\delta\int_0^1|2y-1|\,dy=\delta.
\]
For $m=1$ and $p=1/2$, a query score below the calibration score is
included with probability one, a score above it with probability zero,
and a tie with probability $1/2$. The oracle density is increasing,
so its set has length $1-Y_1$; the fitted density is decreasing,
so its set has length $Y_1$, almost surely in $U$. Ties occur only at $y=Y_1$ and have
zero Lebesgue measure. Since
\[
 \mathbb E_{f_\delta}Y_1
 =\int_0^1y[1+\delta(2y-1)]\,dy=\frac12+\frac\delta6,
\]
we obtain
\begin{equation}\label{eq:fixed-m-linear-witness-gap}
 R_m(C^*)=\frac12-\frac\delta6,\qquad
 R_m(C_{-\widetilde f_\delta})=\frac12+\frac\delta6,
 \qquad
 R_m(C_{-\widetilde f_\delta})-R_m(C^*)=\frac e3.
\end{equation}
Thus no uniform $o(e)$ bound holds over these smooth density pairs,
even with a common positive density lower bound.
The true density varies with $\delta$; this is not a local lower bound
at one fixed density. The example does not establish optimal dependence
on $m$, $p$, or $w$, an optimal constant, or a statistical minimax rate.

\subsection{Why the previous bound remains useful}

The two bounds control different regimes. With $w=1$ and
$\mathcal Y=[0,1]$, if $f(y)=1$, then
\[
 \inf_{\tau>0}\{e/\tau+\mathcal T_f(\tau)\}=\min\{e,1\}.
\]
For $m>1$ and $0<e<1/m$, this is smaller than $me$.
In contrast, for $f(y)=2y$,
\[
 \mathcal T_f(\tau)=
 \begin{cases}
   \tau/4,&0<\tau\le2,\\
   1-1/\tau,&\tau\ge2,
 \end{cases}
 \qquad
 \inf_{\tau>0}\{e/\tau+\mathcal T_f(\tau)\}=\sqrt e
 \quad(0\le e\le1).
\]
Hence $me<\sqrt e$ when $0<e<1/m^2$.
These comparisons concern the upper bounds, not attained risks.
The older bound is uniform in $m$, whereas the new linear term is
particularly useful when $m$ is fixed and density error is small.
}

\FloatBarrier
\bibliography{references}

}
}
}
\end{document}